\newcolumntype{L}{>{$}l<{$}} 
\definecolor{bl}{RGB}{30,30,150}
\tikzset{every picture/.style={remember picture}}
         \def\L{\Lambda }       
       \newcommand{\sM}{{\cal M}}
    \newcommand{\R}{{\mathbb R}}
\newcommand{\bq}{\begin{equation}}
\newcommand{\eq}{\end{equation}}
\newcommand{\bpm}{\begin{pmatrix}}
	\newcommand{\epm}{\end{pmatrix}}
\newcommand{\RR}{\mathbb{R}}
\newcommand{\CC}{\mathbb{C}}
\newcommand{\eg}{{\em e.g.}}
\newcommand{\ie}{{\em i.e.}}
\newcommand{\x}{\mathsf{x}}
\newcommand{\Jac}{\operatorname{Jac}}
\newcommand{\y}{\mathsf{y}}
\renewcommand{\L}{\mathcal{L}}
\renewcommand{\Pi}{\pi}
\newcommand{\rrmp}{rrmp}
\newcommand{\Crit}{\mathrm{Crit}}
\newtheorem{theorem}{Theorem}[section]
\newtheorem{corollary}[theorem]{Corollary}
\newtheorem{lemma}[theorem]{Lemma}
\newtheorem{proposition}[theorem]{Proposition}
\theoremstyle{definition}
\newtheorem{definition}[theorem]{Definition}
\newtheorem{example}[theorem]{Example}
\newtheorem{remark}[theorem]{Remark}
\renewenvironment{abstract}%
{\begin{trivlist}\item[]{\emph{Abstract.}}\ }
	{\end{trivlist}}
\newenvironment{keywords}%
{\begin{trivlist}\item[]{\emph{Key words.}}\ }
	{\end{trivlist}}
\newenvironment{AMS}%
{\begin{trivlist}\item[]{\emph{AMS subject classifications.}}\ }
	{\end{trivlist}}
\newcommand{\email}[1]{#1}
\title{Geometry of Linear Convolutional Networks}
\author{Kathl\'en Kohn\thanks{Department of Mathematics, KTH, Stockholm, Sweden 
(\email{kathlen@kth.se}).}
\and Thomas Merkh\thanks{Department of Mathematics, UCLA, CA, USA 
(\email{tmerkh@g.ucla.edu}).}
\and Guido Mont\'ufar\thanks{Departments of Mathematics and Statistics, UCLA, CA, USA; Max Planck Institute for Mathematics in the Sciences, Leipzig, Germany 
(\email{montufar@math.ucla.edu}).}
\and Matthew Trager\thanks{Amazon, New York, NY, USA 
(\email{matthewtrager@gmail.com}). This work was done outside of Amazon.} 
  }
\date{\today}
\begin{document}

\maketitle

\begin{abstract}%
We study the family of functions that are represented by a linear convolutional neural network~(LCN). These functions form a semi-algebraic subset of the set of linear maps from input space to output space. In contrast, the families of functions represented by fully-connected linear networks form algebraic sets. We observe that the functions represented by LCNs can be identified with polynomials that admit certain factorizations, and we use this perspective to describe the impact of the network's architecture on the geometry of the resulting function space. We further study the optimization of an objective function over an LCN, analyzing critical points in function space and in parameter space, and describing dynamical invariants for gradient descent. Overall, our theory predicts that the optimized parameters of an LCN will often correspond to \emph{repeated filters} across layers, or filters that can be decomposed as repeated filters. We also conduct numerical and symbolic experiments that illustrate our results and present an in-depth analysis of the landscape for small architectures. 
\end{abstract}

\begin{keywords}
Function space description of neural networks, linear network, Toeplitz matrix, circulant matrix, algebraic statistics, Euclidean distance degree, semi-algebraic set, gradient flow, discriminant, critical point, tensor. 
\end{keywords}

\begin{AMS}
68T07,  
14P10,  
14J70,  
90C23,  
62R01.  
\end{AMS}

\section{Introduction}

A neural network is a parameterized family of functions. 
The class of representable functions, also known as the \emph{neuromanifold} or \emph{function space}, is determined by the architecture of the network. Optimizing an objective function over such a parametrized set usually depends on both the set and the parametrization. 
The objective function is typically convex in function space but non-convex in parameter space. This observation has guided numerous recent works on parameter optimization in neural networks; see, \eg, 
\cite{NEURIPS2018_5a4be1fa, 
du2018gradient,             
pmlr-v97-allen-zhu19a,      
pmlr-v119-dukler20a}.       
However, the geometry of function space is not well understood in general. 
Fully-connected networks, also known as dense networks, have been studied in significantly more detail than convolutional networks, which are the topic of this article. Whereas weight matrices in fully-connected networks have independent and unconstrained entries, weight matrices in convolutional networks are sparse and have repeated entries. 

Neural networks with linear activation functions, called linear networks, represent linear functions as compositions of linear functions. 
An important motivation for studying such networks is to better understand the effects of the parameterization when optimizing an objective function. 
We will see that it also can help us better understand the effects that the geometry of function space has on the optimization problem. 
Although neural networks are usually presented in a parameterized form, it is in principle possible to characterize the representable functions implicitly as the solutions to certain constraints. 
Such descriptions have been particularly fruitful in algebraic statistics, \eg, \cite{sullivant2018algebraic,SeigalMontufar,722fd1ac5816467ebadbd10f15ff2a74,10.1007/978-3-030-43120-4_29,ccelik2020wasserstein}. 
In the case of fully-connected linear networks, the function space is characterized by rank constraints depending on the layer widths. A study of critical points and local optimizers in parameter and function space for fully-connected linear networks has been carried out in \cite{geometryLinearNets}. 
Here we follow a similar perspective,
analyzing the function space and the optimization of parameters for
\emph{linear convolutional networks} (LCNs).

\paragraph{Main results} 
In \Cref{section:LCNs} we introduce LCNs and present our general setup. 
Each layer of an LCN is represented by a convolutional matrix or, more generally, a convolutional tensor. 
We observe that composing convolutional layers corresponds to polynomial multiplication. 

In \Cref{sec:discriminants} we recall some general properties of discriminants and multiplicity root patterns for polynomials, which we use in later sections.

In \Cref{sec:geometry} we show that the function space of an LCN (\ie, the set of linear maps that can be represented by the network) is a semi-algebraic set, whose natural ambient space is the vector space of convolutional tensors of a certain format. 
In contrast to fully-connected linear networks, whose function space is cut out by polynomial equations only, the function space of LCNs is determined by polynomial equations and inequalities. 
For one-dimensional convolutions with stride one, the function space is a full-dimensional subset of its natural ambient space and its boundary is contained in the discriminant hypersurface of univariate polynomials of a certain degree. The function space is equal to its natural ambient space if and only if at most one of the filters has even size. 
The latter case is interesting since optimizing a convex loss in function space reduces to a convex optimization problem on a vector space. 
For higher-dimensional convolutions or larger strides, the function space is typically a lower-dimensional subset of its natural ambient space. 

In \Cref{sec:optimization-function} we study the optimization problem for a general loss and one-dimensional convolutions with stride one. 
Different LCNs can have the same function space, but different parameterizations can influence the optimization behavior. 
Hence, we distinguish \emph{pure critical points} given by the optimization on function space from \emph{spurious critical points} induced only by the parameterization. 
We show that the critical points in parameter space are either global minima or correspond to polynomials with repeated roots. The latter actually correspond to critical points of the loss in function space restricted to subsets of polynomials with particular real root multiplicity patterns that we characterize combinatorially depending on the LCN architecture. 
We further describe the gradient flow dynamics in parameter space and provide invariants that allow us to formulate it as a local Riemannian gradient in function space. 

In \Cref{sec:squareloss} we take a closer look at the the square loss. 
We provide upper bounds on the number of critical points in function space in terms of \emph{Euclidean distance degrees}, and observe that for certain normalizations of the training data, the square loss becomes the \emph{Bombieri norm}, which is expected to have fewer critical points. 

In \Cref{sec:experiments} we conduct numerical experiments illustrating our theoretical results. 
These demonstrate how the type of training data, the geometry of function space, and the parametrization lead to different types of critical points and solutions found using gradient descent.

\paragraph{Overview of previous works on linear networks}

\emph{Loss surface.}  
The properties of the optimization problem in linear networks have been studied in \cite{BALDI198953,NIPS1988_123,10.1109/72.392248}. 
For the square loss in fully-connected linear networks there is a unique local and global minimum, up to equivalence, and all other critical points are saddles. 
A proof of this statement for the deep case was given in \cite{NIPS2016_6112}. 
For deep residual linear networks, \cite{DBLP:conf/iclr/HardtM17} showed that the square loss has no critical points other than global minima. 
Further, \cite{DBLP:journals/corr/LuK17} and \cite{47812} argue that depth alone does not create bad local minima, although it induces a non-convex loss surface. 
The analytic form of the critical points for the square loss in fully-connected linear networks was discussed in \cite{zhou2018critical}. 
For arbitrary differentiable convex losses and fully-connected linear networks having layers at least as wide as the input or output layers, \cite{pmlr-v80-laurent18a} showed that all local minima are global. 
A geometric analysis was given in \cite{geometryLinearNets}, showing that the absence of non-global local minima in linear networks is expected for arbitrary smooth convex losses only if the architecture can express all linear maps (as in \cite{pmlr-v80-laurent18a}) or for the quadratic loss and arbitrary architectures (as in \cite{NIPS2016_6112}). 
Taking an algebraic standpoint, \cite{9397294} studied fully-connected linear networks and upper bounds on the number of critical points of the square loss with weight norm regularization. 
The above works focus mostly on fully-connected networks. 
A paper dating back to 1989~\cite{NIPS1988_123} asked whether non-global local minima for the square loss exist in the case non-fully-connected (\ie, locally-connected) multi-layer linear networks, and this seems to have remained an open question.
We show that LCNs can indeed have non-trivial local minima in function space and in parameter space.

\emph{Gradient dynamics.} 
An overview of non-convex optimization in low-rank matrix factorization (two-layer linear networks) was given in \cite{8811622}. 
Exact solutions to the nonlinear gradient dynamics for the square loss in fully-connected deep linear networks with orthogonal input data have been obtained in \cite{DBLP:journals/corr/SaxeMG13}. 
Also for the square loss in fully-connected linear networks, \cite{arora2018a} showed linear convergence to a global minimum when the hidden layers are at least as wide as the input or output layers, weight matrices at initialization are approximately balanced, and the initial loss is less than for any rank deficient solution. 
The work \cite{pmlr-v80-arora18a} shows that 
depth amounts to a pre-conditioner 
which may accelerate convergence. 
A detailed analysis of gradient flow optimization and convergence for the square loss in fully-connected linear networks was presented in \cite{DBLP:journals/corr/abs-1910-05505}. Using a balancedness condition they obtain a Riemannian gradient in function space and show that the flow always converges to a critical point, which for almost all initializations is the global optimum in function space restricted to the set of rank-$k$ matrices for some $k$. 
We characterize gradient flow invariants in LCNs and obtain corresponding descriptions in function space. The critical points can also be characterized in function space, in terms of real root multiplicity patterns rather than rank constraints. 

\emph{Linear convolutional networks and implicit bias of gradient descent.}  
LCNs have been~studied in \cite{dft} for the case of 1D convolutions with circulant weight matrices (with stride~1) having full-width filters and a single output. Such networks can express any linear map. 
That work focuses on binary classification with linearly separable data and asks which of the infinitely many linear classifiers with zero loss are obtained by gradient descent. 
For the exponential loss, it is shown that for fully-connected networks gradient descent converges to the hard margin support vector machine linear classifier independently of the depth of the network (as was previously observed for the direct parameterization in \cite{JMLR:v19:18-188}), whereas for convolutional networks it is biased towards linear classifiers that are sparse in frequency domain depending on the depth. 
In contrast, we consider LCNs with arbitrary filter sizes (and also provide first geometric insights for LCNs with larger strides or higher-dimensional convolutions). 
As we will see, the filter sizes play an important role. 
Our results suggest that gradient descent in LCNs will be biased towards solutions with repeated filters (or repeated factors of filters), even when these are not global optimizers of the training objective. 

\paragraph{Notation}
Throughout the paper, we use zero-based indices for our vectors and matrices. 
For $n\in\mathbb{Z}_{\ge 0}$, we write $[n]=\{0,\ldots, n-1\}$. 
We write $\RR[\x,\y]_{d}$ (resp. $\CC[\x,\y]_{d}$) for the space of bivariate homogeneous polynomials of degree $d$ with real (resp.  complex) coefficients.

\section{Linear convolutional networks} 
\label{section:LCNs}

A \emph{feedforward neural network} is a family of functions $f: \RR^{d_0} \rightarrow \RR^{d_L}$ that are compositions of linear and nonlinear maps:  
\begin{equation}\label{eq:feedforward}
f(x) = (\alpha_L \circ \rho \circ \alpha_{L-1} \circ \rho \circ \ldots  \rho \circ \alpha_1)(x).
\end{equation}
Here $\alpha_l: \RR^{d_{l-1}} \rightarrow \RR^{d_{l}}$ is an affine map and each $\rho: \RR^{d_*} \rightarrow \RR^{d_*}$ is a (typically nonlinear) \emph{activation map} that acts on each element of a vector. The composition of consecutive affine and activation maps is called a \emph{layer} of the network. 
The coefficients of the affine functions $\alpha_i$ are called the \emph{weights} and \emph{biases} of the network and serve as trainable parameters.

A \emph{linear network} is a feedforward network where the activation function $\rho$ is the identity. In this setting, the end-to-end map $f$ in~\cref{eq:feedforward} is affine, however the parameterization of the representable functions provided by the network's weights is nonlinear. Thus, linear networks preserve some of the nonlinear aspects of general feedforward networks while being theoretically more tractable. 

A \emph{linear convolutional network (LCN)} is a linear network with the additional constraint that the affine functions in~\cref{eq:feedforward} have a convolutional structure. The convolutional structure can be seen as constraining general affine maps to satisfy certain linear equations, corresponding to conditions of \emph{restricted connectivity} (some weight coefficients must be zero) and \emph{weight sharing} (some weight coefficients must be repeated). Although an LCN function $f$ can also be seen as a particular fully-connected linear network, the family of all representable functions using an LCN architecture is a strict subset of the family of representable functions using a fully-connected architecture.
In particular, the study of LCNs cannot be reduced to the study of general linear networks. We will point out several important differences between the geometry and optimization properties of LCNs and those of fully-connected networks.

In this paper, we consider convolutional networks with a single filter per layer and no biases. For most of our discussion, we focus on convolutions applied to one-dimensional signals, which means that the input $x$ of the network function is a vector (we discuss the case where $x$ is an image or higher-dimensional in \Cref{sec:higher_dimension}, but we defer general definitions to that section). In this setting, a \emph{convolution} is a map $\alpha_l: \RR^{d_{l-1}} \rightarrow \RR^{d_{l}}$ of the form 
\begin{equation}
(\alpha_l(x))_i = \sum_{j \in [k_l]} w_{l,j} \cdot x_{is_l+j},
\label{eq:convolutionDefinition}
\end{equation}
where $w_l = (w_{l,0},\ldots,w_{l,k_l-1}) \in \RR^{k_l}$ is a \emph{filter} of \emph{width} $k_l$ and $s_l \in \mathbb{Z}_{>0}$ is the $\emph{stride}$ associated with the convolution.\footnote{The expression in~\cref{eq:convolutionDefinition} is actually the \emph{cross-correlation} between $x$ and $w_l$ or equivalently the convolution of $x$ with the \emph{adjoint filter} $w^*_l = (w_{l,k_l-1},\ldots,w_{l,0})$. We refer to it simply as a ``convolution'' following common machine learning terminology. 
Exchanging all filters with their adjoints would have no effect on our discussion.} 
We are assuming ``no padding'' in the way the boundaries of the domain are handled. 
This tacitly assumes that input dimension and stride fit with the output dimension. 
We compare this approach with using ``circular padding'' in \Cref{subsec:toep_vs_circulant} below. 

The linear map $\alpha_l$ in~\cref{eq:convolutionDefinition} can be represented as a generalized Toeplitz matrix (generalized in the sense that standard Toeplitz matrices would correspond to stride $s_l=1$). For example,
for stride $s_l = 2$, filter size $k_l=3$, and input size $d_{l-1}=7$, the matrix takes the form 
\begin{equation}
W_l = 
\begin{bmatrix}
w_{l,0}&w_{l,1}&w_{l,2}&&&\\
&&w_{l,0}&w_{l,1}&w_{l,2}&\\
&&&&w_{l,0}&w_{l,1}&w_{l,2}\\
\end{bmatrix}. 
\label{eq:toeplitz}
\end{equation}
We will call generalized Toeplitz matrices \emph{convolutional matrices}. 
For convolutional matrices as in~\cref{eq:toeplitz}, the input and output sizes $d_{l-1}$, $d_{l}$, filter size $k_l$, and stride $s_l$ are related by
\begin{equation}
\label{eq:dimensionConstraintsToeplitz}
    d_{l} = \frac{d_{l-1} - k_l}{s_l} + 1.
\end{equation} 

An $L$-layer LCN function is given by $f(x) = \overline W x$, where $\overline W \in \RR^{d_L \times d_0}$ can be written as $\overline{W} = W_LW_{L-1}\cdots W_{2}W_{1}$ and each $W_l \in \R^{d_{l} \times d_{l-1}}$ is a convolutional matrix.
In this work, we study the \emph{function space}, \ie, the set of linear maps that can be represented by an LCN as the parameters vary:
\begin{equation*}
    \sM_{\bm{d},\bm{k},\bm{s}} = \Big\{ \overline{W} \in \R^{d_L \times d_0} \colon  \overline{W} = \prod_{l=1}^L W_l, \; W_l \in \R^{d_{l} \times d_{l-1}} \text{ convolutional} \Big\}.
\label{eq:functionSpace}    
\end{equation*}
Here $\bm d = (d_0, \ldots, d_L)$ denotes the dimensions, $\bm k = (k_1, \ldots, k_L)$  the filter sizes, and 
$\bm s  = (s_1, \ldots, s_L)$ the strides of the $L$ layers. 
We refer to the triple $(\bm d, \bm k, \bm s)$ as the \emph{architecture} of the LCN. 
We tacitly assume in the following 
that the dimensions $\boldsymbol{d}$ are compatible by satisfying~\cref{eq:dimensionConstraintsToeplitz}. 
We also note that by \cref{eq:dimensionConstraintsToeplitz} the function space $\sM_{\bm{d},\bm{k},\bm{s}}$ is fully determined by fixing $d_L$, $\bm k$, and $\bm s$.
As such, we may omit any of the indices $\boldsymbol{d}$, $\boldsymbol{k}$ and $\bm s$  when clear from the context.
The function space $\sM_{\bm{d},\bm{k},\bm{s}}$ is the image of the \textit{parameterization map} 
\begin{equation} \label{eqn:Wbar}
\mu_{\bm{d},\bm{k},\bm{s}}\colon \mathbb{R}^{k_1}\times \cdots\times \mathbb{R}^{k_L} \longrightarrow \sM_{\bm{d},\bm{k},\bm{s}} ;\quad (w_1,\ldots, w_L) \longmapsto \overline{W},
\end{equation}
that takes $L$ filters to the product of their associated convolutional matrices.

\begin{remark} Although we use $\sM_{\bm{d},\bm{k},\bm{s}}$ to denote the function space of LCNs, the sequence of dimensions $\bm d$ does not play an essential role. More precisely, 
since the product of convolutional matrices is again a convolutional matrix (see \Cref{prop:nonzerodiagsExtension}),
we can also define the parameterization map~\cref{eqn:Wbar} at the level of filters $
\mu_{\bm{k},\bm{s}}\colon \mathbb{R}^{k_1}\times \cdots\times \mathbb{R}^{k_L} \to \RR^{k} ;\; (w_1,\ldots, w_L) \mapsto \overline w, 
$
where $\overline{w}$ is the filter of the product convolutional matrix.
\end{remark}

\subsection{Compositions of convolutional matrices} 

It is well known that the composition of convolution operations is again a convolution. More precisely, for two convolutional maps $\alpha_1: \RR^{d_0} \rightarrow \RR^{d_1}$ and $\alpha_2: \RR^{d_1} \rightarrow \RR^{d_2}$ with filters $w_1$ and $w_2$ respectively and satisfying~\eqref{eq:dimensionConstraintsToeplitz}, the composition is given by
\begin{align}
\label{eq:convolutionalMatricesComposition}
\begin{split}
(\alpha_2 \circ \alpha_1) (x)_i &= \sum_{j \in [k_2]} w_{2,j} \cdot \alpha_1(x)_{is_2+j} = \sum_{j \in [k_2]} w_{2,j} \cdot \sum_{\ell \in [k_1]} w_{1,\ell} \cdot x_{(is_2 + j)s_1+\ell}\\ 
&= \sum_{j \in [k_2]} \sum_{\ell \in [k_1]} w_{2,j} w_{1,\ell} \cdot x_{is_2 s_1 + (js_1 + \ell)} = \sum_{m \in [(k_2-1)s_1 + k_1]} u_{m} \cdot x_{is_2s_1 + m}. 
\end{split}
\end{align}
Here the resulting filter $u$ has size $(k_2-1)s_1 + k_1$ and has entries 
\begin{align}
\label{eq:filterProduct}
    u_m = \sum_{\substack{j \in [k_2],  \,\ \ell \in [k_1]\\ js_1 + \ell = m}} w_{2,j} \, w_{1, \ell}.
\end{align}

\begin{proposition}
\label{prop:nonzerodiagsExtension}
The composition of $L$ convolutions with filter sizes ${\bm k} = (k_1,\ldots,k_L)$ and strides ${\bm s} = (s_1,\ldots, s_L)$ is a convolution with 
filter size $k = k_1 + \sum_{l=2}^L (k_l-1) \prod_{m=1}^{l-1} s_m$ and stride $s = \prod_{l=1}^L s_l$. 
If all strides $s_i$ are equal to $\tilde s$, then  
$k=d_0 - (d_L-1)s$ and $s = \tilde s^L$.
\end{proposition}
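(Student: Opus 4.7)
The plan is to prove the first assertion by induction on $L$, using the explicit computation in~\cref{eq:convolutionalMatricesComposition,eq:filterProduct} as the base case, and then to derive the second assertion by iterating the dimension constraint~\cref{eq:dimensionConstraintsToeplitz}.

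For the first claim, the case $L=1$ is vacuous, while the case $L=2$ is precisely what~\cref{eq:convolutionalMatricesComposition} establishes: the composition is a convolution of filter size $(k_2-1)s_1 + k_1$ and stride $s_2 s_1$. For the inductive step, I would assume the composition of the first $L-1$ convolutions is a single convolution with filter size $k' = k_1 + \sum_{l=2}^{L-1}(k_l-1)\prod_{m=1}^{l-1}s_m$ and stride $s' = \prod_{l=1}^{L-1}s_l$, and then apply the $L=2$ formula with $(k_1,s_1)$ replaced by $(k',s')$ and $(k_2,s_2)$ replaced by $(k_L, s_L)$. This produces a convolution of stride $s_L s' = \prod_{l=1}^{L}s_l$ and of filter size
\begin{equation*}
(k_L-1)s' + k' \;=\; (k_L-1)\prod_{m=1}^{L-1}s_m \;+\; k_1 \;+\; \sum_{l=2}^{L-1}(k_l-1)\prod_{m=1}^{l-1}s_m \;=\; k_1 + \sum_{l=2}^{L}(k_l-1)\prod_{m=1}^{l-1}s_m,
\end{equation*}
which is the desired expression.

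For the second claim, I would set $s_l = \tilde s$ for all $l$, so that $s = \tilde s^L$ is immediate. To compare $k$ with $d_0 - (d_L-1)s$, I would iteratively apply~\cref{eq:dimensionConstraintsToeplitz} in the form $d_{l-1} = \tilde s (d_l - 1) + k_l$. A straightforward unrolling (by induction on $L$) yields the telescoping identity
\begin{equation*}
d_0 \;=\; \tilde s^L (d_L - 1) \;+\; \sum_{l=1}^{L} \tilde s^{\,l-1}(k_l - 1) \;+\; 1,
\end{equation*}
and rearranging gives $d_0 - (d_L-1)\tilde s^L = k_1 + \sum_{l=2}^{L}\tilde s^{\,l-1}(k_l-1) = k$, as desired.

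There is no real obstacle here: both parts are inductive bookkeeping exercises. The only care needed is to get the indices and the order of composition right in the induction step (making sure the induction is on the layer closest to the output, not the input, so that the $L=2$ formula applies cleanly) and to keep track of the ``$+1$'' offsets from~\cref{eq:dimensionConstraintsToeplitz} when unrolling the telescoping sum.
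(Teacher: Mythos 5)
Your proposal is correct and follows essentially the same route as the paper: the two-layer formula from \cref{eq:convolutionalMatricesComposition} as the base case, induction by composing the first $l-1$ layers with the $l$-th, and unrolling \cref{eq:dimensionConstraintsToeplitz} for the equal-stride identity. The paper is merely terser, stating the two-layer case and asserting the induction, whereas you write out the telescoping explicitly; the content is identical.
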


\begin{proof}
The expression in~\cref{eq:convolutionalMatricesComposition} shows that the composition of two convolutions with filter sizes $k_1, k_2$ and strides $s_1, s_2$ is again a convolution, with filter size $k=k_1 + (k_2-1)s_1$ and stride $s_2 s_1$. If the strides $s_1,s_2$ are equal to $\tilde s$,
then by~\cref{eq:dimensionConstraintsToeplitz}, we have $k_1 = d_0 -  (d_1-1) \tilde s$ and $k_2 = d_1 - (d_2-1) \tilde s$, which yields $k = k_1 + (k_2-1) \tilde s = d_0 - (d_2-1)\tilde s^2$.
The statement then follows by induction, whereby at the $l$-th iteration we multiply the convolution represented up to layer $l-1$ with the $l$-th layer convolution. 
\end{proof}

\noindent By \Cref{prop:nonzerodiagsExtension}, $\sM_{\bm d, \bm k, \bm s}$ is contained in the vector space $\sM_{(d_0,d_L),k,s}$ of convolutional $d_L\times d_0$ matrices with filter size $k$ and stride $s$. A natural question is whether this containment is strict.

\begin{definition}%
\label{def:filling}\rm
An architecture $(\bm d, \bm k, \bm s)$ is \emph{filling} if $\sM_{\bm d, \bm k, \bm s} = \sM_{(d_0,d_L),k,s}$ holds, where $k$ and $s$ are as in \Cref{prop:nonzerodiagsExtension}. 
\end{definition}

Filling LCN architectures might be desirable since in function space training using a convex loss is simply 
convex optimization in a vector space.
 We emphasize that the vector space $\sM_{(d_0,d_L),k,s}$ is properly contained in $\RR^{d_L \times d_0}$, since convolutional matrices have restricted patterns of non-zero entries as well as rows with repeated entries. 
For non-filling architectures, we will see in \Cref{sec:geometry} that $\sM_{\bm d, \bm k, \bm s}$ has additional constraints in the form of polynomial equalities and inequalities. 
 Below we give examples of filling and non-filling LCN architectures. 
 For fully-connected linear networks, the only constraints on the space of representable matrices are rank constraints (vanishing of certain minors, which are polynomial equations). For LCNs, the matrix sizes are decreasing by \cref{eq:dimensionConstraintsToeplitz}, and such rank constraints do not occur.

\begin{example}\label{example:minimal}\rm
Consider the LCN architecture with $\bm d = (3,2,1)$, $\bm k = (2,2)$, $\bm s = (1,1)$. The corresponding function space $\sM \subset \RR^{1 \times 3}$ consists of matrices 
\begin{equation*}
    \overline{W} = W_2W_1 = \begin{bmatrix} c&d \end{bmatrix} \begin{bmatrix} a&b&0\\0&a&b \end{bmatrix} =  \begin{bmatrix} ac & ad + bc &  bd \end{bmatrix} =: \begin{bmatrix} A & B  &  C \end{bmatrix} . 
\end{equation*}
We can see that this architecture is not filling, \ie, that the containment $\sM \subsetneq \RR^{1 \times 3}$ is strict. Indeed, the entries in $\overline{W}$ are given by the coefficients $(A,B,C)$ of quadratic polynomials that factor into two real linear forms: 
\begin{equation*}
    (a \x + b)(c \x + d) = ac\x^2 + (ad + bc) \x + bd = A\x^2 + B \x + C.
\end{equation*}
These correspond to quadratic equations with one or more real roots (or linear equations with $A=0$). 
Hence the set of representable coefficients $(A,B,C)$ is characterized by the discriminant condition $B^2 - 4AC \ge 0$. 
See \Cref{fig:zwei} (left). 
\end{example}

\begin{example} \label{example:larger}\rm
Consider the LCN with $\bm d = (5,3,2)$, $\bm k = (3,2)$, $\bm s = (1,1)$. Then 
\begin{equation*}
    \overline{W} = W_2W_1 = 
    \begin{bmatrix} d&e&0\\
    0&d&e\end{bmatrix}
    \begin{bmatrix}
    a&b&c&0&0\\
    0&a&b&c&0\\
    0&0&a&b&c
    \end{bmatrix}
    =
    \begin{bmatrix}
    A&B&C&D&0\\
    0&A&B&C&D
    \end{bmatrix},
\end{equation*}
where $(A,B,C,D) = (ad, \, bd + ae, \, cd + be, \, ce)$. 
The set $\sM$ is the set of all possible $\overline{W}$ where its entries are related by the above equation as $(a,b,c,d,e)$ vary over $\R^5$. 
Rephrasing this question as which real cubic polynomials factor into a real quadratic and linear factor, \ie,
\begin{equation*}
    (a\x^2 + b\x + c)(d\x + e) = ad\x^3 + (bd+ae)\x^2 + (cd + be)\x + ce = A\x^3 + B\x^2 + C\x + D,
\end{equation*}
one can immediately draw on the fundamental theorem of algebra to conclude that every choice of $(A,B,C,D)$ has a corresponding choice of $(a,b,c,d,e)$. From this, we note that $\sM$ is filling, so $\sM_{(5,3,2),(3,2),(1,1)} = \sM_{(5,2),4,1}$.
\end{example}

\subsection{Polynomial multiplication}

The previous examples illustrate a useful identification between  convolutional matrices and  polynomials. More precisely, we associate a filter $w \in \RR^k$ with the polynomial
\begin{equation}\label{eq:polynomial_identification}
\pi(w) = w_0 \x^{k-1} + w_1 \x^{k-2}\y + \cdots + w_{k-2}\x\y^{k-2} + w_{k-1}\y^{k-1} \in \RR[\x,\y]_{k-1}.
\end{equation}
This is a formal polynomial, where the monomials $\x^i\y^j$ are used to index the weights of the filter. To better handle situations where leading coefficients vanish, it is convenient to use homogeneous bivariate polynomials instead of univariate affine polynomials as in the two previous examples. Recall that a root of a homogeneous polynomial $p(\x,\y)$ is an equivalence class of pairs $(r_\x, r_\y) \in \CC^2 \setminus \{ (0,0) \}$ such that $p(r_\x, r_\y) = 0$, up to complex scalar factors (\ie, a point in $\mathbb C\mathbb P^1$). 
With some abuse of notation, we will also use $\pi$ to map a convolutional matrix $W$ with filter $w$ to the polynomial in~\cref{eq:polynomial_identification}. 

\begin{proposition}\label{prop:deep_poly_multiplication}
Consider convolutional matrices $W_1,\ldots, W_L$ with compatible sizes and stride one.  
Then the convolutional matrix $\overline W = W_L \cdots W_1$ satisfies $\pi(\overline W) =\pi(W_L) \cdots \pi(W_1)$. 
\end{proposition}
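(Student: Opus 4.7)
The plan is to induct on the number of layers $L$, with the base case $L=2$ being the essential content and the inductive step following from associativity.

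For the base case $L=2$, I would appeal directly to the formula~\cref{eq:filterProduct} from the derivation of \Cref{prop:nonzerodiagsExtension}. With $s_1 = 1$, that formula becomes
\[
u_m \;=\; \sum_{\substack{j\in[k_2],\,\ell\in[k_1]\\ j+\ell=m}} w_{2,j}\,w_{1,\ell},
\]
so $u$ is the filter of size $k = k_1 + k_2 - 1$ for the composed convolution $W_2 W_1$. On the other hand, by definition of the identification $\pi$ in~\cref{eq:polynomial_identification}, the product $\pi(W_2)\pi(W_1)$ is a bivariate homogeneous polynomial of degree $(k_2-1)+(k_1-1) = k-1$, whose coefficient of $\x^{k-1-m}\y^m$ is exactly the convolution sum $\sum_{j+\ell=m} w_{2,j} w_{1,\ell}$. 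Matching these two expressions coefficient by coefficient shows $\pi(W_2 W_1) = \pi(W_2)\pi(W_1)$.

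For the inductive step, suppose the claim holds for any composition of at most $L-1$ stride-one convolutional matrices with compatible sizes, and consider $\overline W = W_L \cdots W_1$. Write $\overline W = W_L \cdot W'$, where $W' = W_{L-1} \cdots W_1$. By \Cref{prop:nonzerodiagsExtension}, $W'$ is itself a convolutional matrix with stride one, so the base case applies to the two-factor product $W_L \cdot W'$ and yields $\pi(\overline W) = \pi(W_L)\,\pi(W')$. Applying the inductive hypothesis to $W'$ gives $\pi(W') = \pi(W_{L-1}) \cdots \pi(W_1)$, and combining the two equalities yields the desired factorization.

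The main (and only) obstacle is the index bookkeeping in the base case: one must be careful to check that the degree-conventions in the identification~\cref{eq:polynomial_identification} (weight $w_i$ attached to the monomial $\x^{k-1-i}\y^i$) match the order in which indices appear in the convolution sum~\cref{eq:filterProduct}. Once the base case is verified, the rest of the argument is a routine induction, and no additional ingredients beyond \Cref{prop:nonzerodiagsExtension} are needed.
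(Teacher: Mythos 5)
Your proof is correct and follows essentially the same route as the paper's: the two-matrix case is read off from~\cref{eq:filterProduct} with $s_1=1$, and the general case is handled by induction using \Cref{prop:nonzerodiagsExtension} to keep the partial products convolutional. You simply spell out the coefficient-matching and the inductive step in more detail than the paper does.
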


\begin{proof}
For the product of two convolutional matrices, the statement follows from \cref{eq:filterProduct}.
The general case follows by induction. 
\end{proof}

\begin{corollary}\label{cor:polynomial_space_identification}
Let $(\boldsymbol d, \boldsymbol k, \boldsymbol s)$ be an LCN architecture of depth $L$ with $\boldsymbol{s} = (1, \ldots, 1)$. 
Then $\pi$ identifies $\mathcal{M}_{\boldsymbol d, \boldsymbol k, \boldsymbol s}$ with the set of polynomials that factor into polynomials of degrees $k_1 - 1, \ldots, k_L-1$, that is,  $\pi(\mathcal{M}_{\boldsymbol d, \boldsymbol k, \boldsymbol s}) = \left\lbrace p = q_L \cdots q_1 \colon q_i \in \mathbb{R}[\x,\y]_{k_i-1}  \right\rbrace \subseteq \mathbb{R}[\x,\y]_{k-1}$ where $k = \sum_{i=1}^L (k_i -1) + 1$ is the size of the end-to-end filter.
\end{corollary}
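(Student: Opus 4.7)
The plan is to treat this corollary as essentially a bookkeeping consequence of \Cref{prop:deep_poly_multiplication}, together with the fact that the map $\pi$ defined in~\eqref{eq:polynomial_identification} is a linear isomorphism $\mathbb{R}^{k_i}\xrightarrow{\sim}\mathbb{R}[\x,\y]_{k_i-1}$ between filters of width $k_i$ and bivariate homogeneous polynomials of degree $k_i-1$.

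First I would verify the ambient degree. Specializing \Cref{prop:nonzerodiagsExtension} to all strides equal to one, the end-to-end filter has size $k = k_1 + \sum_{l=2}^L(k_l-1) = \sum_{i=1}^L(k_i-1)+1$, so every $\overline W\in\mathcal{M}_{\boldsymbol d,\boldsymbol k,\boldsymbol s}$ is a convolutional matrix with filter of size $k$ and hence $\pi(\overline W)\in\mathbb{R}[\x,\y]_{k-1}$. Since $k-1=\sum_{i=1}^L(k_i-1)$ also equals the degree of any product $q_L\cdots q_1$ with $q_i\in\mathbb{R}[\x,\y]_{k_i-1}$, both sides of the claimed identification live in the same ambient space $\mathbb{R}[\x,\y]_{k-1}$.

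For the forward inclusion, any $\overline W = W_L\cdots W_1\in\mathcal{M}_{\boldsymbol d,\boldsymbol k,\boldsymbol s}$ satisfies $\pi(\overline W)=\pi(W_L)\cdots\pi(W_1)$ by \Cref{prop:deep_poly_multiplication}, with each $\pi(W_l)\in\mathbb{R}[\x,\y]_{k_l-1}$ by construction. For the reverse inclusion, given any factorization $p=q_L\cdots q_1$ with $q_l\in\mathbb{R}[\x,\y]_{k_l-1}$, I would invert $\pi$ layer by layer to recover filters $w_l\in\mathbb{R}^{k_l}$ with $\pi(w_l)=q_l$, assemble the corresponding convolutional matrices $W_l$, and apply \Cref{prop:deep_poly_multiplication} once more to conclude $\pi(W_L\cdots W_1)=p$, so $p\in\pi(\mathcal{M}_{\boldsymbol d,\boldsymbol k,\boldsymbol s})$. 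There is no genuine obstacle here beyond the size bookkeeping: the corollary is a repackaging of \Cref{prop:deep_poly_multiplication} at the level of sets, relying only on the fact that $\pi$ is a vector space isomorphism in each width.
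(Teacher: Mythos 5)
Your proposal is correct and matches the paper's (implicit) reasoning: the corollary is stated without proof as an immediate set-level consequence of \Cref{prop:deep_poly_multiplication}, using exactly the observations you make — that $\pi$ is a linear isomorphism between filters of width $k_i$ and $\mathbb{R}[\x,\y]_{k_i-1}$, and that the degree/filter-size bookkeeping from \Cref{prop:nonzerodiagsExtension} places both sides in $\mathbb{R}[\x,\y]_{k-1}$. Nothing is missing.
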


Since the ordering of the factors does not affect the product polynomial, the function space of an LCN with stride-one 1D convolutional layers is the same for any permutation of the layers. 
This also holds for higher-dimensional convolutions, as we will see in \Cref{sec:higher_dimension}. 

\begin{remark}
\label{rem:polynomialCorrespondenceHigherStride}
\Cref{prop:deep_poly_multiplication} can be generalized to deal with arbitrary strides $\boldsymbol s$. 
For example, for two convolutional matrices $W_2,W_1$ with strides $s_2,s_1$, the product $W_2 W_1$ can be identified with 
the polynomial $p = \pi(W_2 W_1) \in \RR[\x,\y]$ that factors as $p = q_2q_1$, where $q_1 = \pi(W_1)$ as in~\cref{eq:polynomial_identification}, while $q_2$ is defined as follows: 
if $w$ denotes the filter of $W_2$, we see from \cref{eq:filterProduct} that  
\[q_2 = w_0 \x^{s_1(k_2-1)} + w_1 \x^{s_1(k_2-2)}\y^{s_1}  + \cdots + w_{k_2-2} \x^{s_1} \y^{s_1(k_2-2)} + w_{k_2-1}\y^{s_1(k_2-1)}. 
\]
\end{remark}

\subsection{Toeplitz vs.\ circulant matrices}
\label{subsec:toep_vs_circulant}

Discrete convolutions over finite-dimensional data can be defined in slightly different ways depending on how they deal with the boundary of the input. Until now, we have been discussing a ``no padding'' setting where the convolution is not applied across the boundary, and the dimension of the output is smaller than the dimension of the input (see \cref{eq:toeplitz}). An alternative approach would be to consider \emph{cyclic convolutions}, where the input space $\RR^d$ is viewed as representing ``cyclic signals'' $\mathbb{Z}/d \rightarrow \RR$. In this section we point out that the two approaches are effectively equivalent in terms of the function space. 

Let $W_1,\ldots, W_L$ be the weight matrices of an LCN. These matrices are generalized Toeplitz matrices and must have compatible sizes \cref{eq:dimensionConstraintsToeplitz}.
If $W_l$ has size $d_{l} \times d_{l-1}$ and corresponding filter $w_l$ of size $k_l$, then we construct a \emph{generalized circulant} matrix of size $d_{0} \times d_{0}$ as follows.
If the stride is $s_l = 1$, the circulant matrix is
\[
\begin{bmatrix} w_{l,0} & w_{l,1} & \ldots & w_{l, d_0-1} \\
w_{l, d_0-1} & w_{l,0} & \ldots & w_{l, d_0-2} \\
\vdots &\vdots&\ddots& \vdots \\
w_{l,1} & w_{l,2} &\ldots & w_{l,0} \\
\end{bmatrix} , 
\]
where $w_{l,i} = 0$ for $i \geq k_l$. 
For Toeplitz matrices with larger strides $s_l$, we can similarly define a $d_0 \times d_0$ generalized circulant matrix of stride $s_l$, where each row is obtained by shifting the previous row by $s_l$ steps.

The generalized Toeplitz matrix $W_l$ is the top-left $d_l \times d_{l-1}$  block of its associated generalized circulant matrix. 
Under this identification, the multiplication of generalized Toeplitz matrices with compatible sizes is the multiplication of the corresponding generalized circulant matrices. 
In particular, the resulting filter $\overline{w}$ of the Toeplitz matrix product $\overline{W}$ also results from the product of the circulant matrices. Therefore both formulations are equivalent. 

\begin{example} \rm
For the Toeplitz matrices in \Cref{example:minimal}, 
the corresponding product of circulant matrices~is
\begin{equation*}
\begin{bmatrix}
c&d&0\\
0&c&d\\
d&0&c\\
\end{bmatrix}
\begin{bmatrix}
a&b&0\\
0&a&b\\ 
b&0&a\\ 
\end{bmatrix}
= \begin{bmatrix}
ac&ad+bc&bd\\
bd&ac&ad+bc\\ 
ad+bc&bd&ac\\ 
\end{bmatrix}. 
\end{equation*}
\end{example}

\section{Discriminants and multiple roots}
\label{sec:discriminants}
In this section, we review some properties of the structure of real polynomials with repeated roots, recalling the definitions of \emph{discriminants}, \emph{multiple root loci}, and \emph{real root multiplicity patterns} (\emph{\rrmp} for short).
We will use these notions in the next sections to study the function space of LCNs and to investigate the critical points of the associated loss functions.
A reader who is more interested in the applications of these concepts to LCNs might skip this section on a first read and return to it later to better understand the structure of repeated roots in polynomials.

The \emph{discriminant} of a polynomial $p=a_n\x^n+a_{n-1}\x^{n-1}\y +\cdots+a_1 \x\y^{n-1} +a_0\y^n \in \mathbb{R}[\x,\y]$ is a polynomial in the coefficients $a_n,\ldots, a_0$ that vanishes if and only if $p$ has a (complex) double root \cite[Chapter~4]{10.5555/1197095}. 
The iterated singular loci of the discriminant hypersurface are the \emph{coincident/multiple root loci}.

\begin{definition} \rm 
For a partition $\lambda = (\lambda_1, \ldots, \lambda_r)$ of $k-1$,
the \emph{multiple root locus} $\Delta_{\lambda} \subset \mathbb{R}[\x,\y]_{k-1}$ is the set of homogeneous polynomials that have $r$ (not necessarily distinct) complex roots with multiplicities $\lambda_1, \ldots, \lambda_r$. That is, $\Delta_\lambda$ is the set of polynomials $P \in \mathbb{R}[\x,\y]_{k-1}$ that can be written as $P=Q_{1}^{\lambda_1} \cdots Q_r^{\lambda_r}$, where each $Q_i \in \CC[\x,\y]_1$ is a complex linear form.
\end{definition}
For instance, the discriminant hypersurface is $\Delta_{(2,1,\ldots,1)}$. 
The singular locus of a multiple root locus is a union of certain higher-order multiple root loci; see \cite{kurmann2012some} for a combinatorial description.
\footnote{For two partitions $\lambda$ and $\mu$ of $k-1$, $\Delta_{\mu}$  is contained in $\Delta_{\lambda}$  if and only if $\mu$ is a coarsening of $\lambda$. In \cite[Proposition 2.1(i)]{kurmann2012some} it is shown that $\Delta_{\mu}$ is contained in the singular locus of $\Delta_{\lambda}$ if and only if either there are at least two ways of obtaining $\lambda$ as a splitting of $\mu$ or there is a splitting such that one of the elements $\mu_i$ in $\mu$ is split into subsets which do not all have the same size. 
An illustrative example is given in~\cite[Example 2.4]{kurmann2012some}.} 
We are also interested in distinguishing between real and complex roots. This more fine-grained information is captured by the  \emph{real root multiplicity pattern}.

\begin{definition} \rm
A homogeneous polynomial $P \in\RR[\x,\y]_{k-1}$ has \emph{real root multiplicity pattern}, short \emph{\rrmp}, $(\rho \mid \gamma) = (\rho_1, \ldots, \rho_r \mid \gamma_1, \ldots, \gamma_c)$ if it can be written as \begin{equation}
\label{eq:realFactorization}
P = p_1^{\rho_1} \cdots p_r^{\rho_r} q_1^{\gamma_1} \cdots q_c^{\gamma_c}, \end{equation}
where $p_i \in \RR[\x,\y]_1$ and $q_j \in \RR[\x,\y]_2$ are irreducible and pairwise linearly independent. 
\end{definition}

Recall that each factor $q_j$ has two complex conjugate roots. 
Polynomials with \rrmp\, $(\rho \mid \gamma)$ are smooth points on the multiple root locus $\Delta_{\lambda_{\rho|\gamma}}$, 
where $\lambda_{\rho|\gamma} = (\rho_1, \ldots, \rho_r, \gamma_1, \gamma_1, \ldots, \gamma_c, \gamma_c)$.

\begin{example}\rm \label{ex:rrmps}
We describe the possible \rrmp\, of a quadratic, cubic, and quartic polynomial.
The discriminant polynomial 
$\Delta$ of  $a\x^2+b\x\y+c\y^2$ distinguishes three \rrmp's:
\vspace{.2cm}

\begin{center}
\small 
\begin{tabular}{ll} 
{$11|0$}& 2 distinct real roots: $\Delta = b^2-4ac > 0$. \\
{$2|0$}& 1 double real root: $\Delta =0$. \\
{$0|1$}& no real roots: $\Delta<0$. 
\end{tabular}
\end{center}

\vspace{.2cm}

\noindent
For  cubic polynomials $a\x^3+b\x^2\y+c\x\y^2+d\y^3$ with real coefficients, there are four \rrmp's shown below, discussed in \cite{ARNON198837,BlinnHowto,Gonzalez2015RootCO}. 
\vspace{.2cm}

\begin{center}
\small 
\begin{tabular}{ll} 
    {$111|0$} & 3 distinct real roots: $\Delta = b^2 c^2 - 4 a c^3 -4 b^3 d - 27 a^2 d^2 + 18 a b c d  > 0$. \\
     {$12|0$} & 1 single real root and 1 double real root: $\Delta = 0$.\\
     {$3|0$} & 1 triple real root: $\delta_1 = 3ac-b^2 = 0$ and $\delta_2 = 9ad-bc= 0$ and $\delta_3 = 3bd-c^2= 0$. \\
    {$1|1$} & 1 real root and 2 complex conjugate roots: $\Delta < 0$. 
\end{tabular} 
\end{center}
\vspace{.2cm}

\noindent
Following \cite{ARNON198837}, a polynomial $f_n=\x^n+a_1\x^{n-1}\y+a_2\x^{n-2}\y^2+ \cdots +a_n\y^n$  can be translated to a polynomial $g_n=f_n(\x-\nicefrac{a_1}{n}\y,\y) = \x^n+b_2\x^{n-2}\y^2+b_3\x^{n-3}\y^3+\cdots+b_n\y^n$. The two polynomials $f_n$ and $g_n$ have the same \rrmp. 
For a quartic polynomial $g_4=\x^4+p\x^2\y^2+q\x\y^3+r\y^4$, letting 
\begin{equation*}
\begin{split}
\delta(p,q,r) =& 256 r^3 - 128p^2r^2 + 144 pq^2r +16p^4r -27q^4 -4p^3q^2\\     
\delta'(p,q,r) =& 8pr - 9q^2 -2p^3, 
\end{split}
\end{equation*}
where $\delta$ corresponds to the discriminant, the different \rrmp's are as follows: 

\begin{center}
\begin{tabular}{ll} 
     {$1111|0$} & 4 distinct real roots: $\delta>0$ and $\delta'>0$ and $p<0$. \\  
     {$112|0$} & 2 distinct real roots and 1 double real root: $\delta=0$ and $\delta'>0$ and $p<0$. \\  
     {$22|0$} & 2 real double roots: $\delta=0$ and $\delta'=0$ and $p<0$ and $q=0$. \\ 
     {$13|0$} & 1 single real root and 1 triple real root: $\delta=0$ and $\delta'=0$ and $p<0$ and $q\neq0$. \\ 
     {$4|0$} & 1 quadruple real root: $\delta=0$ and $\delta'=0$ and $p=0$. \\ 
     {$11|1$} & 2 distinct real roots and 2 distinct complex conjugate roots: $\delta<0$. \\ 
     {$2|1$} & 1 double real root and 2 distinct complex conjugate roots: $\delta=0$ and $\delta'<0$. \\ 
     {$0|2$} & 2 complex conjugate double roots: $\delta=0$ and $\delta'=0$ and $p>0$. \\ 
    {$0|11$} & 4 distinct complex roots: $\delta>0$ and ($\delta'\leq 0$ or $p>0$). 
\end{tabular}
\end{center}
\end{example}

\section{Function space of linear convolutional networks}
\label{sec:geometry}

We show that the function space of an LCN is a semi-algebraic set, 
discuss its dimension, and whether it is filling (\Cref{def:filling}).

\subsection{Convolutional matrices of stride one} 

We first focus on one-dimensional convolutions and stride one.  
Recall that a semi-algebraic set is a solution set to a finite set of polynomial equations and polynomial inequalities, or a finite union of such sets.

\begin{theorem}%
\label{thm:fillingCircular}
Let $(\bm d, \bm k, \bm s)$ be an LCN architecture with $\bm s= (1,\ldots,1)$ and depth $L$. 
Then $\sM_{\bm d, \bm k, \bm s}$ is a full-dimensional semi-algebraic subset of $\sM_{(d_0, d_L), k, s}$, where 
$k= \sum_{i=1}^L k_i - L + 1$ and $s=1$. 
In particular, $\sM_{\bm d, \bm k, \bm s}$ is cut out from $\sM_{(d_0, d_L), k, s}$ by inequalities only. 
Moreover, the architecture is filling if and only if at most one of the filter sizes $k_1,\ldots, k_L$ is even. 
\end{theorem}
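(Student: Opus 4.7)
The plan is to translate everything into the polynomial language of \Cref{cor:polynomial_space_identification}, so that the image becomes $S := \pi(\sM_{\bm d,\bm k,\bm s}) = \{\, p = q_L \cdots q_1 : q_i \in \RR[\x,\y]_{k_i - 1}\,\} \subseteq \RR[\x,\y]_{k-1}$, with $k-1 = \sum (k_i - 1)$. Semi-algebraicity is immediate: $S$ is the image of the polynomial parameterization $\mu_{\bm k,\bm s}$, hence semi-algebraic by Tarski--Seidenberg. For full-dimensionality I would pick any $p_0 \in \RR[\x,\y]_{k-1}$ with $k - 1$ pairwise distinct real roots, partition those roots arbitrarily into blocks of sizes $k_1 - 1, \ldots, k_L - 1$, and take each $q_i$ to be the real polynomial with exactly the roots of the $i$-th block. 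Because the roots of a polynomial depend continuously on its coefficients, every $p$ in a small Euclidean neighborhood of $p_0$ still has $k - 1$ distinct real roots and therefore also factors as required. Hence $S$ contains an open subset of $\RR[\x,\y]_{k-1}$, so its Zariski closure is the full ambient vector space. This means no nontrivial polynomial equation vanishes on $S$; combined with semi-algebraicity, this shows that $S$ is carved out of the ambient space $\sM_{(d_0,d_L),k,s}$ by polynomial inequalities alone.

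For the filling characterization I would set $e := |\{i : k_i \text{ is even}\}|$ and note that $k - 1 \equiv e \pmod 2$. Given $p \in \RR[\x,\y]_{k-1}$, I factor it over $\RR$ into irreducible pieces and let $r$ be the number (with multiplicity) of real linear factors, so that $r + 2c = k - 1$, where $c$ counts the irreducible quadratic factors. A factorization $p = q_L \cdots q_1$ with $\deg q_i = k_i - 1$ corresponds exactly to distributing the irreducible real factors among $L$ groups; the group producing $q_i$ must contain a number $r_i$ of real linear factors with $r_i \le k_i - 1$ and $r_i \equiv k_i - 1 \pmod 2$. The minimum feasible total is $\sum_{k_i \text{ even}} 1 = e$, while the parity of $\sum r_i$ is forced to equal that of $e$, which already matches $r$. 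Hence such a distribution exists if and only if $r \ge e$.

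If $e \le 1$ this condition holds for every $p$: when $e = 0$ it is vacuous, and when $e = 1$ the degree $k - 1$ is odd, forcing $r$ to be odd and hence $\ge 1$. So the architecture is filling. Conversely, if $e \ge 2$ I would construct an obstructing polynomial: when $e$ is even, take $p$ to be a product of $(k-1)/2$ pairwise distinct real irreducible quadratics (so $r = 0 < e$); when $e$ is odd, take $p = \x \cdot q$ with $q$ such a product of $(k-2)/2$ quadratics (so $r = 1 < e$). In either case $r < e$ and $p \notin S$, so $S \subsetneq \RR[\x,\y]_{k-1}$.

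The main obstacle I anticipate is keeping the combinatorial bookkeeping tight: once the minimum-sum condition $r \ge e$ is met one must verify that the residual ``budget'' $r - e$, which is even, can actually be distributed among the groups in pairs without exceeding any capacity $k_i - 1$. This is not hard because the per-group remaining capacities are all even and sum to $k - 1 - e \ge r - e$, but it is the step most prone to off-by-one slips and deserves a careful greedy argument in the write-up.
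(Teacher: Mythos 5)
Your proposal is correct and follows essentially the same route as the paper: it reduces to the polynomial factorization picture of \Cref{cor:polynomial_space_identification}, establishes (inline) the criterion of \Cref{lem:realRoots} that membership is equivalent to having at least $e$ real roots via the same parity/capacity bookkeeping, uses the same counterexamples (no real roots, resp.\ one real root) for $e\ge 2$, and invokes Tarski--Seidenberg plus continuity of roots for semi-algebraicity and full-dimensionality. The only cosmetic difference is that the paper isolates the real-root criterion as a separate lemma, whereas you fold it into the main argument.
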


This result explains the difference between \Cref{example:minimal,example:larger}.
To prove it, we use \Cref{cor:polynomial_space_identification}:
We identify $\sM_{(d_0, d_L), k, s}$ with the space $\RR[\x,\y]_{k-1}$ of polynomials of degree  $k-1$, 
and the subset $\sM_{\bm d, \bm k, \bm s}$ with the polynomials that factor as a product of polynomials of degrees $k_1-1,\ldots, k_{L}-1$. 
Under this identification, we have the following.

\begin{lemma}
\label{lem:realRoots}
Let $\bm s= (1,\ldots,1)$. A polynomial $p \in \RR[\x,\y]_{k-1}$ 
corresponds to a convolutional matrix in $\sM_{\bm d, \bm k, \bm s}$ if and only if $p$ has at least $e := |\{k_i \colon k_i \mbox{ is even}\}|$ real roots (counted with multiplicity), \ie, if the rrmp $(\rho \, | \, \gamma)$ of $p$ satisfies $\sum_{i=1}^r \rho_i \ge e$. 
\end{lemma}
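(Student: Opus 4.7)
The plan is to translate the statement into a problem about factoring a degree $k-1$ bivariate homogeneous polynomial into real factors of prescribed degrees $k_1-1,\ldots,k_L-1$ (which is legitimate by \Cref{cor:polynomial_space_identification}), and then prove the two directions by a parity/counting argument on the roots.

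For the forward direction, suppose $p = q_L \cdots q_1$ with $q_i \in \RR[\x,\y]_{k_i-1}$. Whenever $k_i$ is even, the factor $q_i$ has odd degree, so (as a real homogeneous bivariate polynomial, equivalently a real univariate polynomial after dehomogenizing or in the worst case a factor of $\y$) it must have at least one real root. Counting real roots with multiplicity across all $L$ factors gives at least $e$ real roots for $p$.

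For the backward direction, let $p$ have $r$ real roots and $c$ pairs of non-real complex conjugate roots (all counted with multiplicity), so $r + 2c = k - 1 = \sum_{i=1}^L (k_i-1)$. The key numerical observation is that
\begin{equation*}
r \equiv \sum_{i=1}^L (k_i-1) \equiv e \pmod 2,
\end{equation*}
since $k_i - 1$ is odd precisely when $k_i$ is even. Combined with the hypothesis $r \geq e$, this means $r-e$ is a nonnegative even integer. I then need to partition the multiset of $k-1$ roots of $p$ into multisets $S_1,\ldots,S_L$ with $|S_i| = k_i - 1$ such that each $S_i$ is closed under complex conjugation; setting $q_i$ to be the product of the linear forms $(r_\x \y - r_\y \x)$ for $r \in S_i$ then yields real factors of the required degrees.

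The construction of the partition is the only combinatorial step. First place a single real root in each of the $e$ bins where $k_i$ is even, consuming $e$ real roots and leaving every bin with an even remaining capacity. Then pair up the remaining $r-e$ real roots into $(r-e)/2$ ``real pairs'' and take the $c$ complex conjugate pairs; these $(r-e)/2 + c = (k-1-e)/2$ pair-units of size two can be greedily assigned to fill the even residual capacities, since the totals match exactly. Each resulting $S_i$ is conjugation-closed, so $q_i \in \RR[\x,\y]_{k_i-1}$, and $\prod_{i} q_i$ agrees with $p$ up to a nonzero real scalar that we absorb into any one factor.

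The main obstacle, if there is one, is not in either individual step but in making sure the correspondence with \Cref{cor:polynomial_space_identification} handles the homogeneous setting cleanly, in particular that a ``root at infinity'' (a root of the form $[1:0]$, arising when a leading coefficient vanishes) is correctly counted as a real root and that the parity argument $r \equiv e \pmod 2$ is invoked as the bridge between the necessary condition $r \geq e$ and the constructive partitioning; once this parity identity is noted, the greedy assignment into bins of even residual capacity is routine.
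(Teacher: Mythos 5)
Your proposal is correct and follows essentially the same route as the paper's proof: reduce to a factorization problem via \Cref{cor:polynomial_space_identification}, get the forward direction from the fact that each odd-degree real factor contributes a real root, and prove the converse by distributing the irreducible (linear and quadratic) real factors of $p$ into bins of sizes $k_i-1$, seeding each odd-degree bin with one real linear factor first. The only difference is that you make explicit the parity identity $r \equiv e \pmod 2$ guaranteeing that the leftover real roots can be paired to fill the even residual capacities, a point the paper's proof leaves implicit when it asserts that the filling-up procedure works because the total degrees match.
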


\begin{proof} By \Cref{cor:polynomial_space_identification}, it suffices to show that a polynomial $p$ of degree $ k-1$ has at least $e$ real roots if and only if $p$ is a product of polynomials of degrees $k_1-1,\ldots, k_{L}-1$. 
One direction is immediate. A factorization of $p$ of the desired type  has $e$ odd-degree factors, and any polynomial $p$ with $e$ odd-degree factors has at least $e$ real roots. 

For the other direction, 
let $e'$ denote the number of real roots of $p$.
The irreducible factors of $p$ are $e'$ linear terms and $(k-1 - e')/2$ quadratic terms.
If $e' \geq e$, these factors can be multiplied to build a set of factors $q_1, \ldots, q_L$ of  degrees $k_1-1,\ldots,k_L-1$ as follows. We first assign one of the $e'$ linear terms to each of the $e$ factors $q_i$ of odd degree and then we use the remaining $(e'-e) + (k-1 - e')/2$ irreducible factors of $p$ to fill up the factors $q_1, \ldots, q_L$ until their degrees are indeed $k_1-1,\ldots,k_L-1$.
Note that this procedure works as the degrees of the irreducible factors of $p$ and the degrees of $q_1, \ldots, q_L$ both sum up to the same value~$k-1$.
\end{proof}

\begin{example} \rm
\label{ex:quartic}
When $\bm s = (1,\ldots,1)$ and $\bm k = (k_1,\ldots,k_L)$ are chosen such that the resulting filter size is $k = 5$, by \Cref{lem:realRoots} the matrices in $\sM_{\bm d, \bm k, \bm s}$ correspond to quartic polynomials with a particular real root structure, specified by the individual filter sizes $k_i$:
\begin{enumerate}[leftmargin=*]
    \item If $L=2$ and $(k_1,k_2)=(3,3)$, then every polynomial of degree four is in $\pi(\sM_{\bm d, \bm k, \bm s})$. 
    \item \label{ex:quartic2} If $L = 2$ and $(k_1, k_2) = (4,2)$, then a quartic polynomial $p$ corresponds to a matrix in the function space if and only if it has two or four real roots.
    Hence, the complement of the function space corresponds to all quartics with no real roots.
    \item \label{ex:quartic3} If $L = 3$ and $(k_1, k_2, k_3) = (3,2,2)$, this architecture has the same function space as the previous one. However, the parameterization of the function space is different: Here $\pi(\sM_{\bm d, \bm k, \bm s})$ arises by taking the product of two linear and one quadratic factor, whereas in the previous architecture one considers the product of a linear and a cubic factor. 
    \item \label{ex:quartic4} If $L=4$ and $(k_1, k_2, k_3, k_4) = (2,2,2,2)$, a polynomial of degree four is in $\pi(\sM_{\bm d, \bm k, \bm s})$ if and only if all its roots are real.  
\end{enumerate}
\end{example}

\begin{proof}[Proof of \Cref{thm:fillingCircular}] 
We first consider the case where all filter sizes are odd.
Using \Cref{lem:realRoots}, we see that $e=0$, so every polynomial of degree $k-1$ corresponds to a matrix in the function space $\sM_{\bm d, \bm k, \bm s}$.
If exactly one filter size is even, then $k-1$ is odd.
Hence, every polynomial $p$ of degree $k-1$ has at least one real root.
In \Cref{lem:realRoots} we have $e=1$, which shows that $p \in \pi(\sM_{\bm d, \bm k, \bm s})$.
We conclude that the architecture is filling if at most one filter size is~even. 
Finally, assume that two or more filter sizes are even. In this case, we consider a polynomial $p$ of degree $k-1$ that has zero (if $k-1$ is even) or one (if $k-1$ is odd) real root. Then $p$ does not belong to $\pi(\sM_{\bm d, \bm k, \bm s})$ by \Cref{lem:realRoots}. This shows that the architecture is not filling.
That the function space is semi-algebraic follows from the fact that it has a polynomial parametrization (by Tarski-Seidenberg). 
Finally, we argue that the set of polynomials representable by a given LCN is full dimensional in $\mathbb{R}[\x,\y]_{k-1}$. 
For this, it is enough to notice that small perturbations of the coefficients of a generic polynomial will not affect its root structure. 
We will discuss this in more detail in the following statements. 
\end{proof}

When the architecture of an LCN is not filling, it is interesting to study the boundary of the function space. 

\begin{proposition} \label{prop:boundaryCondition}
Let $(\bm d, \bm k, \bm s)$ be an LCN architecture with $e := |\{k_i \colon k_i \text{ is even}\}| \geq 2$ and $\bm s= (1,\ldots,1)$. 
In the Euclidean topology on $\sM_{(d_0,d_L), k, s}$, the function space $\sM_{\bm d, \bm k, \bm s}$ is closed and its boundary consists of all convolutional matrices that correspond to polynomials with \rrmp\, $(\rho \mid \gamma)$ satisfying 
$\sum_{i=1}^r \rho_i \geq e$ and $|\{\rho_i \colon \rho_i \mbox{ is odd}\}| \leq e-2$. 
\end{proposition}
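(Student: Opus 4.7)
My plan is to build on \Cref{lem:realRoots}, which identifies $\sM_{\bm d, \bm k, \bm s}$ with the set of polynomials $p \in \RR[\x,\y]_{k-1}$ having at least $e$ real roots counted with multiplicity. Closedness is then an immediate consequence of continuity of roots in the compact space $\CC\mathbb{P}^1$: if $p_n \to p$ with each $p_n$ having $\geq e$ real roots, then, after relabeling, the real roots of $p_n$ converge inside the closed subset $\RR\mathbb{P}^1$, so $p$ itself inherits at least $e$ real roots.

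For the boundary characterization, the key tool is a local perturbation analysis based on three observations: (a) simple real roots and simple complex conjugate root pairs of $p$ persist under small perturbations; (b) a real root of multiplicity $m$ splits into $m$ nearby roots whose real count has the same parity as $m$, since the complex descendants must come in conjugate pairs; and (c) this minimum real count, namely $0$ when $m$ is even and $1$ when $m$ is odd, is actually realized by the local perturbation $(\x - r\y)^m \mapsto (\x - r\y)^m + \varepsilon^m \y^m$ (indeed, setting $t = (\x - r\y)/\y$ gives $t^m = -\varepsilon^m$, which has zero real solutions for $m$ even and the single real solution $t = -\varepsilon$ for $m$ odd). Multiplying such local perturbations across all repeated real factors of $p$, while leaving the irreducible quadratic factors $q_j^{\gamma_j}$ unchanged, yields polynomials arbitrarily close to $p$ whose total number of real roots is exactly $N(\rho) := |\{\rho_i \colon \rho_i \text{ odd}\}|$. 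Hence $N(\rho)$ is precisely the minimum number of real roots over any neighborhood of $p$.

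Combining the ingredients, a polynomial $p \in \sM_{\bm d, \bm k, \bm s}$ lies on the boundary exactly when $\sum \rho_i \geq e$ (membership in $\sM$) and $N(\rho) < e$ (some nearby polynomial escapes $\sM$). To convert $N(\rho) < e$ into the stated bound $N(\rho) \leq e - 2$, I would invoke a parity argument: the total real root count satisfies $\sum \rho_i \equiv k - 1 = \sum_{i=1}^L (k_i - 1) \equiv e \pmod 2$, and clearly $\sum \rho_i \equiv N(\rho) \pmod 2$, so $N(\rho)$ has the same parity as $e$ and the strict inequality $N(\rho) < e$ is equivalent to $N(\rho) \leq e - 2$.

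The main obstacle is justifying (b) and (c) simultaneously for all roots of $p$, since the perturbation acts globally on the polynomial. I would handle this by a scale-separation argument: fix pairwise disjoint neighborhoods in $\CC\mathbb{P}^1$ around the distinct roots of $p$; by continuous dependence of roots on coefficients, all sufficiently small perturbations have their roots confined to these neighborhoods, so each local factor can be perturbed independently, (a) preserves the simple and complex factors inside their neighborhoods, and (c) realizes the prescribed splitting inside each repeated-root neighborhood.
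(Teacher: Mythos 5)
Your proof is correct and follows essentially the same route as the paper's: both rest on \Cref{lem:realRoots}, the parity identity $\sum_i \rho_i \equiv e \pmod 2$, and a perturbation splitting each repeated real factor $p_i^{\rho_i}$ into $\lfloor \rho_i/2\rfloor$ complex-conjugate pairs plus one real root when $\rho_i$ is odd. The only difference is organizational: you characterize the interior directly via the minimum real-root count $N(\rho)$ attainable in small neighborhoods (with an explicit perturbing polynomial and a root-clustering argument), while the paper argues with convergent sequences from the complement having constant \rrmp; these are the same argument read in opposite directions.
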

\begin{proof}
A sequence of polynomials, each having at least $e$ real roots counted with multiplicity, will also have at least $e$ real roots in the limit.
Hence, by \Cref{lem:realRoots}, the function space is closed. 
Using the identification $\pi$ with polynomials, the boundary  of $\sM_{\bm d, \bm k, \bm s}$ consists of all polynomials in $\sM_{\bm d, \bm k, \bm s}$ that are limits of sequences of polynomials in its complement.
\\ \indent
Given a sequence of polynomials $P^{(j)}$ of degree $(k-1) = \sum_{i=1}^L (k_i-1)$ in the complement  of $\sM_{\bm d, \bm k, \bm s}$ that converges to a polynomial in $\sM_{\bm d, \bm k, \bm s}$, we may assume (by restricting to a sub-sequence) that all $P^{(j)}$ have the same \rrmp\, $(\rho \mid \gamma)$. 
Since $e$ and $(k-1)$ have the same parity and $P^{(j)} \notin \sM_{\bm d, \bm k, \bm s}$, we have $\sum_{i=1}^r \rho_i \leq e-2$.
In particular, the number of odd $\rho_i$ is at most $e-2$.
In the limit of the $P^{(j)}$, pairs of complex conjugate roots might become real double roots (which does not affect the number of odd $\rho_i$) or distinct real roots might become equal (which cannot increase the number of odd $\rho_i$).
Hence, the limit of the $P^{(j)}$ has \rrmp\, $(\rho' \mid \gamma')$ with at most $e-2$ odd $\rho'_i$ and $\sum \rho'_i \geq e$ (since we assumed the limit to be in $\sM_{\bm d, \bm k, \bm s}$).
\\\indent
Conversely, given a polynomial $P$ with \rrmp\, $(\rho \mid \gamma)$ satisfying $|\{\rho_i \colon \rho_i \mbox{ is odd}\}| \leq e-2$ and $\sum_{i=1}^r \rho_i \geq e$, we write $\rho_i = 2\alpha_i+\beta_i$, where $\beta_i \in \{0,1\}$.
We consider its factorization as in \cref{eq:realFactorization} and perturb each factor $p_i^{\rho_i}$ to a polynomial with $\alpha_i$ pairs of complex conjugate roots, plus one real root if $\beta_i=1$. 
That way, $P$ gets perturbed to a polynomial $\tilde P$ with exactly $\sum \beta_i$ real roots. 
We have that $\sum \beta_i = |\{\rho_i \colon \rho_i \mbox{ is odd}\}| \leq e-2$, so $\tilde P \notin \sM_{\bm d, \bm k, \bm s}$. 
\end{proof}
\begin{corollary}
Let $(\bm d, \bm k, \bm s)$ be an LCN architecture with $\bm s= (1,\ldots,1)$ and two or more even filter sizes.
The Zariski closure of the Euclidean boundary of the function space $\sM_{\bm d, \bm k, \bm s}$ is the discriminant hypersurface $\pi^{-1}(\Delta_{(2,1,\ldots,1)})$.
\end{corollary}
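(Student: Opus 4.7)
The plan is to combine the explicit characterization of the Euclidean boundary from \Cref{prop:boundaryCondition} with the irreducibility of the discriminant hypersurface. Under the identification $\pi$, the discriminant hypersurface $\pi^{-1}(\Delta_{(2,1,\ldots,1)})$ consists of polynomials $P \in \RR[\x,\y]_{k-1}$ that possess at least one repeated (real or complex) root. The two directions of the equality then correspond to (i) showing every boundary polynomial has a repeated root, and (ii) showing the boundary is Zariski dense in the discriminant.

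For the inclusion ``boundary $\subseteq$ discriminant'', I would take a boundary polynomial with \rrmp\ $(\rho \mid \gamma)$ satisfying $\sum_i \rho_i \ge e$ and $|\{\rho_i \colon \rho_i \text{ odd}\}| \le e-2$, and split into cases on the parities of the $\rho_i$. If some $\rho_i$ is even, then $\rho_i \ge 2$ and $P$ has a real double root. Otherwise, every $\rho_i$ is odd, so $|\{\rho_i : \rho_i \text{ odd}\}| = r$ gives $r \le e-2$; combining this with $\sum_i \rho_i \ge e$ forces the average of the $\rho_i$'s to exceed $1$, so some $\rho_i \ge 3$ and again $P$ has a repeated real root. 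In every case $P \in \pi^{-1}(\Delta_{(2,1,\ldots,1)})$.

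For the reverse Zariski inclusion, I would exhibit a boundary stratum of full real dimension inside the discriminant. Concretely, consider the stratum $\mathcal{S}$ of polynomials with \rrmp\ $(\rho \mid \gamma) = (2, 1, \ldots, 1 \mid 1, \ldots, 1)$ where $\rho$ has one $2$ followed by $e-2$ ones and $\gamma$ has $c = (k-1-e)/2$ ones (the parity check $k-1 \equiv e \pmod 2$ is straightforward from $k-1 = \sum (k_i-1)$). This pattern satisfies the boundary conditions, so $\mathcal{S}$ lies in the Euclidean boundary. Parameter counting (one real double root, $e-2$ simple real roots, $c$ complex conjugate pairs, and a real scalar) shows $\mathcal{S}$ has real dimension $k-1$. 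Since the complex discriminant hypersurface is irreducible of complex dimension $k-1$ (a classical fact; see e.g.\ \cite[Chapter~4]{10.5555/1197095}), any real semi-algebraic subset of matching real dimension is Zariski dense in it. Hence the Zariski closure of $\mathcal{S}$, and therefore of the boundary, is the full discriminant hypersurface.

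The main obstacle is ensuring the stratum $\mathcal{S}$ is nonempty and has the claimed real dimension without parity issues; this reduces to the observation above that $k - 1 - e$ is even. Once that is in place, the rest is a clean case analysis plus the standard fact that a real semi-algebraic subset of an irreducible complex hypersurface whose real dimension equals the complex dimension is Zariski dense.
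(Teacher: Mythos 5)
Your proposal is correct and follows essentially the same route as the paper: the forward inclusion is the observation (via \Cref{prop:boundaryCondition}) that some $\rho_i>1$ on the boundary, and the reverse inclusion uses exactly the same witness stratum of polynomials with one real double root, $e-2$ distinct simple real roots, and the remaining roots non-real and simple. The only difference is presentational: where the paper tersely notes that such polynomials are smooth points of the (irreducible) discriminant, you spell out the dimension count and the Zariski-density argument explicitly, which fills in the same step in more detail.
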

\begin{proof}
Let $e := |\{k_i \colon k_i \text{ is even}\}|$.
Since the \rrmp\, $(\rho\mid\gamma)$ of any polynomial on the Euclidean boundary of $\sM_{\bm d, \bm k, \bm s}$ satisfies by \Cref{prop:boundaryCondition} that at least one of the $\rho_i$ is larger than $1$, the boundary is contained in the discriminant hypersurface. 
Moreover, any polynomial with one real double root, $e-2$ pairwise distinct real single roots, and all other roots non-real with multiplicity one, is both on the boundary of $\sM_{\bm d, \bm k, \bm s}$ and a smooth point of the discriminant.
\end{proof}
\begin{example}\rm \label{ex:boundaries}
The following table lists all non-filling architectures with stride one and where the end-to-end filter has size $3,4,5$ (up to permutations of the filters):
\begin{center}
    \small
    \begin{tabular}{c|ccc}
 $\bm k$ &function space $\mathcal M_{\bm k}$   & complement & Euclidean boundary  \\ \hline
 $(2,2)$ & $11|0$, $2|0$ & $0|1$ & $2|0$ \\
 $(2,2,2)$ & $111|0$, $12|0$, $3|0$ & $1|1$ & $12|0$, $3|0$ \\
 $(3,\!2,\!2)$ \!or\! $(4,\!2)$ & $1111|0$, $112|0$, $22|0$, $13|0$, $4|0$, $11|1$, $2|1$ & $0|2$, $0|11$ & $2|1$, $22|0$, $4|0$ \\
 $(2,2,2,2)$ & $1111|0$, $112|0$, $22|0$, $13|0$, $4|0$ & $11|1$, $2|1$, $0|2$, $0|11$ & $112|0$, $13|0$, $22|0$, $4|0$
 \\
 \hline 
\end{tabular} 
\end{center} 
\vspace{.35cm}

\noindent
For each architecture $\bm k$, the function space $\mathcal{M}_{\bm k}$ consists of all polynomials with one of the \rrmp's listed above. 
We described the possible \rrmp's for quadratic, cubic, and quartic polynomials in \Cref{ex:rrmps}.
Similarly, the complement and boundary of $\mathcal{M}_{\bm k}$ are as shown above.
\end{example}

When exactly two filter sizes are even, the non-filling geometry of $\sM_{\bm d, \bm k, \bm s}$ has a particularly simple description. 

\begin{proposition}
\label{prop:twoConvexCones}
Let $(\bm d, \bm k, \bm s)$ be an LCN architecture with $\bm s= (1,\ldots,1)$ and exactly two even filter sizes.
The complement of $\pi(\sM_{\bm d, \bm k, \bm s})$ in $\RR[\x,\y]_{k-1}$ is a union of two open convex cones, each consisting of all positive or respectively negative polynomials. 
\end{proposition}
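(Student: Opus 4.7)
The plan is to identify the complement of $\pi(\sM_{\bm d, \bm k, \bm s})$ explicitly as the set of strictly sign-definite polynomials and then check the cone, convexity, and openness properties directly. By \Cref{lem:realRoots}, with $e=2$, the complement consists exactly of those $p \in \RR[\x,\y]_{k-1}$ whose \rrmp\ satisfies $\sum \rho_i \leq 1$, i.e.\ $p$ has $0$ or $1$ real roots counted with multiplicity. So the first step is to rule out the case ``1 real root'' by a parity argument: since $k-1 = \sum_{i=1}^L (k_i - 1)$ is a sum of $L-2$ even integers and exactly $2$ odd integers, $k-1$ is even, and factoring $p$ into real irreducible linear and quadratic factors shows that the number of real roots of $p$ (with multiplicity) has the same parity as $k-1$, hence is even. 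Therefore the complement is precisely the set $\sP := \{ p \in \RR[\x,\y]_{k-1} : p \text{ has no real roots}\}$.

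Next I would decompose $\sP$ into $\sP_+ \cup \sP_-$, where $\sP_\pm$ consists of polynomials with $\pm p(x,y) > 0$ for all $(x,y) \in \RR^2 \setminus \{(0,0)\}$. Since $k-1$ is even, such a polynomial is invariant under $(x,y)\mapsto(-x,-y)$, and the restriction of $p$ to the connected set $\RR^2 \setminus \{(0,0)\}$ is continuous and nowhere zero; by the intermediate value theorem it has constant sign, giving the dichotomy $\sP = \sP_+ \sqcup \sP_-$. Both sets are nonempty (e.g.\ $\pm(\x^2+\y^2)^{(k-1)/2}$).

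Finally I would verify that $\sP_\pm$ are open convex cones. The cone and convexity properties are immediate from pointwise positivity: if $p,q \in \sP_+$ and $\lambda,\mu \geq 0$ are not both zero, then $\lambda p + \mu q$ is strictly positive on $\RR^2\setminus\{0\}$, so lies in $\sP_+$. Openness follows from compactness of the unit circle $S^1 \subset \RR^2$: for $p \in \sP_+$ there exists $c > 0$ with $p \geq c$ on $S^1$, and any $q$ sufficiently close to $p$ in the coefficient norm satisfies $q > 0$ on $S^1$, hence on all of $\RR^2\setminus\{0\}$ by homogeneity.

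I do not expect a real obstacle here; the only subtle point is the parity argument, which collapses the a priori two cases $\sum \rho_i \in \{0,1\}$ into the single case $\sum \rho_i = 0$ and crucially uses that \emph{exactly} two of the $k_i$ are even (so that the end-to-end degree $k-1$ is even).
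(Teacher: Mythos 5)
Your proof is correct and follows the same route as the paper: apply \Cref{lem:realRoots} with $e=2$, use the parity of $k-1$ to rule out the ``exactly one real root'' case, and identify the complement with the sign-definite polynomials. The paper's own proof stops after identifying the complement as the polynomials of even degree $k-1$ without real roots, leaving the cone, convexity, and openness verifications implicit; your write-up simply makes those standard checks explicit.
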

\begin{proof}
If exactly two filters have even size, then $k-1$ is even, and  a polynomial of degree $k-1$ is in $\pi(\sM_{\bm d, \bm k, \bm s})$ if and only if it has at least two real roots (by \Cref{lem:realRoots}). 
Hence, the complement of $\pi(\sM_{\bm d, \bm k, \bm s})$ consists of all polynomials of even degree $k-1$ without real~roots. 
\end{proof}
\noindent This behaviour is exhibited in \Cref{example:minimal}, illustrated on the left  of \Cref{fig:zwei}, and in \Cref{ex:quartic} cases 2 and 3 (see also \Cref{ex:boundaries} first and third rows). 
In general, if strictly more than two filter sizes are even, the complement of $\pi(\sM_{\bm d, \bm k, \bm s})$ is \emph{not} a union of two convex cones, as the following example demonstrates.

\begin{example}\rm
Consider the case $|\{k_i \colon k_i \mbox{ is even}\}| = 3$ with stride one. 
A polynomial of odd degree $k-1$ is in $\pi(\sM_{\bm d, \bm k, \bm s})$ if and only if $p$ has at least three real roots (by \Cref{lem:realRoots}).
Hence, the complement $\RR[\x,\y]_{k-1} \setminus \pi(\sM_{\bm d, \bm k, \bm s})$ is the set of all polynomials of degree $k-1$ with exactly one real root. 
Even when restricting ourselves to degree-$(k-1)$ polynomials with one real root such that the remaining degree-$(k-2)$ factor is positive, this set does not form a convex cone.
For instance, if $k-1 = 3$, consider the polynomials $(\x-\y)\x(\x+\y)+2\y^3$ and $(\x-\y)\x(\x+\y)-2\y^3$:
Both have exactly one real root such that the remaining quadratic factor is positive. 
However, their sum clearly has three real roots. 
\end{example}

\subsection{Generalization to larger strides}
\label{sec:largerStrides}

In this section, we consider one-dimensional convolutions that may have stride larger than one. In this setting, the identification with polynomials holds as described in  Remark~\ref{rem:polynomialCorrespondenceHigherStride}. 
Specifically, we write 
\begin{equation*}
\pi_s: \RR^k \longrightarrow \RR[\x,\y]_{(k-1)s}, \quad w \longmapsto w_{0} \x^{(k-1)s} + w_1 \x^{(k-2)s}\y^s + \cdots + w_{k-2} \x^s\y^{(k-2)s} + w_{k-1} \y^{(k-1)s}. 
\end{equation*} 
This generalizes the map $\pi = \pi_1$ defined in~\cref{eq:polynomial_identification}. 
If $W_2$ and $W_1$ are convolutional matrices with strides $s_2$ and $s_1$, then $\pi(W_2 W_1) = \pi_{s_1}(W_2) \pi(W_1)$. 
Note that $\pi_{s_1}$ is applied to $W_2$, while $s_2$ does not have any effect on the filter of $W_2 W_1$ (only on its stride which is given by $s_2 s_1$). 
Based on this fact, we deduce the following two results.

\begin{lemma}
\label{lem:specialCases}
\begin{enumerate}[leftmargin=*]
\item[a)] If $\bm s \!=\! (s_1,\ldots, s_L)$ and $\bm s' \!=\! (s_1,\ldots, s_{L-1},1)$, then
$\pi(\mathcal M_{\bm d, \bm k, \bm s}) = \pi(\mathcal M_{\bm d, \bm k, \bm s'})$.
\item[b)] If $\bm k = (k_1, \ldots, k_{L-1}, 1)$ and $\bm k' = (k_1, \ldots, k_{L-1})$, $\bm d' = (d_0, \ldots, d_{L-1})$, $\bm s' = (s_1, \ldots, s_{L-1})$,
then $\pi(\mathcal M_{\bm d, \bm k, \bm s}) = \pi(\mathcal M_{\bm d', \bm k', \bm s'})$. 
\end{enumerate}
\end{lemma}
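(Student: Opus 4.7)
The plan for both parts is to exploit the polynomial multiplication formula of \Cref{rem:polynomialCorrespondenceHigherStride}, which I will first extend inductively to arbitrary depth $L$: for $\overline{W} = W_L \cdots W_1$ with strides $s_L, \ldots, s_1$, one has
\[
\pi(\overline{W}) = \pi_{s_{L-1} \cdots s_1}(W_L) \cdot \pi(W_{L-1} \cdots W_1),
\]
so that $\pi(\overline{W})$ depends on $s_1, \ldots, s_{L-1}$ but not on $s_L$. This extension is a routine induction on $L$, applying the two-layer formula to $W_L$ and $\tilde W := W_{L-1} \cdots W_1$ (whose stride is $s_{L-1} \cdots s_1$ by \Cref{prop:nonzerodiagsExtension}). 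Intuitively, the stride of the outermost layer only affects the stride of the resulting convolutional matrix, not its filter and hence not the associated polynomial.

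For part (a), the extended formula immediately gives that $\pi(\overline{W})$ is unchanged when $s_L$ is replaced by $1$, provided all filters and earlier strides are kept fixed. Letting the filters $w_1 \in \RR^{k_1}, \ldots, w_L \in \RR^{k_L}$ vary, the images $\pi(\sM_{\bm d, \bm k, \bm s})$ and $\pi(\sM_{\bm d, \bm k, \bm s'})$ therefore coincide.

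For part (b), a filter $w_L$ of size $k_L = 1$ is a single scalar $w_{L,0}$, so $\pi_{s_{L-1} \cdots s_1}(W_L) = w_{L,0}$ is a constant polynomial. Hence $\pi(\overline{W}) = w_{L,0} \cdot \pi(W_{L-1} \cdots W_1)$, and as $(w_1, \ldots, w_L)$ ranges freely the image is $\{\, c \cdot p \,:\, c \in \RR, \; p \in \pi(\sM_{\bm d', \bm k', \bm s'})\,\}$. Since $\pi(\sM_{\bm d', \bm k', \bm s'})$ is closed under scalar multiplication (absorb $c$ into any one of $w_1, \ldots, w_{L-1}$), this set equals $\pi(\sM_{\bm d', \bm k', \bm s'})$. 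The only real subtlety in either part is the clean statement of the generalized multiplication formula; once this is in place, both conclusions follow directly.
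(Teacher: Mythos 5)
Your proposal is correct and follows essentially the same route as the paper: both rest on the identity $\pi(W_L\cdots W_1)=\pi_{s_{L-1}\cdots s_1}(W_L)\,\pi(W_{L-1}\cdots W_1)$, deducing (a) from its independence of $s_L$ and (b) from the fact that a size-one filter contributes only a constant factor, which can be absorbed by rescaling another filter. The only cosmetic difference is that the paper obtains the identity by a single application of the two-matrix formula to $W_L$ and the convolutional matrix $W_{L-1}\cdots W_1$ (using \Cref{prop:nonzerodiagsExtension}) rather than phrasing it as an induction.
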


\begin{proof}
This follows from the fact that $\pi(W_L \cdots W_1) = \pi_s(W_L) \pi_1(W_{L-1}\cdots W_1)$, where $s = s_{L-1} \ldots s_1$ is the stride of $W_{L-1}\cdots W_1$. Part a) holds since this fact is independent of $s_L$. Part~b) holds since $k_L=1$ implies that $\pi_s(W_L)$ is a degree-0 polynomial, which does not play a role in the factorization of the end-to-end filter. 
\end{proof}

\begin{corollary}
Let $s_{1} = \cdots = s_{L-1} = 1$ and $s_L$ arbitrary. 
Then the LCN architecture 
$(\bm d, \bm k, \bm s)$ satisfies the properties in \Cref{thm:fillingCircular} (with $s=s_L$) and \Cref{prop:boundaryCondition,prop:twoConvexCones}. 
\end{corollary}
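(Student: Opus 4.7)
The plan is to bootstrap everything from the all-stride-one case by using \Cref{lem:specialCases}(a). The key observation already present in the paper is that the filter of a composition $W_L \cdots W_1$ depends on the strides $s_1,\ldots,s_{L-1}$ but not on $s_L$; only the stride of the end-to-end convolution (and hence the output dimension $d_L$) is affected by $s_L$. So changing $s_L$ does not alter the set of attainable end-to-end filters, only the ambient in which we display them.

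First I would apply \Cref{lem:specialCases}(a) with $\bm s = (1,\ldots,1,s_L)$ and $\bm s' = (1,\ldots,1)$ to obtain
\[
\pi(\mathcal M_{\bm d,\bm k,\bm s}) \;=\; \pi(\mathcal M_{\bm d',\bm k,\bm s'}),
\]
where $\bm d'$ is the dimension vector compatible with $\bm s'$ through \cref{eq:dimensionConstraintsToeplitz} (which may differ from $\bm d$ only in the last coordinate $d_L$). Equivalently, the two architectures produce exactly the same set of end-to-end filters $\overline w \in \RR^k$. Then I would invoke \Cref{thm:fillingCircular}, \Cref{prop:boundaryCondition}, and \Cref{prop:twoConvexCones} for the all-stride-one architecture $(\bm d',\bm k,\bm s')$ to obtain the conclusions at the level of polynomials / filters.

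The remaining step is a bookkeeping translation between the two natural ambients. Both $\mathcal M_{(d_0,d_L),k,s_L}$ and $\mathcal M_{(d_0,d_L'),k,1}$ are $k$-dimensional real vector spaces, linearly parametrized by their filter $\overline w \in \RR^k$; under $\pi_{s_L}$ (respectively $\pi = \pi_1$) they are carried isomorphically onto the subspace of $\RR[\x,\y]$ spanned by $\x^{(k-1-i)s_L}\y^{is_L}$ (respectively $\x^{k-1-i}\y^i$). This is a linear homeomorphism that matches the Euclidean and Zariski structures on both sides, so ``full-dimensional semi-algebraic subset'', ``filling'', ``closed with boundary = polynomials of prescribed rrmp'', and ``complement = two open convex cones'' all transfer verbatim.

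The only real hurdle I anticipate is the bookkeeping around the dimension vectors: one must check that $\bm d'$ satisfying \cref{eq:dimensionConstraintsToeplitz} with $\bm s'$ exists and that the output sizes in the statements of the earlier results are not used in an essential way (only $d_0$, $d_L$, $\bm k$, and $s$ enter the conclusions, which is the reason \Cref{lem:specialCases} is stated in terms of $\pi$). Once this is checked, there is no new algebraic content beyond what was proved for stride one.
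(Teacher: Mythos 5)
Your proof is correct and follows the same route as the paper: the paper's entire argument is ``immediately implied by \Cref{lem:specialCases}~a),'' which reduces $\bm s = (1,\ldots,1,s_L)$ to the all-stride-one case and then cites \Cref{thm:fillingCircular}, \Cref{prop:boundaryCondition}, and \Cref{prop:twoConvexCones}. Your additional bookkeeping — adjusting the last coordinate of $\bm d$ and checking that the linear identification of the two ambient spaces $\mathcal M_{(d_0,d_L),k,s_L}$ and $\mathcal M_{(d_0,d_L'),k,1}$ via their common filter space $\RR^k$ transports all the relevant topological and semi-algebraic structure — is exactly the detail the paper leaves implicit, and it is handled correctly.
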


The corollary is immediately implied by \Cref{lem:specialCases}~a). 
This covers the case where only the last layer has an arbitrary stride.
Similarly, \Cref{lem:specialCases}~b) says that if the last filter size is one, we can understand the function space from the smaller architecture that only consists of the first $L-1$ layers. 
For the remaining cases, we obtain the following result. 

\begin{proposition} 
\label{prop:nonfillingstride}
If $k_L > 1$ and $s_i > 1$ for some $i \leq  L-1 $,  the LCN architecture is non-filling. 
In fact, the function space is a lower-dimensional semi-algebraic subset of~$\sM_{(d_0,d_L),k,s}$. 
\end{proposition}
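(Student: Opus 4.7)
The plan is to combine the polynomial factorization from Remark~\ref{rem:polynomialCorrespondenceHigherStride} with a dimension count based on the multi-linear scaling symmetries of the parameterization. The key ingredients are the identity $\pi(\overline W) = \pi_1(w_1) \cdot \pi_{t_2}(w_2) \cdots \pi_{t_L}(w_L)$, where $t_l := \prod_{m<l} s_m$ (so $t_1 = 1$), and the formula $k = k_1 + \sum_{l=2}^L (k_l-1) t_l$ for the end-to-end filter size from \Cref{prop:nonzerodiagsExtension}.

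Semi-algebraicity is immediate: since each $\pi_{t_l}$ is $\RR$-linear in $w_l$, the parameterization $\mu_{\bm d, \bm k, \bm s}$ is a polynomial map, so Tarski--Seidenberg applies exactly as in the proof of \Cref{thm:fillingCircular}. For strict lower-dimensionality I will exhibit an $(L-1)$-dimensional family of symmetries: by multi-linearity, $\mu_{\bm d, \bm k, \bm s}(c_1 w_1, \ldots, c_L w_L) = (\prod_l c_l)\,\mu_{\bm d, \bm k, \bm s}(w_1, \ldots, w_L)$, so the subgroup $\{\bm c \in (\RR^\ast)^L : \prod_l c_l = 1\}$ acts trivially on the image. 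The associated infinitesimal scalings act on disjoint blocks of coordinates and are therefore linearly independent, so the Jacobian of $\mu_{\bm d, \bm k, \bm s}$ has rank at most $\sum_l k_l - (L-1)$ at every point, giving $\dim \sM_{\bm d, \bm k, \bm s} \leq \sum_l k_l - (L-1)$.

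A direct substitution yields $k - \bigl(\sum_l k_l - (L-1)\bigr) = \sum_{l=2}^L (k_l-1)(t_l - 1)$, and under the hypotheses the $l = L$ summand is strictly positive, because $k_L - 1 \geq 1$ and $t_L = \prod_{m=1}^{L-1} s_m \geq s_i > 1$. Hence $\dim \sM_{\bm d, \bm k, \bm s} < k = \dim \sM_{(d_0,d_L),k,s}$, proving both non-filling and strict lower-dimensionality. The only step requiring care is the standard identification of the generic rank of the Jacobian with the image dimension of a polynomial map between real vector spaces; beyond this routine invocation no real obstacle is expected.
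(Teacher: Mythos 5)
Your proof is correct, and it reaches the same conclusion by a dimension count that is organized differently from the paper's. The paper collapses the first $L-1$ layers into a single convolutional matrix $W'$ with filter size $k'$ and stride $s'=s_1\cdots s_{L-1}$, normalizes the two factors $p'$ and $p_L$ to be monic, and observes that $(k'-1)+(k_L-1) < k'+(k_L-1)s'-1 = k-1$; this yields the bound $\dim\sM_{\bm d,\bm k,\bm s}\le k'+k_L-1$. You instead keep all $L$ layers, quotient by the full $(L-1)$-dimensional rescaling group in the fibers of $\mu$, and obtain the bound $\sum_l k_l-(L-1)$, together with the explicit deficit $k-\bigl(\sum_l k_l-(L-1)\bigr)=\sum_{l=2}^L(k_l-1)(t_l-1)$, which cleanly isolates the contribution of each layer whose cumulative stride $t_l$ exceeds one. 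Since $t_l\ge 1$ for all $l$, your bound is at least as sharp as the paper's, and it localizes the source of the codimension in the same place (the last summand, which is positive exactly under the stated hypotheses). Two small remarks: your justification that the scaling directions are linearly independent requires all $w_l\neq 0$, which suffices because only the generic rank of the Jacobian controls the image dimension (and at points with some $w_l=0$ the rank only drops further); and the appeal to the fact that the image dimension of a polynomial map equals its generic Jacobian rank is the semi-algebraic fiber-dimension theorem, which is indeed routine. Neither point is a gap.
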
 

\begin{proof} 
Consider the last product $W_L \cdot W'$ where $W' = W_{L-1} \cdots W_1$.
We write $k' = k_1 + \sum_{l=2}^{L-1} (k_l-1)\prod_{m=1}^{l-1} s_m $ for the filter size of $W'$ and $s' = s_1 \cdots s_{L-1} > 1$ for its stride.
Even if $W'$ is an arbitrary convolutional matrix with filter size and stride $(k',s')$ (\ie, even if the LCN architecture from the first $L-1$ layers is filling),
the product $W_L \cdot W'$ is \emph{not an arbitrary} convolutional matrix with filter size $k = k'+(k_L-1)s'$ and stride $s = s's_L$.
Otherwise, every polynomial $p \in \RR[\x,\y]_{k-1}$  could be factored into a real polynomial $p'$ of degree $k'-1$ in $\x,\y$ and a real polynomial $p_L$ of degree $k_L-1$ in $\x^{s'},\y^{s'}$, by \Cref{rem:polynomialCorrespondenceHigherStride}. 
Modding out global scaling, we may assume that the polynomial $p$ and its two factors $p'$ and $p_L$ are monic, so their remaining degrees of freedom (\ie, the number of their remaining coefficients) are $k-1$ for $p$, $k'-1$ for $p'$, and $k_L-1$ for $p_L$.
Since $k_L > 1$ and $s' > 1$, we have that $(k'-1)+(k_L-1) < k'+(k_L-1)s'-1 = k-1$,
so the monic polynomials $p'$ and $p_L$ do not have enough degrees of freedom for their product to yield all monic polynomials $p$.
This dimension count shows that $\dim (\mathcal M_{\bm d, \bm k, \bm s}) \leq k'+k_L -1 < k = \dim(\sM_{(d_0,d_L),k,s}). $
\end{proof}

For the non-filling architectures in \cref{prop:nonfillingstride}, 
$\sM_{\bm d, \bm k, \bm s}$ has a strictly smaller dimension than 
$\sM_{(d_L, d_0), k,s}$ and is thus cut out by polynomial equations and inequalities.
This is in contrast to the non-filling architectures appearing for stride one in \Cref{thm:fillingCircular}: in that case, 
$\sM_{\bm d, \bm k, \bm s}$ has the same dimension as 
$\sM_{(d_0,d_L),k,s}$ and is cut out only by polynomial inequalities. 

\begin{example} 
\rm
Consider the LCN with $\bm d = (5,2,1)$, $\bm k =(3,2)$, $\bm s =(2,1)$. 
Note $s_2=1$ is inconsequential by \Cref{lem:specialCases} a). 
This setting is similar to \Cref{example:larger} where we considered $\bm s = (1,1)$. In that case, the architecture was filling, but this is no longer the case for stride size $2$ as we will see. 
In the current example, the function space $\sM\in \mathbb{R}^{1\times 5}$ consists of matrices 
\begin{align}
\label{eq:strideTwo}
    \overline{W} = W_2 W_1 = \begin{bmatrix}
        d & e
    \end{bmatrix}  \begin{bmatrix} 
        a & b & c & 0 & 0 \\
        0 & 0 & a & b & c
    \end{bmatrix} 
    = \begin{bmatrix} 
        ad & bd & ae+cd & be & ce 
    \end{bmatrix}.
\end{align}
 We see that the coefficients in~\cref{eq:strideTwo} are given by
\begin{equation}\label{eq:poly_stride}
\pi(\overline W) = (d \x^2 + e \y^2)(a \x^2+b\x\y+c\y^2). 
\end{equation}
The architecture is filling if and only if every real quartic polynomial 
$A\x^4 + B\x^3\y + C\x^2\y^2 + D\x\y^3 + E\y^4$ 
can be factored into two real quadratic polynomials as above. 
However, it is easy to verify that the coefficients of the polynomial in~\cref{eq:poly_stride} 
satisfy the equation $AD^2+B^2E = BCD$. 
In fact, we can fully characterize the function space as follows: 
\begin{align}
    \label{eq:strideTwoFunctionSpace}
     \mathcal{M} = \{ (A,B,C,D,E) \in \mathbb{R}^5 \colon AD^2+B^2E = BCD \text{ and }  C^2 \geq 4AE \}.
\end{align}
Indeed, the coefficients in \cref{eq:strideTwo} satisfy $(ae+cd)^2-4adce = (ae-cd)^2 \geq 0$. 
For the other containment, we consider a tuple $(A, \ldots, E)$ in the right-hand side of \cref{eq:strideTwoFunctionSpace} and explicitly find a factorization as in~\cref{eq:poly_stride}. 
For the case $E=D=0$, set $a=A$, $b=B$, $c=C$, $d=1$, $e=0$. 
For the case $E=0, D\neq0$, set $a= C/D$, $b=1$, $c=0$, $d=B$, $e=D$, which works since $AD=BC$. 
It remains to show the case $E \neq 0$. We may assume without loss of generality  that $E=1$ and we set $c=1, e=1, b = D$. To determine $a$ and $d$, we distinguish 
four cases:
\begin{itemize}
    \item $D \neq 0, B \neq 0$: Set $d = B/D$ and $a = AD/B$.  
    \item $D \neq 0, B = 0$: Set $d = 0$ and $a = C$.  
    \item $D = 0, A \neq 0$: 
    At least one of the two values $\frac{C}{2} \pm \sqrt{\frac{C^2}{4}-A}$ is non-zero since $A \neq 0$. Set $d$ to be such a non-zero value and $a = A/d$. 
    \item $D = 0, A=0$: Set $a = C$ and $d=0$.  
\end{itemize}
Using the equation and inequality in the right-hand side of \cref{eq:strideTwoFunctionSpace}, we verify that these four cases provide factorizations as in \cref{eq:poly_stride}. 
\end{example}

\subsection{Higher dimensional convolutions}
\label{sec:higher_dimension}

We briefly discuss convolutions over inputs having a $D$-dimensional structure, $D>1$ (\eg, raster images). The output of each layer is now a tensor of order $D$. Thus, we identify $\RR^{d_l} \cong \RR^{\Lambda_l}$ with $\Lambda_l = [d^1_l] \times \cdots \times [d^D_l]$. Each filter is also a tensor $w_l \in \mathbb{R}^{\lambda_l}$ with $\lambda_l = [k^1_l]\times \cdots \times[k^D_l]$. The convolution of $x \in \RR^{\Lambda_{l-1}}$ with $w_l \in \RR^{\lambda_l}$ is 
\begin{equation}\label{eq:D_conv}
(\alpha_l(x))_i = \sum_{j\in \lambda_l} w_{l,j} \cdot x_{is_l+j}, \quad i \in \Lambda_l,
\end{equation}
where $s_l  \in \mathbb{Z}_{>0}$ is the stride of the convolution (cf.\ \cref{eq:convolutionDefinition} for 1D convolutions). The map~\eqref{eq:D_conv} is linear in $x$ and can be represented as a \emph{convolutional tensor} $T_l \in \RR^{\Lambda_{l} \times \Lambda_{l-1}}$ of order $2D$. The dimensions of the tensor $T_l$ satisfy~\eqref{eq:dimensionConstraintsToeplitz} along each dimension: $d_l^{h} = \frac{d_{l-1}^{h} - k_l^{h}}{s_l} + 1$.

\begin{example} \label{ex:tensorConv} \rm
Let $D=2$ and let us consider a filter $w$ of size $2 \times 2$ with stride one. 
The convolution of an input $ x= \left[\begin{smallmatrix}* & * \\ * \circ& * \circ\\ \circ& \circ\end{smallmatrix}\right]$ 
of size $3 \times 2$ with the filter $w$
yields an output of size $2 \times 1$:
Its first resp.\ last entry is the inner product of $w$ with the first resp.\ last four entries of $x$ (marked by $*$ resp. $\circ$). 
The associated convolutional tensor $T$ has size $2 \times 1 \times 3 \times 2$ and its entries (shown in two slices) are as follows:
\begin{align*}
    T_{00::} = \begin{bmatrix}
    w_{00} & w_{01} \\  w_{10} & w_{11} \\ 0 & 0 
    \end{bmatrix}
    \quad \text{ and }
    \quad 
    T_{10::} = \begin{bmatrix}
    0 & 0 \\ w_{00} & w_{01} \\  w_{10} & w_{11}
    \end{bmatrix}.
\end{align*}
\end{example}

In the following, we assume that all strides are one. Similar to the one-dimensional case, we can represent convolutions using polynomials. If $w \in \RR^{\lambda_l}$, we define
\begin{equation}\label{eq:poly_tensor}
\pi(w) = \sum_{i \in \lambda_l}  w_{i_1, \ldots, i_D} \, \x_1^{i_1} {\y_1^{k_l^1-1-i_1}} \cdots \x_D^{i_D} {\y_D^{k_l^D-1-i_D}}  \,\, \in \,\, \RR[\x_1,{\y_1},\ldots,\x_D, {\y_D}],
\end{equation}
where the weights of the filter are the coefficients of a multivariate polynomial  {that is multi-homogeneous, that is, it has degree $k_l^j-1$ in each pair $\x_j,\y_j$.}
{For instance, in \Cref{ex:tensorConv} we have 
$\pi(w) = w_{00} {\y_1\y_2} +w_{10} \x_1 {\y_2} + w_{01} {\y_1} \x_2 + w_{11} \x_1\x_2$.}
If $T$ is a convolutional tensor with filter $w$, we also write $\pi(T)$ for the polynomial in~\eqref{eq:poly_tensor}. 
The following is a direct generalization of~\Cref{prop:deep_poly_multiplication}. 

\begin{proposition} Consider a collection of convolutional tensors $T_1,\ldots, T_L$ with compatible sizes and stride one.  
The tensor $\overline T = T_L \circ \cdots \circ T_1$ corresponding to the composition of the associated linear maps~\eqref{eq:D_conv} is a convolutional tensor and satisfies $\pi(\overline T) =\pi(T_L) \cdots \pi(T_1)$.
\end{proposition}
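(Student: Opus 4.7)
The plan is to mimic the strategy of \Cref{prop:deep_poly_multiplication}: reduce to the two-layer case by induction on $L$, perform the multi-index analogue of the computation in \cref{eq:convolutionalMatricesComposition,eq:filterProduct} to establish that the composition is again a convolutional tensor, and then match the resulting filter coefficients to those obtained from multiplying the two polynomials defined in \cref{eq:poly_tensor}.

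For the inductive step, take two convolutional tensors $T_2, T_1$ with filters $w_2 \in \RR^{\lambda_2}$ and $w_1 \in \RR^{\lambda_1}$, both with stride one. Applying \cref{eq:D_conv} twice and interchanging the sums gives, for $i \in \Lambda_2$,
\begin{align*}
(\alpha_2 \circ \alpha_1)(x)_i
&= \sum_{j \in \lambda_2} w_{2,j} \sum_{\ell \in \lambda_1} w_{1,\ell}\, x_{i+j+\ell}
= \sum_{m} u_m\, x_{i+m}, \qquad u_m = \sum_{\substack{j \in \lambda_2,\,\ell \in \lambda_1 \\ j+\ell = m}} w_{2,j}\, w_{1,\ell},
\end{align*}
where the outer index $m$ ranges over the coordinate-wise Minkowski sum $\lambda_2 + \lambda_1 = [k_2^1+k_1^1-1]\times \cdots \times [k_2^D+k_1^D-1]$. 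This exhibits $\overline{T}=T_2 \circ T_1$ as a convolutional tensor (with stride one) whose filter $u$ has support of size $k_l^h = k_2^h + k_1^h - 1$ in each of the $D$ directions, which is consistent with~\eqref{eq:dimensionConstraintsToeplitz}.

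For the polynomial identification, I would simply expand the product $\pi(w_2)\pi(w_1)$ using~\cref{eq:poly_tensor}: the coefficient of the monomial $\prod_{h=1}^D \x_h^{m_h}\y_h^{(k_2^h+k_1^h-2)-m_h}$ in the product is exactly $\sum_{j+\ell=m} w_{2,j}\,w_{1,\ell}$, which matches $u_m$. Since $\pi(w_2)\pi(w_1)$ is multi-homogeneous with bidegree $(k_2^h+k_1^h-2, k_2^h+k_1^h-2)$ in each pair $(\x_h,\y_h)$, this is precisely $\pi(u) = \pi(\overline T)$. The induction on $L$ then closes the argument by associativity of the composition of linear maps and of polynomial multiplication.

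I do not anticipate a conceptual obstacle: the only place that requires care is the bookkeeping of multi-indices, in particular checking that the Minkowski-sum support matches the bidegrees of the polynomial product, and ensuring the stride-one hypothesis so that the shift $i+j+\ell$ appears linearly (for $\bm s \neq \bm 1$ one would need the analogue of \Cref{rem:polynomialCorrespondenceHigherStride}, which is not stated in the $D$-dimensional setting).
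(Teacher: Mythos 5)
Your proposal is correct and follows essentially the same route as the paper: the paper's proof is exactly the two-layer multi-index computation $(\alpha_2\circ\alpha_1)(x)_i=\sum_m u_m x_{i+m}$ with $u_m=\sum_{j+\ell=m}w_{2,j}w_{1,\ell}$, from which $\pi(u)=\pi(w_2)\pi(w_1)$ is read off, with the general case left to the (implicit) induction you spell out. Your additional bookkeeping of the Minkowski-sum support and the monomial degrees is a correct elaboration of the paper's final sentence.
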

\begin{proof} The proof follows from the  composition of two maps as in~\eqref{eq:D_conv} with stride one:
\begin{align*}
\begin{split}
(\alpha_2 \circ \alpha_1) (x)_i &= \sum_{j \in \lambda_2} w_{2,j} \cdot \alpha_1(x)_{i+j} = \sum_{j \in \lambda_2} w_{2,j} \cdot \sum_{\ell \in \lambda_1} w_{1,\ell} x_{(i + j)+\ell}\\ 
&= \sum_{j \in \lambda_2} \sum_{\ell \in \lambda_1} w_{2,j} w_{1,\ell} x_{i + (j + \ell)} = \sum_{m \in \mu} u_{m} \cdot x_{i + m},
\end{split}
\end{align*}
where $\mu = [k_1^1+k_2^1-1] \times \cdots \times [k_1^D + k_2^D -1]$ and $u \in \RR^\mu$ is the filter with entries
$
    u_m = \sum_{j \in \lambda_1,  \ell \in \lambda_2, j + \ell = m} w_{2,j} w_{1, \ell}.
$
From this expression it follows that $\pi(u) = \pi(w_2) \pi(w_1)$.
\end{proof}

\begin{corollary} 
\label{prop:tensorFunctionSpace}
Given an LCN architecture with $L$ layers, inputs of format $[d_0^1]\times \cdots[d_0^D]$, and stride one, we write $(k_i^1, \ldots, k_i^D)$ for the filter size in the $i$-th layer. 
The function space can be identified with the family of polynomials $P \in \RR[\x_1,{\y_1},\ldots,\x_D, {\y_D}]$ that can be factored as 
$P = Q_L \cdots Q_1$, 
{where $Q_i$ is homogeneous of degree $k_i^j-1$ in the variables $\x_j,\y_j$.}
\end{corollary}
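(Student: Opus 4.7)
The plan is to deduce the corollary from the preceding proposition, which already gives the crucial multiplicative compatibility $\pi(\overline T)=\pi(T_L)\cdots \pi(T_1)$ between composition of convolutional tensors and multiplication of their associated polynomials. The only additional thing I need is that the map $\pi$ restricted to filters of a fixed format $\lambda = [k^1]\times\cdots\times[k^D]$ is a bijection onto the space of multi-homogeneous polynomials of multi-degree $(k^1-1,\ldots,k^D-1)$ in $\RR[\x_1,\y_1,\ldots,\x_D,\y_D]$. This is immediate from the definition~\eqref{eq:poly_tensor}: every monomial $\x_1^{i_1}\y_1^{k^1-1-i_1}\cdots \x_D^{i_D}\y_D^{k^D-1-i_D}$ that appears in such a multi-homogeneous polynomial is indexed by a unique $i\in\lambda$, and the coefficient is exactly the filter entry $w_i$. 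In particular, $\pi$ identifies the vector space of convolutional tensors with a prescribed filter format with a well-defined finite-dimensional polynomial space.

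The argument then has two directions. For the forward direction, take any $\overline T$ in the function space, written as $\overline T = T_L\circ \cdots \circ T_1$ where each $T_i$ is a convolutional tensor whose filter has format $\lambda_i = [k_i^1]\times\cdots\times[k_i^D]$. Applying the preceding proposition gives $\pi(\overline T) = \pi(T_L)\cdots\pi(T_1)$, and by the bijection above each factor $\pi(T_i) =: Q_i$ is multi-homogeneous of degree $k_i^j-1$ in $\x_j,\y_j$ for each $j$. This exhibits $\pi(\overline T)$ as a product of the required form.

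For the reverse direction, suppose $P = Q_L\cdots Q_1$ with each $Q_i$ multi-homogeneous of the prescribed multi-degree. By the bijection between filters and multi-homogeneous polynomials, each $Q_i$ equals $\pi(w_i)$ for a unique filter $w_i\in\RR^{\lambda_i}$, which in turn determines a convolutional tensor $T_i$ of the correct shape (the shape constraints~\eqref{eq:dimensionConstraintsToeplitz} are assumed along every axis by hypothesis). Invoking the preceding proposition again, the composition $\overline T = T_L\circ\cdots \circ T_1$ lies in the function space and satisfies $\pi(\overline T) = \pi(T_L)\cdots\pi(T_1) = Q_L\cdots Q_1 = P$. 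Together with the forward direction, this shows that $\pi$ identifies the function space with the set of polynomials admitting the claimed factorization.

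I do not anticipate a genuine obstacle; the only bookkeeping step that warrants care is checking that the multi-degrees add up correctly so that $\pi(\overline T)$ is multi-homogeneous of degree $\sum_i (k_i^j-1)$ in $\x_j,\y_j$, and that the end-to-end filter format predicted by iterating the one-dimensional formula from \Cref{prop:nonzerodiagsExtension} coordinate-wise matches this multi-degree. Once that is verified, the bijectivity of $\pi$ on each fixed multi-degree stratum turns the corollary into a direct translation of the preceding proposition.
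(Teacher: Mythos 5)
Your argument is correct and is essentially the proof the paper intends: the corollary is stated without an explicit proof precisely because it follows immediately from the preceding proposition once one observes, as you do, that $\pi$ is a linear bijection between filters of a fixed format and multi-homogeneous polynomials of the corresponding multi-degree. Your check that the multi-degrees add up coordinate-wise is the only bookkeeping required, and it goes through as you describe.
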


The natural ambient space of the LCN function space is the set of polynomials $P\in \RR[\x_1,{\y_1},\ldots,\x_D, {\y_D}]$ 
that are homogeneous of degree $k^j-1$ in  $\x_j,\y_j$
where $k^j:= (k_1^j-1) + \cdots + (k_L^j-1)+1$. 
We denote this vector space by 
$\mathcal{M}_{(d_0,d_L),k,s}$, where 
$d_0 = (d^1_0,\ldots, d^D_0)$, $ d_L = (d^1_L,\ldots, d^D_L)$, $k=(k^1,\ldots, k^D)$, $s =(1,\ldots, 1)$. 
Using~\Cref{prop:tensorFunctionSpace}, we show that for higher-dimensional convolutions and at least two layers (and excluding trivial cases where filters have size one) the LCN function space is always a lower-dimensional semi-algebraic subset of $\mathcal{M}_{(d_0,d_L),k,s}$ and in particular is not filling.

\begin{corollary}
Consider an LCN architecture with stride one, $D > 1$, and $L > 1$, where no layer has trivial filter size $(1, \ldots, 1)$.
If there are distinct $j_1, j_2 \in \{ 1, \ldots, D \}$ such that $k^{j_1} > 1$ and $k^{j_2} > 1$, the architecture is not filling. In fact, the function space is a lower-dimensional semi-algebraic subset of $\mathcal{M}_{(d_0,d_L),k,s}$.
\end{corollary}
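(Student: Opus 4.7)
The plan is to leverage Corollary~\ref{prop:tensorFunctionSpace} to identify the function space with the set of multi-homogeneous polynomials in $\RR[\x_1,\y_1,\ldots,\x_D,\y_D]$ that admit a factorization $P = Q_L \cdots Q_1$ with prescribed multi-degrees $(k_l^1-1,\ldots,k_l^D-1)$. Its natural ambient space $\mathcal M_{(d_0,d_L),k,s}$ of multi-homogeneous polynomials of multi-degree $(k^1-1,\ldots,k^D-1)$ has dimension $\prod_{j=1}^D k^j$. Since the function space is the image of a polynomial parameterization, Tarski--Seidenberg makes it semi-algebraic, so it suffices to show that its dimension is strictly smaller than $\prod_j k^j$; non-filling follows immediately. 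The overall strategy is a dimension count analogous to Proposition~\ref{prop:nonfillingstride}.

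I would first bound the image dimension by the total parameter count $\sum_l \prod_j k_l^j$ minus $L-1$, accounting for the scaling redundancy $(w_1,\ldots,w_L) \mapsto (c_1 w_1,\ldots,c_L w_L)$ with $\prod_l c_l = 1$ sitting in every fibre of $\mu_{\bm d,\bm k,\bm s}$. Then I would reduce to the case $L = 2$ by grouping the first $L-1$ layers together: their image lies in the multi-homogeneous space of dimension $\prod_j k'^j$, where $k'^j := \sum_{l<L}(k_l^j - 1) + 1$, and composing with the last layer (modulo one remaining scalar) yields the bound $\dim \mathcal M_{\bm d,\bm k,\bm s} \le \prod_j k'^j + \prod_j k_L^j - 1$.

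The problem thus reduces to the combinatorial inequality $\prod_j k'^j + \prod_j k_L^j - 1 < \prod_j(k'^j + k_L^j - 1)$. Writing $a_j := k'^j - 1 \ge 0$ and $b_j := k_L^j - 1 \ge 0$, a direct expansion of both sides gives
\[
\prod_j (a_j + b_j + 1) - \prod_j(a_j+1) - \prod_j(b_j + 1) + 1 \;=\; \sum_{\substack{S,T \subseteq [D] \\ S,T \neq \emptyset,\; S \cap T = \emptyset}} \prod_{j \in S} a_j \prod_{j \in T} b_j,
\]
so the inequality is equivalent to producing distinct indices $j_1 \neq j_2$ with $a_{j_1} > 0$ and $b_{j_2} > 0$.

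The hypotheses deliver exactly this: non-triviality of the last filter gives $(b_j) \neq 0$; non-triviality of at least one of the first $L-1$ layers gives $(a_j) \neq 0$; and the existence of two indices with $k^j > 1$ forces the supports $A = \{j : a_j > 0\}$ and $B = \{j : b_j > 0\}$ to satisfy $|A \cup B| \ge 2$. A short split on whether $A = B$ (in which case any two elements of $A$ work) or $A \neq B$ (in which case one picks $j_1$ in the symmetric difference and $j_2$ in the opposite set) produces the required distinct pair. The main obstacle is correctly booking the $(L-1)$-dimensional scaling fibre in the dimension bound: without it, the inequality fails in edge cases such as Segre-style function spaces where the parameter count already equals the ambient dimension; with it, the inequality becomes precisely the non-emptiness statement above, which the hypotheses guarantee.
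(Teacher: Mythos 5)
Your proof is correct, but it takes a genuinely different route from the paper's. The paper argues that the function space is contained in the locus of polynomials that are \emph{reducible} over $\mathbb{C}$ (since $L>1$ and no filter is trivial, every representable polynomial factors into non-constant pieces), and then shows this locus is proper and Zariski-closed by proving that a generic polynomial in the ambient space is irreducible — via the Segre--Veronese embedding of $(\mathbb{P}^1)^n$, $n\geq 2$, and Bertini's theorem. You instead run an explicit dimension count in the spirit of \Cref{prop:nonfillingstride}: group the first $L-1$ layers, bound $\dim \mathcal{M}_{\bm d,\bm k,\bm s} \leq \prod_j k'^j + \prod_j k_L^j - 1$ using the one-dimensional scaling fibre of polynomial multiplication, and reduce the strict inequality against the ambient dimension $\prod_j(k'^j+k_L^j-1)$ to the combinatorial identity $\prod_j(a_j+b_j+1)-\prod_j(a_j+1)-\prod_j(b_j+1)+1=\sum_{S,T\neq\emptyset,\,S\cap T=\emptyset}\prod_{j\in S}a_j\prod_{j\in T}b_j$, whose positivity is exactly the hypothesis that two distinct coordinate directions are non-trivial. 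Both arguments are sound; the paper's is shorter given Bertini and isolates the conceptual point (reducibility is non-generic for multivariate polynomials, unlike the univariate case), whereas yours is elementary and self-contained, makes fully transparent where the hypothesis on $j_1\neq j_2$ enters, and yields a quantitative upper bound on $\dim\mathcal{M}$ that is in fact tight in the paper's worked example ($L=2$, all filters $2\times 2$: bound $4+4-1=7$ in a $9$-dimensional ambient space, matching the stated codimension $2$). Your remark about correctly booking the scaling fibre is well taken — without the $-1$ the inequality degenerates to equality in Segre-type cases — and your case analysis on the supports $A$ and $B$ closes the argument.
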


\begin{proof}

The statement is a consequence of the fact that multivariate polynomials are generically irreducible. 
Since at least two layers have non-trivial filter sizes, the function space is contained in the set of polynomials that are reducible over $\mathbb{C}$, which is Zariski closed and thus lower-dimensional in the ambient space $\mathcal{M}_{(d_0,d_L),k,s}$. 
As before, the fact that the function space is semi-algebraic follows from Tarski-Seidenberg. 

For completeness, we give a formal argument showing that a generic polynomial $P$ in the ambient space $\mathcal{M}_{(d_0,d_L),k,s}$ is irreducible. 
We  may assume without loss of generality that $k^j > 1$ for all $1 \le j \le n$ and that $k^j = 1$ for all $n < j \le D$. 
The latter means that the variables $\x_j, \y_j$ for $n<j\leq D$ do not appear in the polynomial $P$; see \cref{eq:poly_tensor}. 
Hence, the zero locus of $P$ is a hypersurface in $(\mathbb{P}^1)^n$ of multidegree $(k^1-1, \ldots, k^n-1)$. 
Given the assumptions of the architecture, note that $n \ge 2$. 
To proceed, we consider the Segre-Veronese embedding $(\mathbb{P}^1)^n \hookrightarrow \mathbb{P}^{N-1}$ that maps $(\x_j:\y_j)_{j=1}^n$ to the $N$-tuple of all monomials that are homogeneous of degree $k^j-1$ in each pair $\x_j,\y_j$. 
Under this map, the polynomial $P$ becomes linear in the new coordinates of $\mathbb{P}^{N-1}$. 
In turn, the zero locus of $P$ is a hyperplane section of $(\mathbb{P}^1)^n$ embedded in $\mathbb{P}^{N-1}$.  
Now, Bertini's Theorem \cite[Ch. II Thm. 8.18 + Ch. III Rem. 7.9.1]{hartshorne} states that, for smooth complex projective varieties of dimension at least two (in our case, $(\mathbb{P}^1)^n$ with $n\geq2$), generic hyperplane sections are irreducible. 
This implies that a generic polynomial $P$ in $\mathcal{M}_{(d_0,d_L),k,s}$ is irreducible over $\mathbb{C}$.
\end{proof}

\begin{example}\rm
Consider an LCN architecture with $3 \times 3$ matrices as inputs (\ie, $D=2$ and $d_0 = (3,3)$), $L=2$ layers, filter sizes $(k_1^1, k_1^2) = (2,2) = (k_2^1, k_2^2)$, and strides one.
The two convolutions are maps $\RR^{3 \times 3} \to \RR^{2 \times 2} \to \RR^{1 \times 1}$.
Their composition is a convolution with filter size $3 \times 3$.
Hence, the natural ambient space of the LCN function space is the 9-dimensional vector space of polynomials in two {pairs of} variables such that the degree in each {pair is two},
and the function space is the subset of polynomials that factor into two {polynomials that are linear in each pair}.
The function space has codimension two in the ambient space.
A \texttt{Macaulay2}~\cite{m2} computation reveals that its Zariski closure has degree 10 and that its vanishing ideal is minimally generated by 39 homogeneous polynomials of degree~6.
\end{example}

\section{Optimization} 
\label{sec:optimization-function} 
We study the optimization of an objective function, or \emph{loss}, using LCNs with 1D convolutions and stride one. 
A {loss} $\L(\theta)$ is a function of the filters  $\theta=(w_1,\ldots,w_L)$ that can be written as a composition $\L = \ell \circ \mu$, where  $\mu = \mu_{\bm{d},\bm{k},\bm{s}}$ is the polynomial map described in~\cref{eqn:Wbar} and $\ell$ is a smooth function in matrix space (so $\L$ only depends on the end-to-end convolutional matrix $\overline W = \mu(\theta)$). 
This includes not only the square loss discussed further in \Cref{sec:squareloss}, but also classification losses (such as cross-entropy) or any other smooth function  that depends only on the output of the LCN. A tuple of filters $\theta=(w_1,\ldots,w_L)$ is said to be \emph{critical} for the loss $\L$ if all partial derivatives of $\L$ vanish at $\theta$. 

In~\Cref{sec:critPts} we analyze critical points of the loss function, explaining their relation to polynomials with repeated roots. In \Cref{sec:training-dynamics}, we discuss the training dynamics and invariants in parameter space for LCNs, which allow us to formulate local Riemannian gradient dynamics in function space.

\subsection{Critical points in parameter and function space}
\label{sec:critPts}
In this section, we discuss how critical points of an LCN 
correspond to polynomials that have repeated roots. 
We provide the proofs of the following proposition and theorems at the end of this section.

\begin{proposition} 
\label{prop:critical-points}
Let $\L = \ell \circ \mu$ be a loss function for an LCN with stride one, where  $\ell$ is a smooth and convex function on the space of convolutional matrices. If $\theta = (w_1,\ldots,w_L)$ is critical for $\L$, then one of the following holds: 
\begin{enumerate}
    \item $\theta$ is a \emph{global minimum} for $\L$, or
    \item for some $i \neq j$, the homogeneous polynomials $\Pi(w_i)$ and $\Pi(w_j)$ have a common factor.
\end{enumerate}
\end{proposition}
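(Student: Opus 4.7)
My approach would be via the chain rule combined with convexity. Since $\L = \ell\circ\mu$, the chain rule gives $\nabla\L(\theta) = D\mu(\theta)^{\ast}\,\nabla\ell(\mu(\theta))$, so $\theta$ is critical if and only if the gradient $g := \nabla\ell(\mu(\theta))$ is orthogonal to the image of $D\mu(\theta)$ inside the ambient convolutional-matrix space $V := \sM_{(d_0,d_L),k,1}$.

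\textbf{Case split.} If $g = 0$, then convexity of $\ell$ on $V$ immediately yields that $\mu(\theta)$ is a global minimizer of $\ell$ on $V$, hence also on $\operatorname{Im}(\mu) \subseteq V$, so $\theta$ is a global minimizer of $\L$; this is alternative~(1). If instead $g \neq 0$, then $g$ is a nonzero vector in the orthogonal complement of $\operatorname{Im}(D\mu(\theta))$, so $\operatorname{Im}(D\mu(\theta))$ is a proper subspace of $V$. The proof then reduces to the key claim: \emph{if $\pi(w_1),\ldots,\pi(w_L)$ are pairwise coprime, then $D\mu(\theta)$ is surjective onto $V$.} Contrapositively, failure of surjectivity forces some pair $\pi(w_i), \pi(w_j)$ to share a common factor, giving alternative~(2).

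\textbf{Proof of the key claim (the main obstacle).} Using the identification $\pi$ from \Cref{cor:polynomial_space_identification}, the map $\mu$ becomes the polynomial multiplication $(q_1,\ldots,q_L)\mapsto Q := q_1\cdots q_L$, whose differential at $(q_i)$ is the $\RR$-linear map
\[
\Phi:\bigoplus_{i=1}^{L}\RR[\x,\y]_{k_i-1}\longrightarrow\RR[\x,\y]_{k-1},\qquad (r_1,\ldots,r_L)\longmapsto\sum_{i=1}^{L} r_i\cdot (Q/q_i).
\]
If some pair $q_i,q_j$ shares a common factor $h$ of positive degree, then $h \mid Q/q_m$ for every $m$ (removing a single factor from $Q$ cannot strip both $q_i$ and $q_j$), so $h$ divides every element of $\operatorname{Im}\Phi$, which is therefore proper. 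Conversely, assume pairwise coprimality. A tuple $(r_1,\ldots,r_L) \in \ker\Phi$ satisfies $\sum_i r_i/q_i = 0$ in $\RR(\x,\y)$. Decomposing each $r_i = c_i q_i + \tilde r_i$ with $\deg \tilde r_i < \deg q_i$ and invoking uniqueness of the partial fraction decomposition forces $\tilde r_i = 0$ for all $i$ and $\sum_i c_i = 0$. Hence $\dim\ker\Phi = L-1$, so $\dim\operatorname{Im}\Phi = \sum_i k_i - (L-1) = k = \dim V$, proving surjectivity. The main obstacle is precisely this kernel computation: the partial-fraction argument must be carried out in the homogeneous bivariate setting, which can be handled by first dehomogenizing in a direction of $\mathbb{P}^1$ that is generic with respect to all the $q_i$.
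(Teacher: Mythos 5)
Your proof is correct, and its skeleton (chain rule, convexity handling the case $\nabla\ell(\mu(\theta))=0$, and reduction to the claim that pairwise coprimality of the $\pi(w_i)$ forces surjectivity of $d\mu(\theta)$) is exactly the paper's. Where you diverge is in the proof of that key claim. The paper computes the \emph{image} of $d\mu(\theta)$ explicitly: \Cref{prop:rank_differential} identifies it with the set of multiples of $g=\gcd\bigl(\pi(\overline w)/\pi(w_1),\ldots,\pi(\overline w)/\pi(w_L)\bigr)$, using a homogeneous B\'ezout identity with degree bounds (\Cref{lemma:ideal_gcd}). You instead compute the \emph{kernel} of the differential: from $\sum_i r_i\,Q/q_i=0$ you dehomogenize generically, split off the multiples $c_iq_i$, and invoke uniqueness of partial fractions (valid here because pairwise coprimality keeps the irreducible denominators of distinct summands disjoint) to get $\ker\Phi=\{(c_1q_1,\ldots,c_Lq_L):\sum c_i=0\}$ of dimension $L-1$, whence surjectivity by rank--nullity since $\sum_i k_i-(L-1)=k$. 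Both arguments reduce to univariate polynomial algebra after a generic dehomogenization, and your partial-fraction computation is essentially dual to the paper's B\'ezout argument, but yours is more self-contained and slightly more elementary for the purposes of this one proposition. What the paper's route buys in exchange is the full description of $\operatorname{Im}(d\mu(\theta))$ at \emph{non}-coprime parameters, which is needed downstream (in \Cref{thm:criticalPointsAreOnMultipleRootLoci} and \Cref{thm:MultipleRootLocigivecriticalpoints}) to locate spurious critical points on the multiple root loci; your kernel count only certifies surjectivity in the coprime case and the easy non-surjectivity direction otherwise. One shared implicit assumption worth flagging: both your division by $q_i$ in $\RR(\x,\y)$ and the paper's quotients $\pi(\overline w)/\pi(w_i)$ tacitly require all filters to be nonzero.
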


Polynomials with multiple roots form in fact the boundaries of non-filling architectures (as shown in \Cref{prop:boundaryCondition}), however we will see that they arise as critical points more generally: \emph{Every} critical point $\theta$ for the loss of an LCN corresponds to a critical point $\mu(\theta)$ of $\ell$ restricted to one of the multiple root loci $\Delta_\lambda$, \ie, $\mu(\theta) \in \Crit(\ell |_{\Pi^{-1}(\Delta_{\lambda})} )$, {even if $\mu(\theta)$ does not necessarily lie on the boundary of the function space.}
In other words, $\mu(\theta)$ is a smooth point of $\Pi^{-1}(\Delta_\lambda)$ and the differential $d \ell$ at $\mu(\theta)$ vanishes on the tangent space $T_{\mu(\theta)}\Pi^{-1}(\Delta_\lambda)$.

\begin{theorem}
\label{thm:criticalPointsAreOnMultipleRootLoci}
Let $\L = \ell \circ \mu$ be a loss function for an LCN with stride one. Let $\theta = (w_1,\ldots,w_L) \in \Crit(\L)$ and let $\overline w = \mu(\theta) \in \RR^{k}$ be the end-to-end filter. Let $\lambda = (\lambda_1, \ldots, \lambda_r)$ be the partition of $k-1$ corresponding to the root structure of $\Pi(\overline w)$. Then $\overline w \in \Crit(\ell |_{\Pi^{-1}(\Delta_{\lambda})} )$. 
\end{theorem}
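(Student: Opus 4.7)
The plan is to exploit the chain rule together with an explicit tangent-space containment. By the chain rule, $d\L_\theta = d\ell_{\overline w}\circ d\mu_\theta$, so the hypothesis $\theta \in \Crit(\L)$ is precisely the statement that $d\ell_{\overline w}$ vanishes on $\mathrm{image}(d\mu_\theta) \subseteq \RR^k$. To conclude that $\overline w \in \Crit(\ell|_{\Pi^{-1}(\Delta_\lambda)})$, it therefore suffices to prove the inclusion
\[
T_{\overline w}\,\Pi^{-1}(\Delta_\lambda) \;\subseteq\; \mathrm{image}(d\mu_\theta).
\]
Via the polynomial identification from \Cref{prop:deep_poly_multiplication}, this is equivalent to showing $T_{\overline P}\Delta_\lambda \subseteq \mathrm{image}(dm_{(Q_1,\ldots,Q_L)})$, where $\overline P := \Pi(\overline w)$, $Q_l := \Pi(w_l)$, and $m\colon (Q_1,\ldots,Q_L) \mapsto Q_L\cdots Q_1$ is the polynomial multiplication map. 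This reduction replaces the awkward parameter-space picture by a clean problem about tangent spaces of factorization varieties.

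I would next describe both tangent spaces explicitly. Writing $\overline P = c\prod_{i=1}^{r}\ell_i^{\lambda_i}$ with pairwise linearly independent linear forms $\ell_i$ (possibly complex), differentiating with respect to $c$ and the coefficients of the $\ell_i$ shows that $T_{\overline P}\Delta_\lambda$ is spanned by the scaling direction $\overline P$ together with root-perturbation directions of the form $\ell_i' \cdot \overline P / \ell_i$, subject to the reality constraint that perturbations at a complex-conjugate pair of roots themselves be conjugate. On the other hand, the image of $dm$ at $(Q_1,\ldots,Q_L)$ is the Leibniz image
\[
\bigl\{\textstyle\sum_l \delta Q_l \prod_{l'\neq l} Q_{l'}\colon \delta Q_l \in \RR[\x,\y]_{k_l-1}\bigr\}.
\]

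The key step is the matching. At $\theta$ each factor decomposes as $Q_l = c_l\prod_i \ell_i^{\lambda_{i,l}}$ with $\sum_l \lambda_{i,l} = \lambda_i$. Given any desired root perturbation $\ell_i'$, I update the factors by $\delta Q_l := \lambda_{i,l}\, \ell_i' \cdot Q_l/\ell_i$—a genuine polynomial since $\ell_i \mid Q_l$ whenever $\lambda_{i,l} \geq 1$, and $\delta Q_l = 0$ otherwise. This choice lies in the Leibniz image and telescopes to $\bigl(\sum_l \lambda_{i,l}\bigr)\ell_i' \cdot \overline P/\ell_i = \lambda_i\, \ell_i' \cdot \overline P/\ell_i$, recovering the prescribed root-perturbation tangent vector. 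Similarly, rescaling any single $Q_l$ realises the scaling direction $\overline P$. Since these generators span $T_{\overline P}\Delta_\lambda$, the required inclusion follows. The main technical subtlety I expect is ensuring reality when some $\ell_i$ are non-real: this is handled by applying the construction to a conjugate pair of roots simultaneously with conjugate linear perturbations $\ell_i'$ and $\overline{\ell_i'}$, which produces real $\delta Q_l$ and hence a genuine real tangent vector of $\Pi^{-1}(\Delta_\lambda)$.
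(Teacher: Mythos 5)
Your proof is correct and shares the paper's overall skeleton --- reduce, via the chain rule, to the tangent-space containment $T_{\overline w}\Pi^{-1}(\Delta_\lambda)\subseteq\mathrm{Im}(d\mu(\theta))$ --- but you execute the key containment differently. The paper characterizes \emph{both} spaces as the degree-$(k-1)$ multiples of a single polynomial (the tangent space as multiples of $\prod_i q_i^{\lambda_i-1}$, the image of $d\mu$ as multiples of $g=\gcd(\overline w/w_1,\ldots,\overline w/w_L)$), which requires the homogeneous Bezout identity of \Cref{lemma:ideal_gcd} in both instances, and then concludes from the divisibility $g\mid\prod_i q_i^{\lambda_i-1}$. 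You instead pick explicit generators of the tangent space (the root-perturbation directions $\ell_i'\,\overline P/\ell_i$) and exhibit explicit preimages in the Leibniz image by distributing the perturbation over the factors with weights $\lambda_{i,l}$; this telescoping is a direct verification and entirely avoids Bezout for the image of $d\mu$, which is a genuine simplification of that half of the argument. What your route still owes is the claim that these generators actually \emph{span} $T_{\overline P}\Delta_\lambda$ rather than merely lie in it: since they arise as the image of the differential of a parameterization of $\Delta_\lambda$, one must add the dimension count $\dim\mathrm{span}\{\ell'\prod_{j\ne i}q_j\}=r+1=\dim\Delta_\lambda$ (a Lagrange-interpolation-type argument using that the $q_i$ are pairwise independent) to upgrade the inclusion to equality at the smooth point $\overline P$; the paper discharges the analogous obligation with \Cref{lemma:ideal_gcd}. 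Your treatment of reality (pairing conjugate roots with conjugate perturbations) is equivalent to the paper's device of parameterizing conjugate pairs by their real quadratic products $t_{j_l}=q_{j_l}q_{j_l+1}$, just phrased differently. With the spanning step made explicit, your argument is complete and arguably more elementary.
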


To reverse \Cref{thm:criticalPointsAreOnMultipleRootLoci}, we characterize which critical points on the multiple root loci come from critical points of a given LCN. 
\begin{theorem} 
\label{thm:MultipleRootLocigivecriticalpoints}
Let $\mathcal L = \ell \circ \mu$ be a loss function for an LCN with stride one.
Let  $\lambda$ be  a partition of $k-1$ and let $\overline w \in \Crit(\ell |_{\Pi^{-1}(\Delta_{\lambda})} )$. If there exists $\theta = (w_1,\ldots,w_L)$ such that $\mu(\theta) = \overline w$ and each polynomial $\Pi(w_1),\ldots,\Pi(w_L)$ has no repeated roots, then $\theta \in \Crit(\L)$.
\end{theorem}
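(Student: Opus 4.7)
The plan is to translate everything via the polynomial identification $\pi$ of \Cref{prop:deep_poly_multiplication}, invoke the chain rule $d\L|_\theta = d\ell|_{\mu(\theta)}\circ d\mu|_\theta$, and show that $\operatorname{im}(d\mu|_\theta) \subseteq T_{\pi(\overline w)} \Delta_\lambda$. Once this containment is in place, the criticality hypothesis $\overline w \in \Crit(\ell|_{\pi^{-1}(\Delta_\lambda)})$ implies that $d\ell|_{\pi(\overline w)}$ annihilates $T_{\pi(\overline w)}\Delta_\lambda$, and the chain rule then forces $d\L|_\theta = 0$, which is exactly criticality of $\theta$ for $\L$.

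First I would describe the tangent space explicitly. Writing the distinct (possibly complex conjugate-paired) linear forms corresponding to the roots of $\pi(\overline w)$ as $p_1,\ldots,p_r$ with multiplicities $\lambda_1,\ldots,\lambda_r$, we have $\pi(\overline w) = c\prod_{l=1}^r p_l^{\lambda_l}$ for some nonzero $c\in\RR$. Set $P_0 := \prod_{l=1}^r p_l^{\lambda_l-1}$. Differentiating the natural parameterization of $\Delta_\lambda$ by the roots together with the leading scalar (equivalently, applying the logarithmic derivative), every tangent vector takes the form $\sum_j \lambda_j \dot p_j \prod_{i\neq j} p_i^{\lambda_i} \cdot p_j^{\lambda_j-1} = P_0 \cdot h$ for some $h\in\RR[\x,\y]_r$, giving $T_{\pi(\overline w)}\Delta_\lambda \subseteq P_0\cdot\RR[\x,\y]_r$. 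Both sides have dimension $r+1$ (matching the $r$ root parameters plus one scalar of $\Delta_\lambda$ on the left, and $\dim\RR[\x,\y]_r = r+1$ on the right), so equality holds.

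Next I would analyze $\operatorname{im}(d\mu|_\theta)$. Under $\pi$, the map $\mu$ becomes polynomial multiplication $(q_1,\ldots,q_L) \mapsto q_1\cdots q_L$ with $q_i := \pi(w_i)$, and the Leibniz rule gives
\begin{equation*}
d\mu|_\theta(\dot q_1,\ldots,\dot q_L) \;=\; \sum_{i=1}^L \dot q_i \prod_{j\neq i} q_j.
\end{equation*}
The hypothesis that each $q_i$ has no repeated roots means that every $p_l$ divides each $q_i$ with multiplicity at most one, so the multiplicity $\lambda_l$ of $p_l$ in $\prod_j q_j$ is exactly the number of factors it divides. Removing one such factor $q_i$ therefore drops the multiplicity of $p_l$ by at most one, leaving it at least $\lambda_l - 1$. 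Hence $P_0$ divides $\prod_{j\neq i} q_j$ for every $i$, so every element of $\operatorname{im}(d\mu|_\theta)$ lies in $P_0\cdot\RR[\x,\y]_r = T_{\pi(\overline w)}\Delta_\lambda$. Combined with the first paragraph this finishes the proof.

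The main obstacle is the clean identification $T_{\pi(\overline w)}\Delta_\lambda = P_0\cdot\RR[\x,\y]_r$: the containment $\subseteq$ is routine Leibniz bookkeeping, but equality uses smoothness of $\Delta_\lambda$ at $\pi(\overline w)$ (which is where the implicit assumption that $\lambda$ reflects the actual root structure of $\pi(\overline w)$ enters, via the remark after \Cref{sec:discriminants}), together with a careful real-versus-complex dimension count when some $p_l$ come in complex conjugate pairs so that $P_0$ remains real. Once this structural lemma is secured, the divisibility-driven containment $\operatorname{im}(d\mu|_\theta) \subseteq T_{\pi(\overline w)}\Delta_\lambda$ is a one-line consequence of the no-repeated-roots assumption on each $\pi(w_i)$.
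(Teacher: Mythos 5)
Your proposal is correct and follows essentially the same route as the paper's proof: both rest on the tangent-space description $T_{\overline w}\Delta_\lambda=\bigl\{h\prod_{i=1}^r q_i^{\lambda_i-1}\colon h\in\RR[\x,\y]_r\bigr\}$ at a smooth point and on the observation that the no-repeated-roots hypothesis forces $\prod_{i=1}^r q_i^{\lambda_i-1}$ to divide $\overline w/w_j$ for every $j$, after which the chain rule $d\L=d\ell\circ d\mu$ gives criticality. The only (harmless) difference is that you establish just the containment $\operatorname{im}(d\mu|_\theta)\subseteq T_{\overline w}\Delta_\lambda$ directly from the Leibniz-rule generators, whereas the paper proves the full equality using the gcd characterization of $\operatorname{Im}(d\mu(\theta))$ from \Cref{prop:rank_differential}; the containment alone indeed suffices for this theorem.
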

Moreover, if the loss $\ell$ in function space is sufficiently generic (\eg, if the training data is sufficiently generic), then we do not expect there to be any further critical points of the LCN other than the ones described in \Cref{thm:MultipleRootLocigivecriticalpoints}.
We provide a formal argument for this assertion when $\ell=\ell_{X,Y}$ is the square loss (defined in \cref{eq:objective}) with generic training data.

\begin{theorem} 
\label{thm:noDoubleRootsInFilters}
Let $X\in\mathbb{R}^{d_0\times N}$ and $Y\in \mathbb{R}^{d_L\times N}$ be generic data matrices with $N \geq d_0$ and let $\mathcal L = \ell_{X,Y} \circ \mu$ be the square loss for an LCN with stride one.
Then every $\theta = (w_1,\ldots,w_L) \in \Crit(\L)$ satisfies that each polynomial $\Pi(w_1),\ldots,\Pi(w_L)$ has no repeated roots.
\end{theorem}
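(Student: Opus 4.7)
The plan is a stratified dimension count on the incidence variety in parameter space $\times$ data space. Writing $B \subseteq \Theta = \RR^{k_1} \times \cdots \times \RR^{k_L}$ for the semi-algebraic subset of parameters for which some $\Pi(w_l)$ has a repeated root, I aim to show that the projection to data space of $\mathcal{I} := \{(\theta, X, Y) : \theta \in B,\ \nabla \mathcal{L}_{X,Y}(\theta) = 0\}$ is contained in a proper algebraic subvariety, so generic $(X, Y)$ admits no critical $\theta \in B$. By the chain rule, $\theta$ is critical iff $\nabla \ell_{X,Y}(\overline w) \in (\operatorname{image} d\mu_\theta)^\perp$ where $\overline w = \mu(\theta)$. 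Under the identification of \Cref{cor:polynomial_space_identification}, $\operatorname{image} d\mu_\theta = \sum_{l=1}^L \bigl(\prod_{j\neq l} \Pi(w_j)\bigr) \cdot \RR[\x,\y]_{k_l-1}$; denote its dimension by $r(\theta)$. For $N \geq d_0$ and generic $X$, the map $Y \mapsto \nabla \ell_{X,Y}(\overline w)$ is a surjection onto $\RR^k$, so for each fixed $\theta$ the equations $\nabla \mathcal{L}_{X,Y}(\theta) = 0$ cut out a subvariety of codimension exactly $r(\theta)$ in data space.

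Next, stratify $B$ by the real root multiplicity patterns of the individual filters and by the shared-factor structure across layers. Let $S$ be one such stratum, and quotient by the $(L-1)$-dimensional scaling group $\{(c_l) : \prod_l c_l = 1\}$ acting coordinatewise on $(w_1, \ldots, w_L)$; this group lies in $\ker d\mu_\theta$. The key estimate is $\dim(S/\sim) < r(\theta)$ for a generic $\theta \in S$. This rests on two facts: first, $\mu$ is generically finite-to-one on $S/\sim$, since factorizations of a polynomial into components of prescribed degrees are finite up to rescaling, so $\dim(S/\sim) = \dim \mu(S)$; second, $\mu(S)$ is contained in the discriminant hypersurface of $\RR[\x,\y]_{k-1}$, because a repeated root of some $\Pi(w_l)$ forces $\overline w = \prod_j \Pi(w_j)$ to inherit a repeated root, while $\mathcal{M}$ itself is not contained in the discriminant (one can always perturb $\overline w$ in $\mathcal{M}$ to split a multiple root). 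Hence $\mu(S)$ is a proper subvariety of the local image of $\mu$ at $\mu(\theta)$, giving $\dim \mu(S) \leq r(\theta) - 1$. Combining with the codimension-$r(\theta)$ condition, the projection of $\mathcal{I} \cap (S \times \text{data space})$ to data space has dimension at most $\dim(\text{data space}) - 1$; a union over the finitely many strata of $B$ completes the argument.

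The main obstacle is verifying $\dim(S/\sim) < r(\theta)$ on strata where $\mu$ fails to be an immersion at $\theta$ — for instance when several filters share common factors simultaneously with some $\Pi(w_l)$ having a repeated root. There $r(\theta)$ drops below $k$ and one must check that $\dim(S/\sim)$ drops at least as fast. This is most naturally handled via a Bezout / ideal-theoretic computation of $r(\theta)$ in terms of the gcd structure of $\{\prod_{j\neq l}\Pi(w_j)\}_{l}$, together with an explicit parametrization of each stratum by its shared-factor data, which is where the bulk of the technical work will lie.
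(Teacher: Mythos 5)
Your overall architecture — incidence variety in parameter space $\times$ data space, stratification of the bad locus $B$, quotient by the scaling group, and a fiber-dimension count reducing everything to the single inequality $\dim(S/\!\sim) < r(\theta)$ — is sound and is essentially the paper's argument repackaged (the paper first reduces the data $(X,Y)$ to a pair (inner product $\Sigma$, target filter $u$) and then runs the same count in $(\theta,\overline w,u)$-space, sweeping over $\overline w\in\mathrm{Reg}(\Delta_\lambda)$ and bounding the union of affine normal spaces $\overline w + \mathrm{Im}(d\mu(\theta))^{\perp_\Sigma}$). The surjectivity of $Y\mapsto\nabla\ell_{X,Y}(\overline w)$ for generic $X$ with $N\ge d_0$, the finiteness of $\mu$ on $S/\!\sim$ (\Cref{lem:fiber}), and the finiteness of the stratification are all fine.

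The genuine gap is in your justification of the key estimate. You argue that $\mu(S)$ lies in the discriminant while the local image of $\mu$ does not, ``giving $\dim\mu(S)\le r(\theta)-1$.'' But the local image of $\mu$ near $\theta$ has dimension equal to the \emph{generic} rank of $d\mu$ near $\theta$, not the rank $r(\theta)$ at $\theta$ itself; these differ exactly on the strata where layers share common factors, which are unavoidable (e.g.\ $L=2$, $\bm k=(3,3)$, $w_1=q^2$, $w_2=qp$: the local image is $5$-dimensional, $r(\theta)=4$, and your argument only yields $\dim\mu(S)\le 4$ while you need $\le 3$). So the inference fails precisely on the strata you yourself flag as ``the main obstacle,'' and the proposal does not close them. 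The paper closes this by computing both sides explicitly: $\mathrm{Im}(d\mu(\theta))$ is the set of multiples of $g=\gcd(\Pi(\overline w)/\Pi(w_1),\ldots,\Pi(\overline w)/\Pi(w_L))$ (\Cref{prop:rank_differential}, via the homogeneous Bezout identity of \Cref{lemma:ideal_gcd}), while $T_{\overline w}\Delta_\lambda$ is the set of multiples of $\prod_i q_i^{\lambda_i-1}$, and $g$ divides $\prod_i q_i^{\lambda_i-1}$ \emph{properly} exactly when some $\Pi(w_i)$ has a repeated root (proofs of \Cref{thm:criticalPointsAreOnMultipleRootLoci,thm:MultipleRootLocigivecriticalpoints}). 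Since $\mu(S)\subseteq\Delta_\lambda$, this gives $\dim\mu(S)\le\dim\Delta_\lambda=k-\deg\bigl(\prod_i q_i^{\lambda_i-1}\bigr)<k-\deg(g)=r(\theta)$, which is the inequality you need. Your closing remark that the fix should go through a gcd/ideal-theoretic computation of $r(\theta)$ is pointing in exactly the right direction, but as written that computation — the heart of the proof — is missing.
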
 

To summarize, a critical point $\overline{w}$ of a generic loss $\ell$ on a multiple root locus $\Pi^{-1}(\Delta_\lambda)$ comes from a critical point of the LCN if and only if
$\Pi(\overline{w})$ can be factored according to the LCN architecture such that no factor has a double root.
Whether a polynomial with \rrmp\, $(\rho\mid\gamma)$ can be factored according to the LCN architecture without double roots in any of the factors depends on a purely combinatorial property of $(\rho\mid\gamma)$ and the partition $(k_1-1, \ldots, k_L-1)$ of $k-1$ that is given by the architecture.
Indeed, the factorization of the polynomial is equivalent to the placing of $\rho_i$ balls of size $1$ and color $i$ (for $1 \leq i \leq r$) and $\gamma_j$ balls of size $2$ and color $r+j$ (for $1 \leq j \leq c$) into $L$ bins of sizes $k_1-1, \ldots, k_L-1$ such that no bin contains two balls of the same color. If this placement of balls into bins is possible, we say that $(\rho\mid\gamma)$ is \emph{compatible} with the LCN architecture.

All in all, the previous \Cref{thm:criticalPointsAreOnMultipleRootLoci,thm:MultipleRootLocigivecriticalpoints,thm:noDoubleRootsInFilters} show that the critical points of an LCN with a generic loss $\ell$ correspond exactly to the critical points on multiple root loci with an \rrmp\, that is compatible with the architecture:

\begin{corollary}
\label{cor:rrmp}
Let $X\in\mathbb{R}^{d_0\times N}$ and $Y\in \mathbb{R}^{d_L\times N}$ be generic data matrices with $N \geq d_0$ and let $\mathcal L = \ell_{X,Y} \circ \mu$ be the square loss for an LCN with stride one.
Let $(\rho \mid \gamma)$ be such that $\lambda = \lambda_{\rho|\gamma} $ is a partition of $k-1$.
If $(\rho \mid \gamma)$ is compatible with the LCN architecture, then for every $\overline w \in \Crit(\ell |_{\Pi^{-1}(\Delta_{\lambda})} )$  with \rrmp\, $(\rho \mid \gamma)$ there exists some $\theta \in \mu^{-1}(\overline{w}) \cap \Crit(\L)$.
Otherwise, every $\overline w \in \Crit(\ell |_{\Pi^{-1}(\Delta_{\lambda})} )$  with \rrmp\, $(\rho \mid \gamma)$ satisfies $\mu^{-1}(\overline{w}) \cap \Crit(\L) = \emptyset$.
\end{corollary}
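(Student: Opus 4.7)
The plan is to deduce this directly from \Cref{thm:criticalPointsAreOnMultipleRootLoci,thm:MultipleRootLocigivecriticalpoints,thm:noDoubleRootsInFilters} by using the combinatorial reformulation of compatibility. The central observation is that the existence of a factorization $\Pi(\overline{w}) = \Pi(w_1)\cdots \Pi(w_L)$ with $\deg \Pi(w_i) = k_i-1$ and no factor having a repeated root is \emph{exactly} the balls-into-bins condition defining compatibility. Indeed, if $\Pi(\overline{w})$ has \rrmp\ $(\rho\mid\gamma)$, then in any such factorization each irreducible linear factor (occurring $\rho_i$ times) or irreducible real quadratic factor (occurring $\gamma_j$ times) must be distributed among the bins of capacities $k_1-1,\ldots,k_L-1$, and the ``no repeated root in any $\Pi(w_l)$'' condition translates to ``no bin receives two factors of the same color''. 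So compatibility and the existence of such a factorization are equivalent.

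For the first claim, I would assume $(\rho\mid\gamma)$ is compatible and let $\overline w \in \Crit(\ell|_{\Pi^{-1}(\Delta_\lambda)})$ have \rrmp\ $(\rho\mid\gamma)$. By the equivalence above, there exist filters $w_1,\ldots,w_L$ with $\mu(\theta) = \overline w$ for $\theta = (w_1,\ldots,w_L)$ and such that no $\Pi(w_l)$ has a repeated root. Then \Cref{thm:MultipleRootLocigivecriticalpoints} applies verbatim and yields $\theta \in \Crit(\L)$, giving the desired lift in $\mu^{-1}(\overline w) \cap \Crit(\L)$.

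For the converse, I would argue by contrapositive. Suppose $(\rho\mid\gamma)$ is \emph{not} compatible, and suppose for contradiction that some $\overline w \in \Crit(\ell|_{\Pi^{-1}(\Delta_\lambda)})$ with \rrmp\ $(\rho\mid\gamma)$ admits a preimage $\theta=(w_1,\ldots,w_L) \in \mu^{-1}(\overline{w}) \cap \Crit(\L)$. Since $X,Y$ are generic with $N \geq d_0$, \Cref{thm:noDoubleRootsInFilters} guarantees that none of the polynomials $\Pi(w_1),\ldots,\Pi(w_L)$ has a repeated root. But then the factorization $\Pi(\overline{w}) = \Pi(w_1)\cdots \Pi(w_L)$ realizes precisely the balls-into-bins placement that compatibility demands, contradicting the assumption that $(\rho\mid\gamma)$ is incompatible. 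Hence $\mu^{-1}(\overline{w}) \cap \Crit(\L) = \emptyset$.

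The proof is essentially bookkeeping once the combinatorial translation is made explicit; the real content lies in \Cref{thm:MultipleRootLocigivecriticalpoints,thm:noDoubleRootsInFilters}. The only potentially subtle point I would want to flag is that \Cref{thm:criticalPointsAreOnMultipleRootLoci} is not strictly needed for the corollary as stated, but it guarantees that the \rrmp's of critical LCN outputs are exhausted by the loci $\Pi^{-1}(\Delta_\lambda)$, and so the corollary does give a complete enumeration. I would also note explicitly that compatibility depends only on $(\rho\mid\gamma)$ and the architecture $(k_1,\ldots,k_L)$, not on the particular $\overline w$, so the dichotomy indeed holds uniformly over all critical points with a given \rrmp.
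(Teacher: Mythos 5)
Your proposal is correct and follows exactly the argument the paper intends: the corollary is stated as an immediate consequence of \Cref{thm:MultipleRootLocigivecriticalpoints,thm:noDoubleRootsInFilters} together with the balls-into-bins reformulation of compatibility given in the preceding paragraph, which is precisely how you assemble it (including the observation that \Cref{thm:criticalPointsAreOnMultipleRootLoci} serves only to show the enumeration is exhaustive). Nothing is missing.
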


\begin{example} 
\label{ex:partitionVsArchitecture} \rm
We compare the LCN architectures with $\bm k = (3,2,2)$ and $\bm k' = (4,2)$. We discussed in \Cref{ex:quartic} that the function space associated with these architectures is the same. The polynomial $\Pi(\overline{w}) = p_1 p_2^3$, where $p_i\in \mathbb{R}[\x,\y]_1$, has \rrmp\, $(13 \mid 0)$.
We can find a factorization compatible with the architecture $\bm k = (3,2,2)$ where each polynomial associated with a filter does not have any repeated roots: $\Pi(\overline{w}) = (p_1 p_2) \cdot p_2 \cdot p_2$.
In contrast, for $\bm k' = (4,2)$,  any factorization of $\Pi(\overline w)$ according to the architecture will be such that the cubic polynomial associated with the filter of size $4$ will have at least one pair of repeated roots.

For both architectures, we now list all compatible $(\rho \mid \gamma)$ by providing a feasible factorization; the remaining ones are marked with a dash: 

\noindent
    \begin{tabular}{c|ccccccccc}
        $\rho|\gamma$ & $1111|0$ & $112|0$ & $22|0$ & $13|0$ & $4|0$ & $11|1$ & $2|1$ & $0|2$ & $0|11$   \\ \hline
         \small $(3,2,2)$ & \small $p_1p_2 {\cdot} p_3 {\cdot} p_4$ & \small $p_1p_2 {\cdot} p_3 {\cdot} p_3$ & \small $p_1p_2 {\cdot} p_1 {\cdot} p_2$ & \small $p_1p_2 {\cdot} p_2 {\cdot} p_2$ & $-$ & \small $q_1 {\cdot} p_1 {\cdot} p_2$ & \small $q_1 {\cdot} p_1 {\cdot} p_1$ & $-$ & $-$ \\
         \small  $(4,2)$ & \small $p_1p_2p_3{\cdot}p_4$  & \small $p_1p_2p_3{\cdot}p_3$ & $-$
          & $-$  & $-$ & \small $p_1q_1{\cdot}p_2$ & \small $p_1q_1{\cdot}p_1$ & $-$ & $-$
    \end{tabular}
\end{example}

To prove the theorems above, we make use of the composition rule of differentials: 
\begin{equation*}
d \L = d \ell \circ d \mu_{\bm{d},\bm{k},\bm{s}}.
\end{equation*}
Even if the parameterization map $\mu$ is surjective (\ie, the function space $\mathcal{M}=\mathcal M_{\bm d, \bm k, \bm s}$ is filling), the differential $d \mu$ (equivalently, the Jacobian matrix of $\mu$) might not have maximal rank everywhere. This implies that the parameterization may give rise to critical points for $\L$ in addition to the global minimum even if the function space $\mathcal M$ is a vector space and $\ell$ is convex. 
Following~\cite{geometryLinearNets}, we call this kind of critical points \emph{spurious}. 
In contrast, a critical point $\theta$ for $\L$ is $\emph{pure}$ if $\mu(\theta)$ is a critical point of the loss in function space, \ie, of $\ell |_{\mathcal{M}}$. 

Given an end-to-end filter $\overline w \in \RR^k$ of an LCN, we describe the set of parameters $\theta$ such that $\mu(\theta) = \overline{w}$, \ie, the fiber of the parameterization.
We say that two parameters $\theta = (w_1,\ldots, w_L)$ and $\theta'=(w_1',\ldots,w_L')$ are equivalent up to scalar factors if there exists  $(\kappa_1,\ldots,\kappa_L) \in (\RR^*)^L$ with $\prod_{i=1}^L \kappa_i = 1$ such that $w_i = \kappa_i w_i'$.

\begin{lemma} \label{lem:fiber}
Let $\overline w \in \RR^k \setminus\lbrace 0 \rbrace$ be an end-to-end filter of an LCN with stride one. Then the fiber $\mu^{-1}(\overline w)$ consists of a finite set of scaling equivalence classes. Moreover, either all points in the same equivalence class are critical or none are.
\end{lemma}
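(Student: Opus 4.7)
The plan is to translate everything to polynomials via the identification $\pi$, use unique factorization in $\mathbb{R}[\x,\y]$ to handle the fiber description, and use multilinearity of $\mu$ in the $L$ filter arguments to handle the criticality claim.

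First, I would set up the identification. By \Cref{cor:polynomial_space_identification}, a tuple $\theta=(w_1,\ldots,w_L)$ satisfies $\mu(\theta)=\overline w$ if and only if $\pi(\overline w) = \pi(w_1) \cdots \pi(w_L)$ in $\mathbb{R}[\x,\y]_{k-1}$. Thus $\mu^{-1}(\overline w)$ is in bijection with the set of ordered factorizations of $P := \pi(\overline w)$ into polynomials $p_i \in \mathbb{R}[\x,\y]_{k_i-1}$. Since $\overline w \neq 0$, $P$ is a nonzero homogeneous polynomial, and in particular every $p_i$ in such a factorization must be nonzero.

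Next, I would establish finiteness. Because $\mathbb{R}[\x,\y]$ is a UFD, $P$ admits a factorization into real irreducibles (each of degree $1$ or $2$) that is unique up to ordering and scalar factors. Any factorization $P = p_1 \cdots p_L$ with prescribed degrees $k_i-1$ corresponds to partitioning these irreducible factors into $L$ labeled groups whose degree sums equal $k_1-1,\ldots,k_L-1$; this is a finite combinatorial choice. For each such partition, the factors $p_i$ are determined up to multiplication by constants $c_i \in \mathbb{R}^*$ with $\prod_i c_i = 1$. Transporting this back through the linear isomorphism $\pi$, each partition corresponds to exactly one scaling equivalence class of $\theta = (w_1,\ldots,w_L)$ in the sense of the lemma. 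Hence $\mu^{-1}(\overline w)$ is the disjoint union of finitely many scaling equivalence classes.

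For the second assertion, I would exploit the fact that $\mu$ is multilinear: $\mu(\kappa_1 w_1, \ldots, \kappa_L w_L) = (\prod_i \kappa_i)\,\mu(w_1,\ldots,w_L)$, and more importantly its partial in the $i$-th argument satisfies
\[
\partial_{w_i}\mu(\kappa_1 w_1,\ldots,\kappa_L w_L)[\delta] \;=\; \Big(\prod_{j\neq i}\kappa_j\Big)\,\partial_{w_i}\mu(w_1,\ldots,w_L)[\delta].
\]
On the equivalence class $\prod_i \kappa_i = 1$, the prefactor equals $\kappa_i^{-1}$. By the chain rule, $\partial_{w_i}\mathcal L(\theta') = \kappa_i^{-1}\,\partial_{w_i}\mathcal L(\theta)$, where $\theta' = (\kappa_1 w_1,\ldots,\kappa_L w_L)$. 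Since all $\kappa_i \neq 0$, all partials vanish at $\theta'$ if and only if they vanish at $\theta$, so criticality is a property of the whole equivalence class.

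The proof is essentially routine once the polynomial picture and multilinearity are in hand; the only mildly delicate point is verifying that a scaling equivalence class really is a single fiber component of the ordered factorization (ensuring we do not overcount or undercount), which is handled by the observation that $\pi$ is a linear isomorphism in each factor and that rescaling $p_i \mapsto c_i p_i$ with $\prod c_i = 1$ is precisely the scaling equivalence on parameters.
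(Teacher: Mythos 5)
Your proof is correct. The finiteness argument is essentially the paper's: both identify the fiber with ordered factorizations of $\pi(\overline w)$ of prescribed degrees and invoke unique factorization in $\RR[\x,\y]$, so that each way of aggregating the irreducible factors yields one scaling class. For the second claim you diverge slightly from the paper, in a way that is arguably cleaner: you compute directly from multilinearity that $\partial_{w_i}\mu(\kappa_1 w_1,\ldots,\kappa_L w_L) = \kappa_i^{-1}\,\partial_{w_i}\mu(w_1,\ldots,w_L)$ on the locus $\prod_i\kappa_i=1$, so each block of $d\mathcal L$ is rescaled by a nonzero constant and criticality is preserved. The paper instead argues that $\mathrm{Im}(d\mu(\theta))=\mathrm{Im}(d\mu(\theta'))$, deferring the justification to the later gcd characterization of the image in \Cref{prop:rank_differential}; your argument avoids that forward reference and is self-contained. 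Both routes establish the same invariance, and indeed your scaling identity immediately implies the paper's statement about the image of the differential. One small point worth noting (which does not affect finiteness): when $\pi(\overline w)$ has repeated irreducible factors, distinct labelled assignments of the copies can produce the same tuple $(p_1,\ldots,p_L)$, so the correspondence between partitions and equivalence classes need not be injective; since the lemma only asserts finiteness, this is harmless.
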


\begin{proof} The fact that $\mu^{-1}(\overline w)$ consists of a finite set of scaling equivalence classes follows from the uniqueness of the decomposition of $\pi(\overline w) \in \RR[\x,\y]$ into irreducible factors: Each equivalence class corresponds to a way of aggregating factors of $\pi(\overline w)$ into polynomials of degrees $k_1-1,\ldots,k_L-1$, where $(k_1,\ldots,k_L)$ are the filter widths of the LCN. For the second claim, we use the fact that if $\theta$ and $\theta'$ are equivalent under rescaling, then the image of the differential $d\mu$ at these points is the same. 
This will follow from our characterization of the image of $d \mu$ in~\eqref{eq:image_differential_ideal}. Hence,  $d\mathcal L(\theta) = d \ell(\overline w) \circ d \mu(\theta) = 0$ if and only if \mbox{$d\mathcal L(\theta') = d \ell(\overline w) \circ d \mu(\theta') = 0$.}
\end{proof}

\begin{example}\rm 
We discuss the parameters corresponding to particular points in the function space of the LCN with $\bm k = (2,2)$ from \Cref{example:minimal}.
Using the identification $\pi$ with polynomials,
a point in the interior of the function space is a quadratic polynomial with two distinct real roots: $P = p_1p_2$, where $p_1 = a\x+b\y$ and $p_2= c\x+d\y$ are not scalar multiples of each other.
The fiber $\mu^{-1}(P)$ of the parameterization has four connected one-dimensional components:
$\{ (\kappa p_1, \frac{1}{\kappa} p_2) : \kappa > 0 \}$, 
$\{ (\kappa p_1, \frac{1}{\kappa} p_2) : \kappa < 0 \}$, 
$\{ (\kappa p_2, \frac{1}{\kappa} p_1) : \kappa > 0 \}$, and 
$\{ (\kappa p_2, \frac{1}{\kappa} p_1) : \kappa < 0 \}$.
\\\indent
The points on the boundary of the function space are exactly those $P=p_1p_2$ where $p_1$ and $p_2$ are linearly dependent. In other words, the parameters mapping to boundary points are those which satisfy $bc - ad = 0$.
We will see in \Cref{prop:rank_differential} that these are exactly the parameters where the Jacobian of the parameterization $\mu$ drops rank.
\\\indent
Hence, a non-zero point on the boundary of the function space is of the form $P = \pm p^2$ for some linear term $p \neq 0$. 
Thus, the fiber $\mu^{-1}(P)$ has two connected one-dimensional components:
$\{ (\pm \kappa p, \frac{ 1}{\kappa} p) : \kappa > 0 \}$ and 
$\{ (\pm \kappa p, \frac{1}{\kappa} p) : \kappa < 0 \}$.
Finally, the fiber $\mu^{-1}(0)$ over the origin is the union of the two planes
$\{ (p, 0) : p \in \RR[\x,\y]_1 \}$ and $\{ (0, p) : p \in \RR[\x,\y]_1 \}$.
\end{example}

\begin{remark}
\label{rem:fiberstructure}
In the case of fully-connected linear networks, spurious critical points are always \emph{saddles}. More precisely, for a fully-connected network, a loss function $\L = \ell \circ \mu$ where $\ell$ is a smooth convex function has non-global minima if and only if the restriction of $\ell$ on the function space has non-global minima~\cite[Proposition~10]{geometryLinearNets}.
In Example~\ref{ex:badMinimum}, we will see that this is \emph{not} true for LCNs, and it is possible for the loss function $\L$ to have non-global minima even if the function space is filling.

The intuitive geometric reason for this different behavior between fully-connected and convolutional linear networks lies in the very different structure of the fibers. 
For fully-connected networks, as shown in~\cite[Proposition~9]{geometryLinearNets},
it is possible to perturb a spurious critical point within the fiber to obtain another point that is not a critical point for the loss. 
This in turn implies that the original spurious critical point could not have been a minimum for the loss. In the case of LCNs, however, this argument does not apply, by \Cref{lem:fiber}.
 Thus, for LCNs, it is possible for spurious critical points to be (non-global) local minima. 
\end{remark}
\begin{example} 
\label{ex:badMinimum}\rm
Let us consider an LCN with filter sizes $\bm k = (2,3)$. This is a filling architecture.
In terms of polynomials, its function space is the set of all cubic polynomials. 
The two layers of the LCN factor such a polynomial into a linear and a quadratic term. 
We fix the target polynomial $u = (\x+\y)(\x^2 + 1/10 \, \y^2)$.
In the space of polynomials $\RR[\x,\y]_{3}$, we use the loss $\ell(p)$ that is the squared sum of the coefficients of $p-u$. 
We will see that $\L = \ell \circ \mu$ has a non-global minimum, although $\ell$ clearly only has one local and global minimum. 
\\\indent
Almost every cubic polynomial $p$ can be factored as $p = q_1 q_2$ where $q_1 = \x+a\y$ and $q_2 = b\x^2+c\x\y+d\y^2$. 
We assume that $q_1$ is monic (\ie, the coefficient for $\x$ is $1$) since there is otherwise a scaling ambiguity among $q_1$ and $q_2$ which does not affect the product $q_1q_2$ nor the property of critical points of $\L$. Our objective function $\L$ thus becomes
\begin{align*}
    \L: \RR^4 \longrightarrow \RR;\quad
    (a,b,c,d) \longmapsto (b-1)^2 + (ab+c-1)^2 + (ac+d-\nicefrac{1}{10})^2 + (ad-\nicefrac{1}{10})^2.
\end{align*}
A direct computation reveals that $\L$ has 10 critical points over the complex numbers:
Three of them (1 real, 2 complex) correspond to factorizations of the target polynomial $u$, so that $\mathcal L=0$. 
The other seven (3 real, 4 complex) are spurious critical points  and yield factorizations of seven different cubic polynomials.
One of the latter three real spurious critical points is
\begin{align*}
    a \approx 0.0578445483987, b \approx 1.0000187825172, c \approx 0.941829719725, d \approx 0.0511336556138.
\end{align*}
The Hessian of $\L$ at this point is positive definite, showing that it is indeed a local minimum. 
\end{example}

We now investigate the existence of spurious critical points for LCNs by studying the rank of the differential of the parameterization map $\mu$.

\begin{proposition}\label{prop:rank_differential} Let $\mu$ be the parameterization map of an LCN with stride one. Given  filter weights $\theta = (w_1,\ldots,w_L)$, we write $\overline w = \mu(\theta) \in \RR^k$ for the end-to-end filter. 
 Using the identification $\Pi$, the image of the differential $d \mu$ at the point $\theta$ is the set of polynomials
\begin{equation}\label{eq:image_differential_ideal}
H(\theta) := \left\{h \, g \,\colon h \in \RR[\x,\y]_{k-1-\deg(g)}\right\} \subset \RR[\x,\y]_{k-1},
\end{equation}
where $g = {\rm gcd}\left(\frac{\Pi(\overline w)}{\Pi(w_1)},\ldots,\frac{\Pi(\overline w)}{\Pi(w_L)}\right) \in \RR[\x,\y].$
In particular, $d\mu (\theta)$ is surjective if and only if ${\rm gcd}(\Pi(w_i),\Pi(w_j)) = 1$ for every pair $i,j \in \{1,\ldots,L\}$ with $i \ne j$.
\end{proposition}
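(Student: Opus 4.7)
The plan is to use the polynomial identification $\Pi$ established in \Cref{cor:polynomial_space_identification}. Under $\Pi$, the parameterization $\mu$ becomes the multiplication map $(q_1, \ldots, q_L) \mapsto q_1 \cdots q_L$ with $q_i = \Pi(w_i)$. By the Leibniz rule, the differential $d \mu$ at $\theta$ sends a tangent vector $(v_1,\ldots,v_L) \in \prod_i \RR[\x,\y]_{k_i - 1}$ to $\sum_{i=1}^L v_i \hat q_i$, where $\hat q_i = \Pi(\overline w)/q_i = \prod_{j\neq i} q_j$. One inclusion is then immediate: each $\hat q_i$ is divisible by $g = \gcd(\hat q_1, \ldots, \hat q_L)$, so the image lies in $H(\theta) = g \cdot \RR[\x,\y]_{k-1-\deg g}$.

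For the reverse inclusion I would argue by induction on $L$. The base case $L = 2$ is the classical Bezout-type statement: writing $q_1 = g p_1$, $q_2 = g p_2$ with $\gcd(p_1,p_2)=1$, the syzygy module of $(p_1,p_2)$ is generated by $(p_1, -p_2)$, so the map $(v_1, v_2) \mapsto v_1 p_2 + v_2 p_1$ on $\RR[\x,\y]_{k_1-1} \times \RR[\x,\y]_{k_2-1}$ has kernel of dimension $\deg g + 1$; a dimension count then shows that the image equals $\RR[\x,\y]_{k-1-\deg g}$, and multiplying by $g$ gives the claim. For the inductive step, set $\overline q' = q_1 \cdots q_{L-1}$ and note that $\hat q_i = q_L \hat q_i'$ for $i < L$ while $\hat q_L = \overline q'$, where $\hat q_i' = \overline q'/q_i$. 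Thus the image of $d\mu$ equals $\{ q_L \sum_{i < L} v_i \hat q_i' + v_L \overline q' : v_i \in \RR[\x,\y]_{k_i - 1}\}$. Applying the induction hypothesis to the inner sum produces $g' \cdot \RR[\x,\y]_{k' - 1 - \deg g'}$ with $g' = \gcd(\hat q_1', \ldots, \hat q_{L-1}')$, and writing $\overline q' = g' \tilde r$ and factoring out $g'$ reduces the outer problem to the $L=2$ case applied to the pair $(\tilde r, q_L)$. The identity
\[
g \;=\; \gcd(q_L g', g' \tilde r) \;=\; g' \cdot \gcd(\tilde r, q_L)
\]
then matches the gcd produced by the two-step reduction with the advertised $g$, closing the induction.

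The surjectivity characterization follows from a short UFD argument in $\RR[\x,\y]$. If an irreducible $p$ divides both $q_i$ and $q_j$ for some $i \ne j$, then $p$ divides every $\hat q_\ell$ (either $q_i$ or $q_j$ appears as a factor since $\ell$ cannot coincide with both $i$ and $j$), so $p \mid g$. Conversely, if $p \mid g$ then $p \mid \hat q_1 = q_2 \cdots q_L$ forces $p \mid q_i$ for some $i \geq 2$, and then $p \mid \hat q_i$ forces $p \mid q_j$ for some $j \ne i$, contradicting pairwise coprimality.

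The hard part will be the inductive step, specifically the careful bookkeeping of degree bounds in the $L = 2$ image together with the verification of the gcd identity $g = g' \gcd(\overline q'/g', q_L)$. A cleaner alternative, if the induction turns out to be notationally painful, would be to dehomogenize to $\RR[\x]$ and invoke the structure of the Sylvester-type matrix of $(\hat q_1, \ldots, \hat q_L)$ directly, which identifies its cokernel with a quotient of size $\deg g$.
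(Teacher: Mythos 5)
Your proof is correct, but it takes a genuinely different route from the paper's. The paper isolates a standalone homogeneous Bezout lemma (its \Cref{lemma:ideal_gcd}): for arbitrary homogeneous $p_1,\ldots,p_n$ with $\gcd g$ and $\operatorname{lcm} l$, the degree-$d$ part of the ideal $(p_1,\ldots,p_n)$ equals $g\cdot\RR[\x,\y]_{d-\deg g}$ whenever $d\ge\deg(l)-1$; it proves this by dehomogenizing, invoking the classical univariate Bezout identity, and performing division with remainder by $\tilde l/\tilde p_i$ to control the degrees of the coefficients before rehomogenizing. You instead exploit the product structure of $\mu$ directly: an induction on $L$ whose base case $L=2$ is settled by computing the kernel of $(v_1,v_2)\mapsto v_1p_2+v_2p_1$ (a one-generator syzygy, dimension $\deg g+1$) and matching dimensions, and whose inductive step rests on the UFD identity $\gcd(q_Lg',g'\tilde r)=g'\gcd(q_L,\tilde r)$ together with the degree bookkeeping you outline --- all of which checks out, including the final identification $k-1-\deg g=\deg\tilde r+\deg q_L-\deg\gcd(\tilde r,q_L)$. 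Your surjectivity argument is essentially the same UFD chase as the paper's. The trade-off: your induction avoids the dehomogenization gymnastics and is self-contained for this proposition, but the paper's lemma is stated for arbitrary tuples of homogeneous polynomials and is reused verbatim in the proof of \Cref{thm:criticalPointsAreOnMultipleRootLoci} to compute the tangent space of the multiple root loci, where the relevant polynomials are $\overline w/q_{i}$ and $\overline w/t_{j}$ rather than partial products of filters; with your approach that later application would require a separate argument. One cosmetic nit: the kernel of your base-case map is the syzygy module of the ordered pair $(p_2,p_1)$, generated by $(p_1,-p_2)$, which is what your dimension count actually uses.
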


To prove this result, we make use of the following homogeneous version of Bezout's identity.

\begin{lemma}\label{lemma:ideal_gcd} Let $p_1,\ldots,p_n$ be homogeneous polynomials in $\RR[\x,\y]$ of arbitrary degree and let
$g = \gcd(p_1,\ldots,p_n)$ and $l={\rm lcm}(p_1,\ldots,p_n)$. For any $d \ge \deg(l)-1$ the following sets of homogeneous polynomials in $\RR[\x,\y]_d$ are equal:
\[
\begin{aligned}
I_d := \{\alpha_1 p_1 + \cdots + \alpha_n p_n \colon \alpha_i \in \RR[\x,\y]_{d - \deg(p_i)} \}, \quad J_d := \{h g \colon h \in \RR[\x,\y]_{d-\deg(g)}\}.\\
\end{aligned}
\]
\end{lemma}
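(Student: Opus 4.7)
The plan is to first dispatch the easy inclusion $I_d \subseteq J_d$, then reduce the reverse inclusion to a ``coprime'' statement which I prove by induction on $n$. For the easy direction, each generator $p_i$ is divisible by $g$, so every element $\sum \alpha_i p_i$ of $I_d$ is a multiple of $g$ and hence lies in $J_d$. For the harder direction $J_d \subseteq I_d$, I factor out the common divisor by writing $p_i = g q_i$, so that $\gcd(q_1,\dots,q_n) = 1$ and $\deg\operatorname{lcm}(p_i) = \deg g + \deg\operatorname{lcm}(q_i)$. Setting $d' := d - \deg g$, the inclusion reduces to showing that, whenever $\gcd(q_1,\dots,q_n) = 1$, the evaluation map
\[ \Phi_{d'}\colon \bigoplus_{i=1}^n \RR[\x,\y]_{d'-\deg q_i} \longrightarrow \RR[\x,\y]_{d'}, \quad (\alpha_i) \longmapsto \sum_i \alpha_i q_i \]
is surjective for all $d' \geq \deg\operatorname{lcm}(q_i) - 1$; note that under the substitution above this bound is exactly $d \geq \deg l - 1$.

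I prove this reduced claim by induction on $n$. The base case $n=2$ is a dimension count: since $\gcd(q_1,q_2)=1$, any element $(\alpha_1,\alpha_2)$ of the kernel satisfies $q_2 \mid \alpha_1$ and $q_1 \mid \alpha_2$, so the kernel is isomorphic to $\RR[\x,\y]_{d'-\deg q_1-\deg q_2}$ via $\gamma \mapsto (q_2\gamma,-q_1\gamma)$ (and is trivial when this degree is negative). Subtracting the kernel dimension from the dimension of the source yields $\dim \operatorname{im} \Phi_{d'} = d'+1 = \dim \RR[\x,\y]_{d'}$ precisely when $d' \geq \deg q_1 + \deg q_2 - 1 = \deg\operatorname{lcm}(q_1,q_2) - 1$. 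For the induction step $n \geq 3$, I collapse the first two generators into their gcd: letting $g' := \gcd(q_1,q_2)$, the $n=2$ case applied to $q_1/g'$ and $q_2/g'$ (and then multiplying back by $g'$) shows that the degree-$d'$ part of the ideal $(q_1,q_2)$ equals $g' \cdot \RR[\x,\y]_{d'-\deg g'}$ whenever $d' \geq \deg\operatorname{lcm}(q_1,q_2) - 1$. Hence in degree $d'$ the ideal $(q_1,\dots,q_n)$ agrees with $(g',q_3,\dots,q_n)$, and since $\gcd(g',q_3,\dots,q_n) = \gcd(q_1,\dots,q_n) = 1$, the induction hypothesis for $n-1$ polynomials finishes the job.

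The main work, beyond the base case, is the propagation of degree bounds through the induction. This rests on the two inequalities $\deg\operatorname{lcm}(q_1,q_2) \leq \deg\operatorname{lcm}(q_1,\dots,q_n)$ and $\deg\operatorname{lcm}(g',q_3,\dots,q_n) \leq \deg\operatorname{lcm}(q_1,\dots,q_n)$, both of which follow at once from the observation that each of the polynomials on the left-hand sides divides $\operatorname{lcm}(q_1,\dots,q_n)$. Apart from this bookkeeping, everything is elementary; the one genuinely load-bearing step is the Koszul-style dimension count in the $n=2$ case, which is where the sharp bound $\deg\operatorname{lcm} - 1$ enters.
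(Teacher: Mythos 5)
Your proof is correct, but it takes a genuinely different route from the paper's. The paper first reduces to the single degree $d=\deg(l)-1$ (multiplying by $\x$ and $\y$ handles larger $d$), then applies a linear change of variables so that $\y\nmid l$, dehomogenizes to univariate polynomials, invokes the classical Bezout identity to write $\tilde h\tilde g$ as a combination of the $\tilde p_i$, and finally performs Euclidean division of each cofactor by $\tilde l/\tilde p_i$ to force the degree bounds, before homogenizing back. You instead stay entirely in the homogeneous setting: you factor out $g$ to reduce to the coprime case, establish the base case $n=2$ by a Koszul-style dimension count (the syzygy module of two coprime forms is generated by $(q_2,-q_1)$, so the kernel dimension is exactly $\max(0,\,d'-\deg q_1-\deg q_2+1)$ and the image has dimension $d'+1$ precisely when $d'\ge\deg q_1+\deg q_2-1$), and then induct on $n$ by collapsing $q_1,q_2$ into $g'=\gcd(q_1,q_2)$, with the divisibility relations $\mathrm{lcm}(q_1,q_2)\mid\mathrm{lcm}(q_1,\ldots,q_n)$ and $\mathrm{lcm}(g',q_3,\ldots,q_n)\mid\mathrm{lcm}(q_1,\ldots,q_n)$ propagating the degree bound. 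Both arguments are elementary and both isolate where the sharp threshold $\deg(l)-1$ comes from; yours makes the sharpness more transparent (it is exactly the degree at which syzygies begin to eat into the source dimension) and avoids the dehomogenization and change-of-variables bookkeeping, at the price of an induction on $n$ that the paper's single application of Bezout sidesteps. The only points worth making fully explicit in a polished write-up are the identities $\mathrm{lcm}(gq_1,\ldots,gq_n)=g\cdot\mathrm{lcm}(q_1,\ldots,q_n)$ (so that the bound $d\ge\deg(l)-1$ really does translate to $d'\ge\deg\mathrm{lcm}(q_i)-1$) and the degenerate conventions $\dim\RR[\x,\y]_m=0$ for $m<0$ in the dimension count; both are routine.
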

\begin{proof}
If $g = l$ (\ie \, $p_1=\cdots=p_n$), the assertion is clear. Hence, we assume that $\deg(g) < \deg(l)$.
We always have that $I_d \subset J_d$, since any algebraic combination of $p_1, \ldots, p_n$ is a multiple of $g$. For the converse, we observe that it is enough to prove that $J_{\deg(l)-1} \subset I_{\deg(l)-1}$ (if $hg$ belongs to the ideal generated by $p_1,\ldots,p_n$, then so do $\x hg$ and $\y hg$). 
Moreover, it is sufficient to assume 
that the linear form $\y$ is not a factor of $l$ (otherwise, we apply a general invertible linear transformation to the input variables, \eg\, $\x'=\x, \y' = \y + t\x$ for appropriate $t$).
We can thus safely \emph{dehomogenize} and obtain univariate polynomials $\tilde p_i = p_i(\x,1)$, $\tilde g = g(\x,1)$, $\tilde l = l(\x,1)$ in $\RR[\x]$. Note that these polynomials have the same degrees as their homogeneous counterparts and also that $\tilde g = \gcd(\tilde p_1,\ldots,\tilde p_n)$ and $\tilde l = {\rm lcm}(\tilde p_1,\ldots,\tilde p_n)$. 
By Bezout's identity, we can write $\tilde \beta_1 \tilde p_1 + \cdots + \tilde \beta_n \tilde p_n = \tilde g$ for some $\tilde \beta_i \in \RR[\x]$.
Thus, for any $\tilde h \in \RR[\x]$, we get a decomposition
$\tilde \alpha_1 \tilde p_1 + \cdots + \tilde \alpha_n \tilde p_n = \tilde h \tilde g$.
We now claim that we can assume $\deg(\tilde \alpha_i) < \deg(\tilde l) - \deg(\tilde p_i)$ whenever $\deg(\tilde h \tilde g) \le \deg(\tilde l)-1$. 
Indeed, if this is not the case, then we write $\tilde \alpha_i = \tilde q_i \tilde p_i^* + \tilde r_i$ with $\tilde p_i^* = \tilde l / \tilde p_i$ and $\deg(\tilde r_i) < \deg(\tilde p_i^*) = \deg(\tilde l) - \deg(\tilde p_i)$, and we obtain $(\sum_{i=1}^n \tilde q_i) \tilde l + \tilde r_1 \tilde p_1 + \cdots + r_n \tilde p_n = \tilde h \tilde g$. 
Since $\deg(\tilde h \tilde g) < \deg (\tilde l)$, we deduce that $\sum_{i=1}^n \tilde q_i = 0$, which yields a decomposition with the desired degree bounds. 
We can now \emph{homogenize} this relation so that the total degree is $\deg(l) - 1 = \deg(\tilde l)-1$ to obtain $\alpha_1 p_1 + \cdots +\alpha_n p_n = hg$, where
\[
\begin{aligned}
&\alpha_i(\x,\y) = \y^{\deg(\tilde l) - 1 -\deg(\tilde p_i)}\tilde \alpha_i\left(\frac{\x}{\y}\right), &&h(\x, \y) = \y^{\deg(\tilde l) - 1 -\deg(\tilde g)}\tilde h\left(\frac{\x}{\y}\right),\\
&p_i(\x,\y)=\y^{\deg(\tilde p_i)}\tilde p_i\left(\frac{\x}{\y}\right), &&g(\x,\y)=\y^{\deg(\tilde g)}\tilde g\left(\frac{\x}{\y}\right).
\end{aligned}
\]
Note that $p_i$ and $g$ are the original homogeneous polynomials from the statement. Since $\tilde h$ can be chosen arbitrarily in $\RR[\x]_{\le \deg(\tilde l) - 1 - \deg(\tilde g)}$, we have that $h(\x,\y)$ can be an arbitrary polynomial in $\RR[\x,\y]_{\deg(l) - 1 - \deg(g)}$. We thus conclude that $J_{\deg(l)-1} \subset I_{\deg(l)-1}$.
\end{proof}

\begin{proof}[Proof of \Cref{prop:rank_differential}]
For notational simplicity, we omit writing the map $\Pi$ and identify $\RR^k$ with $\RR[\x,\y]_{k-1}$. The differential of the parameterization can be written as
\begin{equation}\label{eq:differential_general}
d \mu(\theta)\colon \RR^{k_1 + \cdots + k_L} 
\longrightarrow  \RR^{k}; \quad 
(\dot w_1, \ldots, \dot w_L) 
\longmapsto \dot w_1\frac{\overline w}{w_1} + \cdots + \dot w_L\frac{\overline w}{w_L}, 
\end{equation}
where the product means polynomial multiplication and each $\dot w_i$ is simply a symbol for the coordinates of the tangent vector corresponding to the variable $w_i$. Since $\overline w$ is a common multiple of $\frac{\overline w}{w_1},\ldots, \frac{\overline w}{w_L}$, using~\Cref{lemma:ideal_gcd} we deduce that the image of~\eqref{eq:differential_general} is equal to $H(\theta)$ in~\eqref{eq:image_differential_ideal}. This shows that the differential is surjective if and only if $\gcd\left(\frac{\overline w}{w_1}, \ldots, \frac{\overline w}{w_L}\right)=1$.
The latter is equivalent to  ${\rm gcd}(w_i,w_j) = 1$ for every pair $i \neq j$. Indeed, any common factor of $w_i, w_j$ is a factor of each $\frac{\overline w}{w_m}$; conversely, if $q$ is an irreducible common factor of $\frac{\overline w}{w_1}, \ldots, \frac{\overline w}{w_L}$, then $q | w_i$ for some $i \in \{1,\ldots,L\}$, and from $q | \frac{\overline w}{w_i}$ we deduce $q | w_j$ for some $j \ne i$.
\end{proof}

We are finally ready to prove the main theorems of this section. 
As in the proof of \Cref{prop:rank_differential}, we omit writing the map $\pi$ and identify filters with polynomials. 

\begin{proof} [Proof of \Cref{prop:critical-points}]
Let $\overline W = \mu(w_1,\ldots,w_L)$. If $(d \ell \circ d \mu)(\theta) = 0$ then either $d \ell (\overline{W}) = 0$ or $d \mu(\theta)$ must not have full rank. The assumption that $\ell$ is convex means that if $d\ell(\overline W) = 0$ then $\overline W$ is a global minimum for $\ell$ and thus $\theta$ is a global minimum for $\L$. Alternatively, if $d \mu(\theta)$ does not have full rank, then by \cref{prop:rank_differential} the second condition holds.
\end{proof}

\begin{proof}[Proof of \Cref{thm:criticalPointsAreOnMultipleRootLoci}]
If $\lambda = (1,\ldots,1)$, then $\overline w$ has $k-1$ distinct roots. By \Cref{prop:rank_differential}, the differential $d\mu(\theta)$ has full rank, \ie, $\theta$ is a pure critical point. This in turn implies that $\overline w$ is a critical point for $\ell$ in $\Delta_{\lambda} = \RR[x,y]_{k-1}$. 

We next assume that $\lambda = (\lambda_1,\ldots,\lambda_r)$ is such that $\lambda_1 > 1$. Let $\overline w = q_1^{\lambda_1} \cdots q_r^{\lambda_r}$ be the decomposition of $\overline w$ into pairwise distinct complex linear forms $q_i \in \CC[\x, \y]_1$. Note that if $\overline w = \prod_{i=1}^r q_i^{\lambda_i}$ is real, then so is $\prod_{i=1}^r q_i^{\lambda_i - 1}$. We claim that the (real) tangent space of $\Delta_{\lambda}$ at the smooth point $\overline w$ is given by
\begin{equation}\label{eq:tangent_multiple_root_locus}
T_{\overline w} \Delta_{\lambda} = \left\{h \prod_{i=1}^r q_i^{\lambda_i - 1} \,\, \colon \,\, h \in \RR[\x,\y]_r\right\} \subset \RR[\x,\y]_{k-1}.
\end{equation}
Indeed, let us assume without loss of generality that $q_{i_1},\ldots,q_{i_{r'}}$ are real linear forms, while $q_{j_1}, q_{j_1+1}, \ldots,\allowbreak q_{j_{r''}}, q_{j_{r''}+1}$ are such that each $(q_{j_l}, q_{j_l+1})$ is a conjugate pair of complex linear forms, with $r'+2r'' = r$ (in particular we must also have $\lambda_{j_l} = \lambda_{j_l+1}$ for all $l=1,\ldots,r''$). We write $t_{j_l} = q_{j_l} q_{j_l+1}$ for the irreducible real quadratic form that is the product of the pair $q_{j_l}, q_{j_l+1}$.
In a neighborhood of $\overline w$, the multiple root locus $\Delta_{\lambda}$ is the image of
\begin{equation}\label{eq:parameterization_multi_root_locus}
(q_{i_1},\ldots,q_{i_{r'}}, t_{j_1},\ldots,t_{j_{r''}}) \mapsto \prod_{m=1}^{r'} q_{i_m}^{\lambda_{i_m}} \prod_{l=1}^{r''} t_{j_l}^{\lambda_{j_l}},
\end{equation}
as $q_{i_m} \in \RR[\x,\y]_1$ and $t_{j_l} \in \RR[\x,\y]_2$ vary. Arguing as in the proof of \Cref{prop:rank_differential} (based on \Cref{lemma:ideal_gcd}), we have the image of the differential of~\eqref{eq:parameterization_multi_root_locus} is the set of homogeneous polynomials of degree $k-1$ that are multiples  of $g' = \gcd\left(\frac{\overline w}{q_{i_1}}, \ldots, \frac{\overline w}{q_{i_{r'}}},\frac{\overline w}{t_{j_1}},\ldots,\frac{\overline w}{t_{j_{r''}}} \right)$. We finally observe that $g' = \prod_{i=1}^r q_i^{\lambda_i - 1}$.

Having shown~\eqref{eq:tangent_multiple_root_locus}, we recall that the image of the differential of the parameterization  $\mu$ is  $\mathrm{Im}(d\mu(\theta)) = \left\{h g \,\colon h \in \RR[\x,\y]_{k-1-\deg(g)}\right\} \subset \RR[\x,\y]_{k-1}$, where $g$ is as in~\Cref{prop:rank_differential}. To conclude the proof, we note that $T_{\overline w} \Delta_{\lambda} \subset \mathrm{Im}(d\mu(\theta))$ since $g \, | \, \prod_{i=1}^r q_i^{\lambda_i - 1}$. Indeed, the linear form $q_i$ can appear in the factorization of $g$ with multiplicity at most $\lambda_i - 1$, for if $q_i | w_j$ then $q_i^{\lambda_i}\not|\frac{\overline w}{w_j}$.
The fact that $T_{\overline w} \Delta_{\lambda} \subset \mathrm{Im}(d\mu(\theta))$ implies the statement of the theorem since $0=d\mathcal L(\theta) = d \ell(\overline w) \circ d \mu(\theta)$ implies $d \ell(\overline w)|_{T_{\overline w} \Delta_{\lambda}} = 0$, \ie, $\overline w$ is a critical point of $\ell|_{\Delta_{\lambda}}$. 
\end{proof}

\begin{proof}[Proof of  \Cref{thm:MultipleRootLocigivecriticalpoints}]
Below we show the following claim for $\overline w$ being a smooth point of $\Delta_{\lambda}$: 
For every $\theta \in \mu^{-1}(\overline w)$, it holds $\mathrm{Im}(d\mu(\theta)) = T_{w}(\Delta_\lambda)$ if and only if each $w_i$ has no repeated roots. Hence, under the assumptions in the theorem, $d \mathcal L(\theta) = d \ell(\overline w) \circ d \mu(\theta) = d \ell(\overline w)|_{T_{\overline w} \Delta_{\lambda}} = 0$. 

To prove our claim, we use the fact that $T_{\overline w}(\Delta_\lambda) = \{h \prod_{i=1}^r q_i^{\lambda_i - 1} \,\, \colon \,\, h \in \RR[\x,\y]_r\}$  as in \cref{eq:tangent_multiple_root_locus} if $\overline w = \prod_{i=1}^r q_i^{\lambda_i}$ and that $\mathrm{Im}(d\mu(\theta)) = \left\{h \, g \,\colon h \in \RR[\x,\y]_{k-1-\deg(g)}\right\}$, where $g$ is the gcd defined in \Cref{prop:rank_differential}.
We observe that $w_i$ has no repeated roots if and only if $\prod_{i=1}^r q_i^{\lambda_i - 1} | (\nicefrac{\overline{w}}{w_i})$.
This happens for every $i=1, \ldots, L$ if and only if
 $\prod_{i=1}^r q_i^{\lambda_i - 1} | g$.
Since $g|\prod_{i=1}^r q_i^{\lambda_i - 1}$ always holds, as seen in the proof of \Cref{thm:criticalPointsAreOnMultipleRootLoci}, the latter is equivalent to $\mathrm{Im}(d\mu(\theta)) = T_{\overline w}(\Delta_\lambda)$.
\end{proof}

\begin{proof}[Proof of \Cref{thm:noDoubleRootsInFilters}]
As we will see in \cref{eq:objective,eq:squareLossFilter,eq:filterLoss}, if  $XX^\top$ has full rank, any square loss $\ell_{X,Y}(\overline w)$ is equivalent up to a constant additive term to a function of the form $\ell_{\Sigma,u}(\overline w) = \|u - \overline w\|^2_{\Sigma}$, where $u$ is a target filter and $\langle \cdot, \cdot \rangle_{\Sigma}$ is a non-degenerate inner product on $\RR^{k}$.
{Here $k$ is the size of the filters $u$ and $\overline{w}$.}
We now show that for \emph{arbitrary} inner products $\langle \cdot, \cdot \rangle_{\Sigma}$ and \emph{generic} target filters $u$, any critical point $\theta = (w_1,\ldots,w_L)$ of $\mathcal L_{\Sigma,u} = \ell_{\Sigma,u} \circ \mu$ is such that each polynomial $w_1,\ldots,w_L$ has no repeated roots.\footnote{In the setting of~\Cref{ssec:squareloss}, we need to observe that the inner product only depends on the input data matrix $X \in \RR^{d_0 \times N}$, and that as the output data matrix $Y \in \RR^{d_L \times N}$ varies the target matrix $U = YX^\top(XX^\top)^{-1}$ can be an arbitrary matrix of size $d_L \times d_0$ due to the assumption $N \geq d_0$. In particular, for any non-degenerate inner product $\langle \cdot, \cdot \rangle_{\Sigma}$, all possible target filters $u$ arise.} Fixing the inner product arbitrarily, we consider for all $i=1,\ldots,L$ the set
\[
Z_i = \{(\theta, \overline w, u) \, \colon \, \theta \in \Crit(\mathcal L_{\Sigma,u}), \, \mu(\theta) = \overline w, \,\, \theta=(w_1,\ldots,w_L),\,\, w_i \text{ has a repeated root}\}.
\]
Note that this is an algebraic set. We claim that $\dim({\rm pr}_3(Z_i)) \le k-1$ holds for all $i=1,\ldots,L$, where ${\rm pr}_3$ denotes the projection on the third factor. This implies the statement of the theorem since it shows that the set of target filters $u$ such that any of the corresponding critical points has a filter with repeated roots has positive codimension in $\RR^k$. To prove our claim on the dimension of $Z_i$, we further consider the sets
\[
Z_{i,\lambda} = \{(\theta, \overline w, u) \, \colon (\theta, \overline w, u) \in Z_i, \,\, \overline w \in \mathrm{Reg}( \Delta_{\lambda} )\},
\]
where $\lambda$ is a partition of $k-1$ and $\mathrm{Reg}( \Delta_{\lambda})$ denotes the locus of smooth points on $\Delta_\lambda$. 
Clearly, $Z_i = \bigcup_{\lambda} Z_{i,\lambda}$.
As shown in the proof of~\Cref{thm:MultipleRootLocigivecriticalpoints}, if $\overline w$ is a smooth point of $\Delta_{\lambda}$ and $w_i$ has a repeated root, then $T_{\overline w} \Delta_{\lambda} \subsetneq {\rm Im}(d\mu(\theta))$.
The fiber $\mu^{-1}(\overline{w})$ consists of finitely many scaling equivalence classes (see \Cref{lem:fiber}) and all $\theta$ in the same equivalence class yield the same ${\rm Im}(d\mu(\theta))$.
We note that $\theta\in\Crit(\mathcal L_{\Sigma,u})$ means equivalently that $u - \overline w \perp_{\Sigma}{\rm Im}(d\mu(\theta))$, or in other words $u\in\overline{w}+{\rm Im}(d\mu(\theta))^{\perp_{\Sigma}}$.
We write $N_{\Sigma,\overline{w}}$ for the finite union of the affine spaces $\overline{w}+{\rm Im}(d\mu(\theta))^{\perp_{\Sigma}}$ where $\theta=(w_1,\ldots,w_L)\in\mu^{-1}(\overline{w})$ and some $w_i$ has a repeated root.
Hence, $u\in{\rm pr}_3(Z_{i,\lambda})$ if and only if there is some smooth point $\overline{w}$ on $\Delta_\lambda$ such that $u\in N_{\Sigma,\overline{w}}$.
Since $\dim(N_{\Sigma,\overline{w}})<\dim((T_{\overline w} \Delta_{\lambda})^{\perp_{\Sigma}})=k-\dim(\Delta_\lambda)$, we see that $\dim({\rm pr}_3(Z_{i,\lambda}))\leq \dim(\Delta_\lambda)+\max_{\overline{w}\in\mathrm{Reg}(\Delta_\lambda)}\dim(N_{\Sigma,\overline{w}})<k$. 
This concludes the proof. 
\end{proof}

\subsection{Training dynamics}
\label{sec:training-dynamics}

In this section, we discuss some matters related to the training dynamics of linear convolutional networks. We begin by proving the existence of dynamical invariants in parameter space. Similar results have been shown for linear networks~\cite{DBLP:journals/corr/abs-1910-05505} and ReLU networks~\cite{williams2019gradient}.
We restrict ourselves to networks with stride one.

\begin{proposition}\label{prop:invariants} Let $\L= \ell \circ \mu$ be a loss function for an LCN with stride one, where $\ell$ is a smooth function on the space of convolutional matrices. Let $\theta(t)$ be an integral curve in parameter space for the negative gradient field of $\L$, \ie, $\dot
\theta(t) = - \nabla (\L(\theta(t)))$. If we write $\theta(t) = (w_1(t),\ldots,w_L(t))$, where $w_i$ are filter vectors, then the quantities $\delta_{ij}(t) = \|w_{i}(t)\|^2 - \|w_j(t)\|^2$ for $i,j \in \{1,\ldots,L\}$ remain constant for all $t \in \RR$.
\end{proposition}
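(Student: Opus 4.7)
The plan is to show that $\frac{d}{dt}\|w_i(t)\|^2$ does not depend on $i$, which immediately gives that $\delta_{ij}(t)$ is constant. The key observation is that the parameterization $\mu_{\bm d,\bm k,\bm s}$ is multilinear in the filters $(w_1, \ldots, w_L)$ — it is linear in each $w_i$ separately, since both convolution and matrix product are bilinear. Hence for every $i$, Euler's homogeneity identity (in the $i$-th slot only) reads
\begin{equation*}
d\mu(\theta)\bigl[(0, \ldots, 0, w_i, 0, \ldots, 0)\bigr] \;=\; \mu(\theta) \;=\; \overline{w},
\end{equation*}
where the nonzero entry sits in position $i$. This is also visible directly from the explicit formula \eqref{eq:differential_general}, since substituting $\dot{w}_i = w_i$ into that position (and zero elsewhere) yields $w_i \cdot (\overline{w}/w_i) = \overline{w}$.

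The first step is to compute, along the gradient flow,
\begin{equation*}
\frac{d}{dt} \tfrac{1}{2}\|w_i(t)\|^2 \;=\; \langle w_i, \dot{w}_i\rangle \;=\; -\bigl\langle w_i,\, \nabla_{w_i}\L(\theta)\bigr\rangle.
\end{equation*}
Next, I rewrite the inner product on the right as the directional derivative of $\L$ in the direction with $w_i$ in slot $i$ and zero elsewhere, and then apply the chain rule $d\L = d\ell \circ d\mu$:
\begin{equation*}
\bigl\langle w_i, \nabla_{w_i}\L(\theta)\bigr\rangle \;=\; d\L(\theta)\bigl[(0, \ldots, w_i, \ldots, 0)\bigr] \;=\; d\ell(\overline{w})\bigl[\, d\mu(\theta)[(0,\ldots,w_i,\ldots,0)] \,\bigr].
\end{equation*}
By the multilinearity observation above, the inner bracket equals $\overline{w}$, so
\begin{equation*}
\langle w_i, \nabla_{w_i}\L(\theta)\rangle \;=\; d\ell(\overline{w})[\overline{w}],
\end{equation*}
which is manifestly independent of $i$.

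Combining these, $\frac{d}{dt}\|w_i(t)\|^2 = -2\, d\ell(\overline{w}(t))[\overline{w}(t)]$ is the same scalar for every $i$, so $\frac{d}{dt}\bigl(\|w_i(t)\|^2 - \|w_j(t)\|^2\bigr) = 0$, yielding the claim. I do not expect any genuine obstacle: the only care required is in correctly invoking multilinearity of $\mu$ (which is the LCN-specific ingredient replacing the usual ``layer-wise linearity'' used in the fully-connected case of \cite{DBLP:journals/corr/abs-1910-05505}); once that is set up, the argument reduces to Euler's identity plus the chain rule.
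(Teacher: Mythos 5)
Your proof is correct and follows essentially the same route as the paper's: both arguments hinge on the multilinearity of $\mu$ in the filters, which the paper exploits via invariance of $\L$ under the rescaling $(w_i,w_j)\mapsto(\kappa w_i,\kappa^{-1}w_j)$ and you exploit via the slot-wise Euler identity $d\mu(\theta)[(0,\ldots,w_i,\ldots,0)]=\overline w$; either way one obtains $\langle w_i,\nabla_{w_i}\L\rangle=\langle w_j,\nabla_{w_j}\L\rangle$ and hence $\frac{d}{dt}\delta_{ij}=0$. Your version additionally identifies the common value as $d\ell(\overline w)[\overline w]$, which is a harmless refinement.
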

\begin{proof} For any $\kappa \in \RR \setminus \{0\}$, let $\nu_{ij}(\theta,\kappa) = (w_1,\ldots,\kappa w_i, \ldots, \nicefrac{1}{\kappa} \, w_{j},\ldots, w_L)$. Because the map $\mu$ is multilinear in the filters, we have that $\L(\nu_{ij}(\theta,\kappa))$ does not depend on $\kappa$. This implies that $\frac{d}{d\kappa}\L(\nu_{ij}(\theta,\kappa)) \equiv 0$ and in particular 
\[
0=\left[\frac{d}{d\kappa}\L(\nu_{ij}(\theta,\kappa))\right]_{\kappa=1} = \frac{d}{d\theta}\L(\theta) \cdot \left[\frac{d}{d\kappa}\nu_{ij}(\theta,\kappa)\right]_{\kappa=1} =\frac{d}{d\theta}\L(\theta) \cdot [0,\ldots,w_i, \ldots, -w_j,\ldots 0]^\top. 
\]
We now observe that
\[
\frac{d}{dt}\delta_{ij}(t) = \frac{d}{d\theta} \delta_{ij} \cdot \frac{d}{dt} \theta(t) = -2 \, [0,\ldots,w_i,\ldots,-w_j,\ldots,0] \cdot \frac{d}{d\theta} \L(\theta) = 0. 
\]
This shows that $\delta_{ij}$ remains constant for all $t$.
\end{proof}

As discussed in \Cref{lem:fiber}, each connected component of the fiber $\mu^{-1}(p)$ of a polynomial $p$ under the paramaterization map $\mu$ consists of the different rescalings of a fixed factorization of $p$. 
\Cref{prop:invariants} can be used to eliminate the scaling ambiguity, assuming that the parameters (or the invariants) are known at initialization. 

\begin{corollary}\label{cor:invariants_system}
Let $\mu$ be the parameterization map of an LCN with stride one.
Given filters $\theta = (w_1,\ldots,w_L)$, the set of parameter values $\theta'$ such that $\mu(\theta) = \mu(\theta')$ and $\delta_{i, i+1} = \delta'_{i,i+1}$ for all $i  = 1, \ldots, L-1$ is finite, where $\delta_{ij}$ and $\delta'_{ij}$ are the invariants associated with $\theta$ and $\theta'$, respectively, as defined in \Cref{prop:invariants}.
\end{corollary}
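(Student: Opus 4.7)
The plan is to reduce finiteness to a polynomial system by exploiting the fiber description from Lemma~\ref{lem:fiber}. Assuming $\mu(\theta) \neq 0$ (so each $w_i \neq 0$), that lemma tells us $\mu^{-1}(\mu(\theta))$ is a finite union of scaling equivalence classes, so it suffices to show that inside a single equivalence class only finitely many $\theta'$ satisfy all invariant constraints $\delta'_{i,i+1} = \delta_{i,i+1}$ for $i=1,\ldots,L-1$.

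Fix a representative $(v_1,\ldots,v_L)$ of such a class. Every $\theta'$ in this class is of the form $(\kappa_1 v_1,\ldots,\kappa_L v_L)$ with $\kappa_i \in \RR^*$ and $\prod_{i=1}^L \kappa_i = 1$. Writing $c_i := \|w_i\|^2 - \|w_{i+1}\|^2$, the $L-1$ invariant conditions become the polynomial equations
\begin{equation*}
\kappa_i^2 \|v_i\|^2 - \kappa_{i+1}^2 \|v_{i+1}\|^2 \;=\; c_i, \qquad i=1,\ldots,L-1.
\end{equation*}
Substituting $a_i := \kappa_i^2 > 0$, these are \emph{linear} in $a_1,\ldots,a_L$ and recursively express each $a_{i+1}$ as an affine function of $a_i$, hence each $a_i$ as an affine function of $a_1$. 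Plugging these expressions into the remaining constraint $\prod_{i=1}^L a_i = \bigl(\prod_i \kappa_i\bigr)^2 = 1$ yields a univariate polynomial equation of degree $L$ in $a_1$, which has at most $L$ real solutions. Each solution determines $(a_1,\ldots,a_L)$, and for each such tuple with all $a_i>0$ the sign choices $\kappa_i = \pm\sqrt{a_i}$ compatible with $\prod \kappa_i = 1$ form a finite set (at most $2^{L-1}$). Running over the finitely many equivalence classes then gives the claim.

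The step I expect to require the most care is the degenerate cases. If $\mu(\theta)=0$, some $v_i = 0$ and Lemma~\ref{lem:fiber} no longer applies; in fact the fiber $\mu^{-1}(0)$ is a union of coordinate subspaces, and the invariant constraints alone would not generally cut it down to a finite set. The natural reading is that the corollary is intended for $\mu(\theta)\ne 0$, but a careful statement (or an argument that the invariants together with an additional non-degeneracy assumption suffice) may be worth spelling out. A smaller subtlety is checking that the degree-$L$ polynomial obtained in $a_1$ is genuinely non-trivial when all $\|v_i\|^2 > 0$; this follows by inspecting its leading coefficient, which equals a non-zero product of ratios of the $\|v_i\|^2$.
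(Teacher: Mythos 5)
Your proof is correct and takes essentially the same route as the paper's: reduce via \Cref{lem:fiber} to finitely many scaling equivalence classes, observe that the invariant constraints are linear in the squared scale factors so that all of them are affine functions of the first, and combine with the product constraint to get a degree-$L$ univariate polynomial equation with finitely many admissible positive solutions (the paper works with $\beta_i=\kappa_i^2\|q_i\|^2$ rather than your $a_i=\kappa_i^2$, a trivial change of variables). Your caveat about $\mu(\theta)=0$ is well taken: the paper's proof also tacitly assumes a nonzero end-to-end filter, since \Cref{lem:fiber} is stated only for $\overline w\neq 0$, and for $\mu(\theta)=0$ the conclusion genuinely fails (e.g.\ $\theta=(0,w_2)$ admits a whole sphere of $\theta'=(0,w_2')$ with $\|w_2'\|=\|w_2\|$).
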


\begin{proof} Let $q_i = \pi(w_i)$ be the polynomials associated with the filters $w_i$, and let $p = q_L \cdots q_1 = \pi(\mu(\theta))$. 
As in Lemma~\ref{lem:fiber}, the factors $q_i$ are determined by $p$ up to a finite number of root permutations and also up to rescalings:
$
q_i' = \kappa_i q_i,$ where $\prod_{i=1}^L \kappa_i = 1.
$
It remains to show that there are only finitely many choices of the scale factors $\kappa_i$ such that 
$\delta_{i+1,i} = \|\kappa_{i+1} q_{i+1}\|^2 - \|\kappa_i q_{i}\|^2$ for $i=1,\ldots,L-1$. 
We can recover the possible $\kappa_i$ by solving a simple polynomial system. Setting $\beta_i = \|\kappa_i q_i\|^2 = \kappa_i^2 \| q_i \|^2$ and $\delta_i = \delta_{i+1,i}$, we have that
\begin{equation*}
\beta_{i+1} - \beta_i = \delta_i, \,\, (i=1,\ldots,L-1), \quad 
\prod_{i=1}^L \beta_i = H, 
\end{equation*}
where $H = \prod_{i=1}^L \|q_i\|^2$. 
In particular, all $\beta_i$ are determined by $\beta_1$:  $\beta_{i+1} = \beta_1 + \delta_1 + \cdots + \delta_i$ for all $i=1,\ldots,L-1$.
Hence, we obtain one polynomial equation of degree $L$ in $\beta_1$, namely
\begin{equation}\label{eq:1eq_1var}
\beta_1 \cdot (\beta_1 + \delta_1) \cdots (\beta_1 + \delta_1 + \cdots + \delta_{L-1}) - H = 0. 
\end{equation}
There are finitely many real solutions for $\beta_1$ in~\cref{eq:1eq_1var} such that $\beta_1, \ldots,\beta_1 + \delta_1 + \cdots +\delta_{L-1}$ are all positive. Each such solution  gives finitely many choices for the  $\kappa_i$. 
\end{proof}

The previous statement can be used to study the impact of the initialization (within the fiber) on the training dynamics. More precisely, we can compute the Riemannian metric that locally characterizes the gradient dynamics in function space. In general if $\L = \ell \circ \mu$ and $\theta(t)$ is an
integral curve for the negative gradient field of $\L$, then setting $\overline W(t) = \mu(\theta(t))$ 
we have 
\begin{equation}\label{eq:function_dynamics}
\begin{aligned}
\frac{d }{dt}\overline W(t) 
&= - K(\theta(t)) \,\cdot \, \nabla \ell (\overline W(t)),
\end{aligned}
\end{equation}
where $K(\theta) = \Jac_{\mu}(\theta)\Jac_{\mu}(\theta)^\top$ and $\Jac_{\mu}(\theta)$ is the Jacobian matrix of $\mu$ at $\theta$. 
The matrix $K(\theta)$ corresponds to the \emph{neural tangent kernel} \cite{NEURIPS2018_5a4be1fa} popular in the analysis of neural networks and can be viewed as a ``pre-conditioning'' matrix that modifies the gradient of $\ell$ in function space. If $K(\theta)$ could be written as a function of $\overline W = \mu(\theta)$, then we could view~\cref{eq:function_dynamics} as a Riemannian gradient flow for the metric tensor $K^{-1}$ (this assumes $K$ is not singular, which is generically the case; note that $K$ is always positive semi-definite). In general, however, the matrix $K(\theta)$ depends on the parameter $\theta$, and different parameters within the fiber of $\overline W$ will yield different matrices. In particular, the trajectories in function space are susceptible to the relative scalings of the filters at initialization. Using the dynamical invariants, however, we can bypass this problem by restricting ourselves to the parameters that are dynamically reachable, defining a local metric in function space associated with the initialization. 
We show this idea with a simple example.

\begin{example} \rm
Let us consider the factorization of a cubic polynomial into a linear and a quadratic factor. We write $C = A \cdot B$, where
\[
C = c_3 \x^3 + c_2 \x^2\y + c_1 \x\y^2 + c_0\y^3,\quad 
A = a_2 \x^2 + a_1 \x\y + a_0\y^2,\quad 
B = b_1 \x + b_0\y.
\]
The Jacobian matrix of the parameterization map $\mu:(a_2,a_1,a_0,b_1,b_0) \mapsto (c_3,c_2,c_1,c_0) = (a_2 b_1, a_2 b_0 + a_1 b_1, a_1 b_0 + a_0 b_1, a_0b_0)$ is
\begin{equation*}
\Jac_{\mu}(a_2,a_1,a_0,b_1,b_0) =
\begin{bmatrix}
b_1 & 0 & 0 & a_2 & 0\\
b_0 & b_1 & 0 & a_1 & a_2\\
0 & b_0 & b_1 & a_0 & a_1\\
0 & 0 & b_0 & 0 & a_0\\
\end{bmatrix} \in \RR^{4 \times 5}.
\end{equation*}
The matrix $K(\theta) = \Jac_{\mu} \Jac_{\mu}^\top \in \RR^{4 \times 4}$ can be written as $K(\theta)=K_B+K_A$, where 
\begin{equation*}
K_B = 
\begin{bmatrix}
b_{1}^{2} & b_{0} b_{1} & 0 & 0 \\
b_{0} b_{1} & b_{0}^{2} + b_{1}^{2} & b_{0} b_{1} & 0 \\
0 & b_{0} b_{1} & b_{0}^{2} + b_{1}^{2} & b_{0} b_{1} \\
0 & 0 & b_{0} b_{1} & b_{0}^{2}
\end{bmatrix},\;\; 
K_A =
\begin{bmatrix}
a_{2}^{2} & a_{1} a_{2} & a_{0} a_{2} & 0 \\
a_{1} a_{2} & a_{1}^{2} + a_{2}^{2} & a_{0} a_{1} + a_{1} a_{2} & a_{0} a_{2} \\
a_{0} a_{2} & a_{0} a_{1} + a_{1} a_{2} & a_{0}^{2} + a_{1}^{2} & a_{0} a_{1} \\
0 & a_{0} a_{2} & a_{0} a_{1} & a_{0}^{2}
\end{bmatrix}.
\end{equation*}
This matrix clearly depends on the coefficients of $A$ and $B$; however, we can recover it given only $C$ and the value of $\delta = \|A\|^2 - \|B\|^2$, which is fixed from the initialization. To resolve the scale ambiguity, we replace $A,B$ with $\gamma A, 1/\gamma B$ with $\gamma \in \RR\setminus \{0\}$ (here $A$ and $B$ can be fixed arbitrarily so that $A \cdot B = C$). We see that $K(\theta)$ becomes a one-dimensional (quadratic) family of matrices $K(\gamma) = (1/\gamma)^2 K_B + \gamma^2 K_A$. Using $\|\gamma A\|^2 - \|\nicefrac{1}{\gamma} \, B\|^2 = \delta$ and writing $\beta = \|\nicefrac{1}{\gamma} \, B\|^2$ and $H = \|A\|^2\|B\|^2$ we deduce that
\[
\beta^2 + \delta \beta - H = 0, \quad \beta > 0, \beta+\delta > 0, H > 0,
\]
which is the same as~\cref{eq:1eq_1var}.
This yields $\beta = \frac{-\delta + \sqrt{\delta^2 + 4H}}{2}$ and $\gamma^2 = \frac{2 \|B\|^2}{-\delta + \sqrt{\delta^2 + 4H}}$. 
\end{example}

\section{Optimization of the square loss}
\label{sec:squareloss}
In this section, we investigate the minimization of quadratic losses and the Euclidean distance degrees for LCNs. 
\subsection{The square loss}
\label{ssec:squareloss}

Given data $X\in\mathbb{R}^{d_0\times N}$ and $Y\in \mathbb{R}^{d_L\times N}$ (where $N$ is the number of data samples), and assuming that 
$XX^\top$ has full rank, 
we write the quadratic loss as:
\begin{equation}
\begin{aligned}
\ell(W) =\ell_{X,Y}(W) &= \|W X - Y\|^2 = 
\langle Y, Y \rangle - 2 \langle WX,Y \rangle  + \langle WX, WX \rangle \\
&= const. + 2 \langle  W (XX^\top), YX^\top(XX^\top)^{-1} \rangle + \langle WX, WX \rangle\\
&= const. + 2\langle W, U \rangle_{XX^\top}  + \langle W, W \rangle_{XX^\top} \\
&= const. + \|W - U\|^2_{XX^\top},
\end{aligned}
\label{eq:objective}
\end{equation}
where $U$ is the unconstrained optimizer of this quadratic problem, over the set of all matrices: 
\begin{equation*}
U = \operatorname{argmin} \limits_{V\in\mathbb{R}^{d_L\times d_0}}\|V X - Y\|^2 = YX^\top (XX^\top)^{-1}.  
\end{equation*}

As discussed in the previous sections, an LCN corresponds to a semi-algebraic set $\mathcal{M}$ of convolutional matrices $\overline{W}$. 
By~\cref{eq:objective}, the optimization of the square error $\ell(\overline W)$ over the function space $\mathcal{M}$ corresponds to a constrained linear regression problem, which is equivalent to solving 
$$
\min_{\overline W\in \mathcal{M}} \|U-\overline W\|_{XX^\top}^2. 
\label{eq:optimization_model}
$$
If $U\in\mathcal{M}$, then the solution is $U$. 
If $U\not\in \mathcal{M}$, we can optimize the objective over the boundary of the function space. We also may assume that the matrix $U$ is convolutional of the same format as the matrices in $\mathcal{M}$, for if not, we can consider the orthogonal projection (with respect to $\| \cdot \|_{XX^\top}$) to the appropriate linear space of convolutional matrices.

\begin{remark}
\label{rem:populationLoss}
The same expressions as above hold for population losses, by replacing $\frac{1}{N} XX^\top = \frac{1}{N}\sum_{i=1}^N x_i x_i^\top$ with $\mathbb E_{x \sim \mathcal D} [xx^\top]$ where $\mathcal D$ is the distribution over the input data. 
\end{remark}

If we assume that $U$ is a convolutional matrix with filter $u$, then we can also write the square loss in terms of the filter $\overline w$ of $\overline{W}$:
\begin{equation}
\label{eq:squareLossFilter}
\begin{aligned}
\ell( \overline w) =  \|\overline{W} - U\|_{XX^\top}^2 
&= \mathrm{tr} [(\overline W - U) XX^\top (\overline W - U)^\top]\\
&= \sum_{m=0}^{d_L-1} [(\overline W - U)_{m,:} XX^\top (\overline W - U)_{m,:}^\top],
\end{aligned}
\end{equation}
where we write $M_{m,:}$ for the $m$-th row of a given matrix $M$.
We denote by $k$ and $s$ the filter size and stride of the convolutional matrices in the function space $\mathcal{M}$.
Since the $m$-th row of $\overline{W}-U$ is obtained by shifting $\overline{w}-u$ by $sm$ positions to the right, we see that
\begin{equation}
\label{eq:filterLoss}
\begin{aligned}
\ell(\overline{w})= (\overline w - u)^\top \tau({XX^\top}) (\overline w - u),
\end{aligned}
\end{equation}
where $\tau: \RR^{d_0 \times d_0} \rightarrow \RR^{k \times k}$ is an operator that sums all shifts of rows and columns of a 
matrix: 
\begin{equation}
\label{eq:tauoperator}
\tau(M)_{ij} = \sum_{m=0}^{d_L-1} M_{i+sm,j+sm}. 
\end{equation}

\begin{remark}\label{rmk:circulant_tau}
If we consider circulant matrices instead of convolutional matrices, then all matrices appearing in~\cref{eq:squareLossFilter} would be square matrices of size $d_0 \times d_0$ and the trace would sum from $m=0$ to $d_0-1$.
The sum in~\cref{eq:tauoperator} would change accordingly and would be cyclic. 
\end{remark}

\begin{lemma}\label{lem:euclidean_dist}
Let $XX^\top$ be a multiple of the identity matrix. For any stride $s$, the square loss $\ell$ on the filters is equivalent to the Euclidean distance between $\overline w$ and $u$, \ie, 
\mbox{$
\ell(\overline w) \propto \|\overline w-u\|^2.
$}
\end{lemma}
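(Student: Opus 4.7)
The plan is a direct computation from the formula $\ell(\overline w) = (\overline w - u)^\top \tau(XX^\top)(\overline w - u)$ in \cref{eq:filterLoss}. Since $XX^\top = c\,I_{d_0}$ for some $c > 0$ by assumption, everything reduces to evaluating $\tau$ on a scalar multiple of the identity.

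First I would unfold the definition of $\tau$ from \cref{eq:tauoperator}: for $i, j \in \{0, \ldots, k-1\}$,
\[
\tau(cI_{d_0})_{ij} = c\sum_{m=0}^{d_L-1} (I_{d_0})_{i+sm,\, j+sm}.
\]
If $i \ne j$, every summand vanishes because the identity matrix is diagonal. If $i = j$, each summand equals $1$ provided that the index $i + sm$ lies in the valid range $\{0, \ldots, d_0 - 1\}$. The bound $i + sm \le (k-1) + s(d_L-1)$ together with the compatibility relation $(k - 1) + s(d_L - 1) = d_0 - 1$ from \cref{eq:dimensionConstraintsToeplitz} (applied to the end-to-end filter size $k$ and stride $s$) ensures this holds for all $m \in \{0, \ldots, d_L-1\}$. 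So every summand contributes $1$, and the diagonal entry equals $c \, d_L$.

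Combining, $\tau(XX^\top) = c\, d_L \cdot I_k$, and substituting into \cref{eq:filterLoss} gives
\[
\ell(\overline w) = c\, d_L \, (\overline w - u)^\top (\overline w - u) = c\, d_L \, \|\overline w - u\|^2,
\]
which establishes the proportionality with explicit constant $c\, d_L$. There is no real obstacle; the only subtlety is checking the index bound, which is immediate from the dimension compatibility already assumed throughout the paper.
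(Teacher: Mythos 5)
Your proof is correct and follows essentially the same route as the paper's: the paper likewise observes that $\tau(XX^\top)$ is $d_L$ times a multiple of the identity and then invokes \cref{eq:filterLoss}. You simply spell out the index-range check $(k-1)+s(d_L-1)=d_0-1$ that the paper leaves implicit.
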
 

\begin{proof}
If $XX^\top$ is a multiple of the identity, 
then $\tau(XX^\top) = d_L XX^\top$ for convolutional matrices
resp. $\tau(XX^\top) = d_0 XX^\top$ for circulant matrices.
Now the claim follows from \cref{eq:filterLoss}.
\end{proof}

\begin{remark}
\label{rem:diagonalCirculant}
The lemma above holds both for convolutional and circulant matrices.
If we restrict ourselves to the circulant setting, then the statement holds more generally if $XX^\top$ is an arbitrary diagonal matrix as long as $s$ and $d_0$ are coprime. This follows from \Cref{rmk:circulant_tau}. Indeed,
if $s$ and $d_0$ are coprime, $\tau(XX^\top)_{ii}$ sums over all entries of the diagonal matrix $XX^\top$ and so  $\tau(XX^\top)$ is a multiple of the identity matrix.
\end{remark}

We can apply \Cref{lem:euclidean_dist} also for population losses (see \Cref{rem:populationLoss}) where the data distribution $\mathcal D$ is such that the covariance matrix $\mathbb E_{x \sim \mathcal D}[xx^\top]$ is a multiple of the identity (\eg, Gaussian data), and conclude that using the square loss on the LCN is equivalent to using a standard Euclidean distance at the level of filters. If we make use of circulant convolutions and restrict ourselves to a stride that is coprime to $d_0$, then this is also the case if the covariance matrix is simply diagonal.

Next we give a detailed example illustrating the critical points of the quadratic loss for an LCN and how these depend on the particular training data. In the next subsection we will discuss the number of critical points in more generality. 

\begin{example}\rm
\label{ex:tinyexample}
We return to \Cref{example:minimal} to consider minimizing the square loss over the set of functions $\mathcal{M} = \{ W = (A,B,C) \colon B^2 \geq 4AC \}$ represented by the two layer LCN with architecture specified by $\bm d = (3,2,1)$, $\bm k = (2,2)$, $\bm s = (1,1)$. 
Given data matrices $X,Y$, we are minimizing an objective $\ell(W)=\|WX-Y\|^2$ which is convex over the space of all matrices $W\in\mathbb{R}^{1\times 3}$. 
Hence, if the unconstrained minimizer is outside $\mathcal{M}$ and $XX^\top$ is full rank, then the set of minimizers over $\mathcal{M}$ will be at the boundary of $\mathcal{M}$. 
The situation is illustrated in \Cref{fig:zwei}. 
The boundary $\partial \mathcal{M} = \{W=(A,B,C) \colon B^2 = 4AC\}$ can be parametrized by $(A,C) \mapsto W = (A,\pm 2\sqrt{AC},C)$ with $(A,C)$ either both non-negative or both non-positive. 
In the following we consider the case $A,C\geq0$ (the case $A,C\leq 0$ is similar). 
The gradient of the objective function with respect to $A,C$ is 
\begin{align*}
\nabla_{A,C} \ell(A,C) =&
(-2YX^\top + 2WXX^\top) \nabla_{A,C}W^\top . 
\end{align*}
The first order criticality condition $\nabla_{A,C}\ell(A,C)=0$ means that 
\begin{align}
- YX^\top + W XX^\top  
= \lambda WJ, 
\label{eq:ex1storder}
\end{align}
where we introduce the matrix $J=\left[\begin{smallmatrix} 0&0&1\\
0&-1/2&0\\
1&0&0 \end{smallmatrix}\right]$, so that $W J = [C, \mp\sqrt{AC}, A]$ is a basis of the left null space of $\nabla_{A,C}W^\top$. 
We can solve \cref{eq:ex1storder} by writing 
\begin{equation}
W= YX^\top (XX^\top - \lambda J)^{-1}, 
\label{eq:tiny-critical}
\end{equation}
and plugging this into the defining equation of the boundary, $B^2=4AC$, which gives us an equation in $\lambda$. 
As discussed in~\cite[Section~4.4]{1027497}, computing a square distance from a point to a quadric surface results in a degree $6$ polynomial in $\lambda$. 
Since our boundary equation is homogeneous, we can clear the determinant in Cramer's rule for the matrix inverse and use simply the adjugate matrix, which results in a degree $4$ polynomial in $\lambda$: 
\begin{equation}
(YX^\top \operatorname{adj}(XX^\top-\lambda J)_{:,2})^2 - 4(YX^\top \operatorname{adj}(XX^\top-\lambda J)_{:,1})(YX^\top \operatorname{adj}(XX^\top-\lambda J)_{:,3}). 
\label{eq:ex-lambda}
\end{equation}
Solving this, we typically obtain $4$ values of $\lambda$, which when inserted in \cref{eq:tiny-critical} give us the critical points in the boundary of the function space. 

\begin{figure}
\centering
\begin{tikzpicture}
\node at (0,0) {\includegraphics[width=6cm]{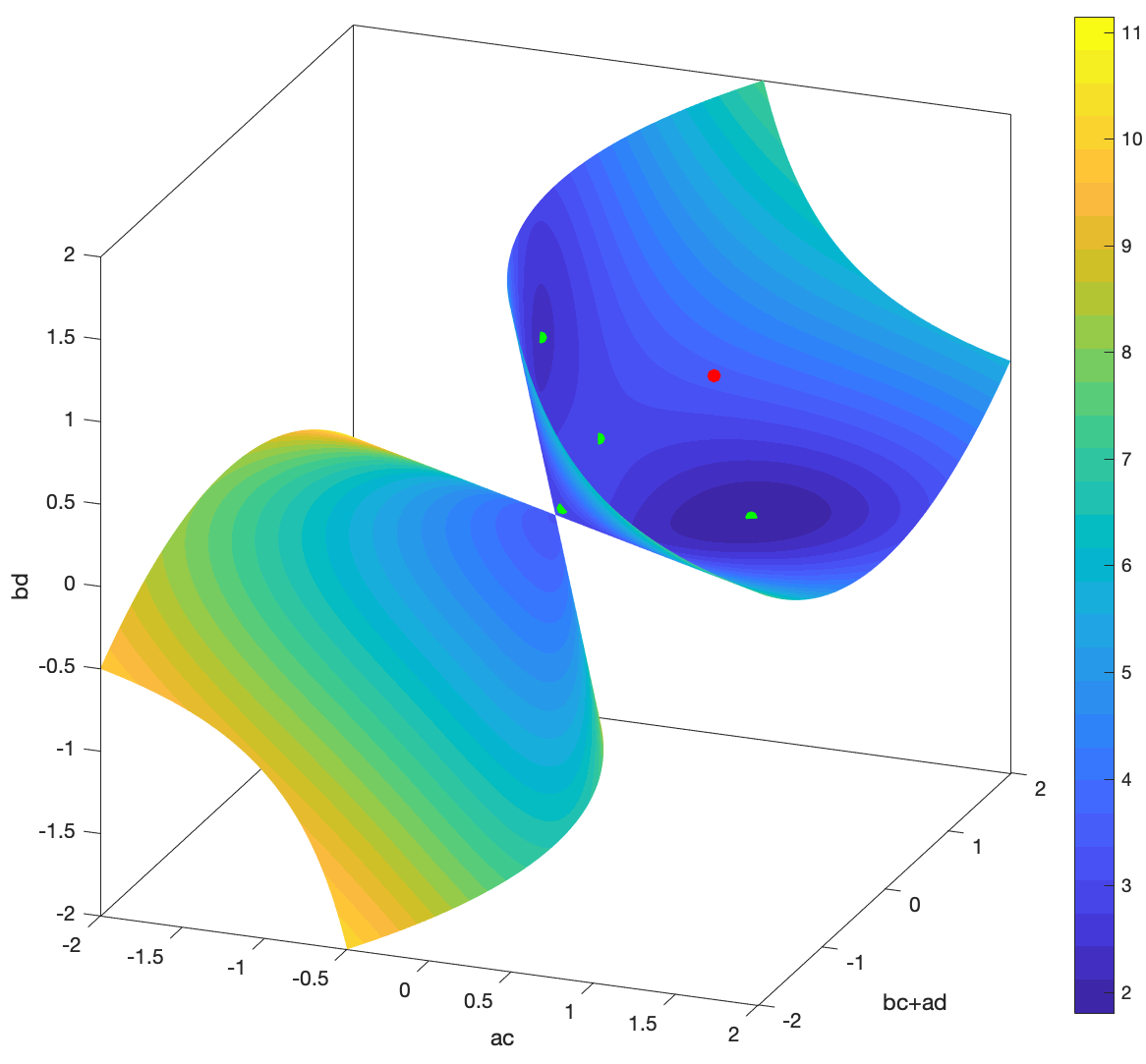}}; 
\node at (7.5,0) { 
\scalebox{.8}{ 
    \begin{tikzpicture}[x=1cm,y=1cm]
\definecolor{col1}{rgb}{0, 0.4470, 0.7410}
\definecolor{col2}{rgb}{0.8500, 0.3250, 0.0980}
\definecolor{col3}{rgb}{0.8361, 0.6246, 0.1125}
\definecolor{col4}{rgb}{0.4940, 0.1840, 0.5560}
\definecolor{col5}{rgb}{0.4660, 0.6740, 0.1880}
    \hypersetup{linkcolor=col5}
    
    \node at (0,0) {\includegraphics[clip=true, trim=0cm 0cm 0cm 1cm, width=10cm]{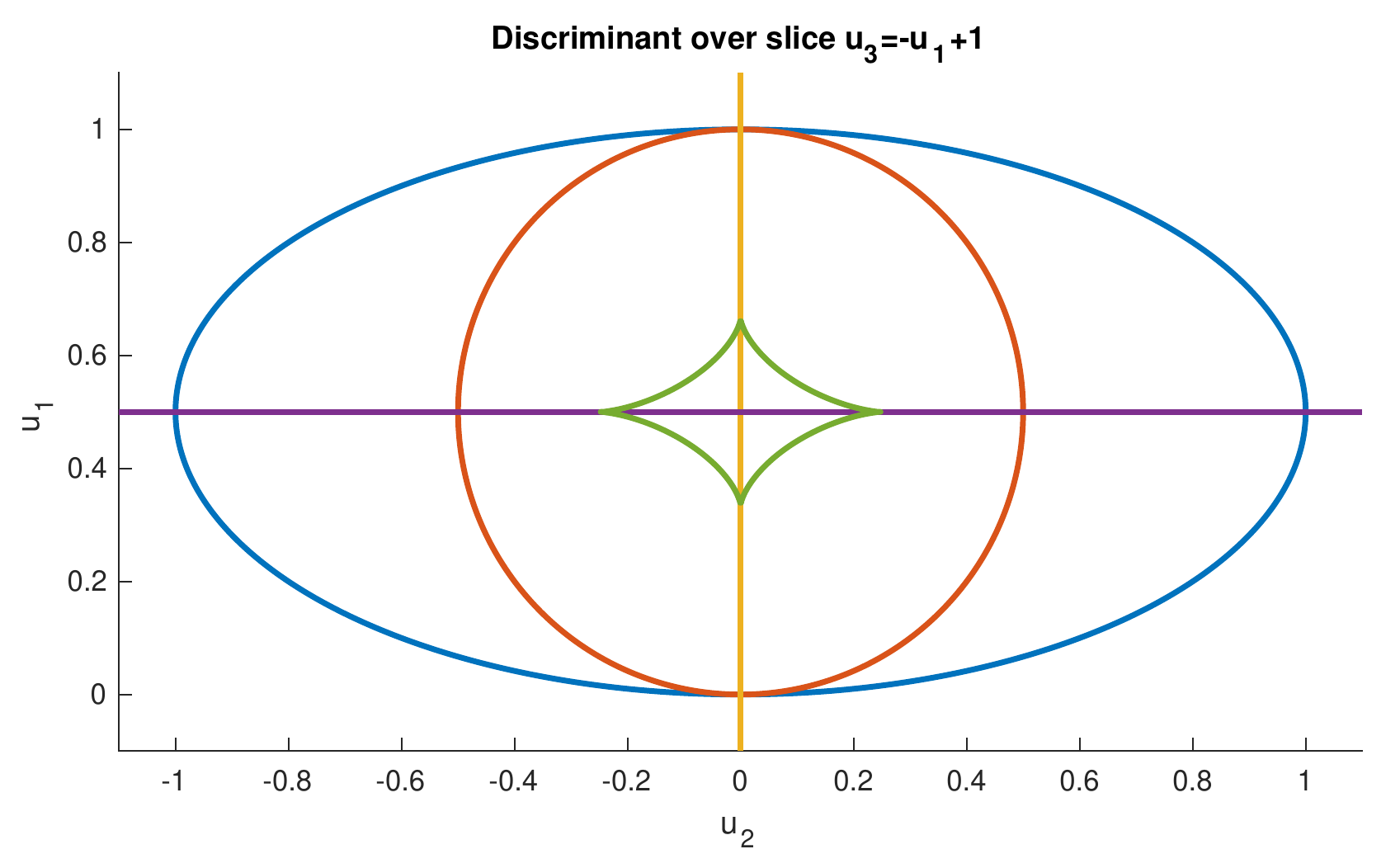}};
    \node at (3.5,2.5) {\textcolor{col1}{$u_2^2 - 4 u_1 u_3$ 
    }};
    \node at (3.2,1) {\textcolor{col2}{$u_2^2 - u_1 u_3$}};
    \node at (.3,3) {\textcolor{col3}{$u_2^2$}};
    \node at (-2.7,0.2) {\textcolor{col4}{$(u_1-u_3)^2$}};
    \node at (1.3,1) {\textcolor{col5}{\cref{eq:discriminant_long_tiny_example}}};
    
    \node[
    fill=white, fill opacity=1, text opacity=1, inner sep = 1] (N1) at (3.8,.125) {2 local min and 2 saddles};
    \node[
    fill=white, fill opacity=1, text opacity=1, inner sep = 1] (N2) at (3.8,-.4375) {1 min and 1 saddle};
    \node[
    fill=white, fill opacity=1, text opacity=1, inner sep = 1] (N3) at (3.8,-1) {2 local min}; 
    
    \draw (.4,.125) -- (N1.west);
    \draw (1.25,-.4375) -- (N2.west);
    \draw (2.4,-1) -- (N3.west);
    \end{tikzpicture}
}
};
\end{tikzpicture}

\caption{
Left: 
Boundary of the function space in $\mathbb{R}^{1\times 3}$ from \Cref{example:minimal}. 
The complement of the function space is inside the elliptic double cone with axis $(1,0,1)$. 
The boundary is colored by values of the square loss for random data $X,Y$ with $U=YX^\top(XX^\top)^{-1}$ shown as a red dot. 
For this data there are two local minima and two saddles shown as green dots. 
Right: Discrimination of datasets with $XX^\top=I$ in terms of $U=(u_1,u_2,u_3)=YX^\top$ by the types of critical points of the square loss as discussed in \Cref{ex:tinyexample}. 
    Shown is the slice $u_3=-u_1+1$. 
    For points with $u_2^2-4u_1u_3<0$ outside the function space, 
    we have cases with two local minima and two saddles, one minimum and one saddle, and two local minima. 
}
\label{fig:zwei}
\label{fig:discriminant_tiny_example}
\end{figure}

Now we take a closer look at the special case with $XX^\top=I$. In this case, the unconstrained optimal solution is $U = (u_1, u_2, u_3)= YX^\top$ and \cref{eq:ex-lambda} simplifies to 
\begin{multline}
(u_2^2-u_1u_3)\lambda^4+(-u_1^2-4u_1u_3-u_3^2)\lambda^3+(-4u_1^2-2u_2^2-5u_1u_3-4u_3^2)\lambda^2\\
+(-4u_1^2-4u_1u_3-4u_3^2)\lambda + (u_2^2-4u_1u_3).
\label{eq:ex-lambda1}
\end{multline}
We can classify the points $(u_1,u_2,u_3)$ for which the problem has different numbers of solutions. 
When $u_2^2-u_1u_3\neq 0$, \cref{eq:ex-lambda1} is a degree 4 polynomial in $\lambda$ which generally has 4 or 2 real solutions (the global minimum corresponds to a real root). 
The cases are separated by the discriminant of \cref{eq:ex-lambda1} with factors 
$u_2^2$, $(u_1-u_3)^2$, $(u_1+u_3)^2$, 
and
\begin{multline}
32u_1^6 + 435u_1^4u_2^2 + 384u_1^2u_2^4+256u_2^6 - 240u_1^5u_3 - 960u_1^3u_2^2u_3 - 960u_1u_2^4u_3 + 696u_1^4u_3^2\\
+ 1098u_1^2u_2^2u_3^2 + 384u_2^4u_3^2 - 980u_1^3u_3^3 - 960u_1u_2^2u_3^3 + 696u_1^2u_3^4 + 435u_2^2u_3^4 - 240u_1u_3^5 + 32u_3^6. 
\label{eq:discriminant_long_tiny_example}
\end{multline}
We visualize the situation in Figure~\ref{fig:discriminant_tiny_example}. 
The figure shows the slice $u_3 = -u_1+1$. 
Since the discriminant is homogeneous, each point in the slice is representative of a line through the origin in $\mathbb{R}^3$. The zeros of the factor $(u_1+u_3)^2$ are not visible over this slice. 
We see that the discriminant separates $12$ generic cases outside of the function space (\ie, inside the blue ellipse, where $u_2^2 - 4u_1u_3<0$). 
By evaluating individual instances in each of these cases, 
we find that the points inside the green curvy diamond (caustic) have two local minima and two saddles, 
the points between the green curvy diamond and the red circle have one minimum and one saddle, 
and the points between the red circle and the blue ellipse have two local minima (whereby we observed that only one of them satisfies $A,C\geq 0$).  
\end{example}

\subsection{Euclidean distance degrees} 
\label{sec:EDdegree} 

We now discuss the number of critical points  of the optimization problem in function space using the square loss~\cref{eq:objective}.
This problem is given by 
\begin{align*}
    \min_{W \in \mathcal{M}} \| U-W  \|^2_{XX^\top},
\end{align*}
where $\mathcal{M}$ is the function space of an LCN and $U$ and $X$ are fixed data matrices. 
We focus on one-dimensional convolutions with stride one. 
We first assume that $XX^\top$ is the identity matrix (\ie, we are interested in minimizing the standard Euclidean distance of the data $U$ to the function space $\mathcal{M}$) and comment on other choices of $X$ later.

By \Cref{thm:criticalPointsAreOnMultipleRootLoci}, each critical point in the parameter space of the LCN corresponds in function space to a critical point of the Euclidean distance from the data $U$ to a multiple root locus $\Delta_\lambda$. 
For almost every choice of $U$ the number of \emph{complex} critical points of minimizing the Euclidean distance from $U$ to $\Delta_\lambda$ is the same. 
This number is known as the \emph{Euclidean distance degree} (\emph{ED degree} for short)  of $\Delta_\lambda$~\cite{draisma2016euclidean}.
Hence, if we assume that the data $U$ is generic, \Cref{cor:rrmp} shows that 
the number of $\mu(\theta)$, where $\theta$ is a critical point in the parameter space of the given LCN, is upper bounded by the sum of the ED degrees of all multiple root loci  $\Delta_\lambda \subset \RR[\x,\y]_{k-1}$, where $k$ is the filter size in the function space $\mathcal{M}$ and $\lambda=\lambda_{\rho|\gamma}$ comes from an \rrmp\, $(\rho \mid \gamma)$ that is compatible with the LCN architecture.
\begin{example}
\label{ex:upperEDdegreeBound}
For generic data $U$, \Cref{ex:partitionVsArchitecture} shows the following:
The number of $\mu(\theta)$, where $\theta$ is a critical point of an LCN with architecture $\bm k = (3,2,2)$ resp.\ $\bm {k'} = (4,2)$ is at most the sum of the ED degrees of
$\Delta_{(1,1,1,1)}, \Delta_{(1,1,2)},\Delta_{(2,2)}, \Delta_{(1,3)}$ resp.\ $\Delta_{(1,1,1,1)}, \Delta_{(1,1,2)}$. 
\end{example}

The state-of-the-art on ED degrees of multiple root loci is presented in \cite{lee2016duality}.
For instance, the ED degree of the discriminant hypersurface of degree-$(k-1)$ polynomials is $3(k-1)-2$~\cite[Example 5.12]{draisma2016euclidean}. 
More generally, the ED degree of the multiple root locus $\Delta_{(\alpha, 1, \ldots, 1)}$ (with $\alpha >1$ and  $k-1-\alpha$ ones) is $(2\alpha-1)(k-1)-2(\alpha-1)^2$~\cite[Theorem 5.1]{lee2016duality}.
Many more ED degrees are computed in \cite[Table 1]{lee2016duality}, but a closed formula for general partitions is not known. 

\begin{example}\rm
\label{ex:EDdegreesGeneric}
$(k=5)$
The discriminant of quartic polynomials has ED degree $10$.
Its iterated singular loci are the following 
multiple root loci: $\Delta_{(2,2)}$ with ED degree $13$, $\Delta_{(1,3)}$ with ED degree $12$, and $\Delta_{(4)}$ with ED degree $10$. 
Hence, the upper bounds on the number of $\mu(\theta)$ for critical points $\theta$ given in \Cref{ex:upperEDdegreeBound} are $1+10+13+12=36$ resp.\ $1+10=11$. 
\end{example}

\begin{remark}\label{rmk:bombieri}
The ED degrees discussed in this section are often referred to as \emph{generic ED degrees} in the algebraic literature.
The \emph{special ED degree} refers to the following slightly different norm to be optimized: for two polynomials $P = \sum_{i=0}^{k-1} w_i \x^i$ and $Q = \sum_{i=0}^{k-1} u_i \x^i$, 
\begin{equation}\label{eq:bombieri}
\langle P, Q \rangle_{B}:=\sum_{i=0}^{k-1} \frac{i!(k-i-1)!}{(k-1)!} u_i w_i. 
\end{equation}
The letter ``$B$'' stands for \emph{Bombieri norm}. For example, if $P$ and $Q$ are quadratic polynomials, then $\langle P,Q \rangle_B = u_2 w_2 + \frac{1}{2}u_1 w_1 + u_0 w_0$. 
The Euclidean inner product that arises for LCNs if $XX^\top$ is the identity is instead simply $\langle P,Q \rangle_2 = u_2 w_2 + u_1 w_1 + u_0 w_0$. This slight difference has an impact on the ED degrees. Indeed, the ED degree of the discriminant hypersurface $\{(w_0,w_1,w_2) \colon w_1^2 = w_0 w_2\}$ is $2$ for the Bombieri norm, and $4$ for the standard Euclidean norm (see Example~\ref{ex:tinyexample} for a computation of the four critical points).
More generally, if we consider the multiple root loci for polynomials of degree $k-1$, then the special ED degree of the discriminant hypersurface is $k-1$~\cite[Corollary 8.7]{draisma2016euclidean} (whereas the generic ED degree is $3(k-1) - 2$, as noted above). The special ED degree of the multiple root locus  $\Delta_{(\alpha, 1, \ldots, 1)}$ (with $\alpha >1$ and $k-1-\alpha$ ones) is $k-1$, independently of $\alpha$~\cite[Theorem 5.1]{lee2016duality}.
\end{remark}

\begin{example} \label{ex:specialEDdegree} \rm $(k=5)$
The generic ED degrees of the multiple root loci of quartic polynomials are listed in Example~\ref{ex:EDdegreesGeneric}.
The special ED degrees are smaller:
for $\Delta_{(1,1,2)}$ it is $4$, 
for $\Delta_{(2,2)}$ it is $7$,
for $\Delta_{(1,3)}$ it is $4$,
and for $\Delta_{(4)}$ it is $4$. 
\end{example}

\begin{remark}
The above discussion shows that the Bombieri norm can be thought of as a ``special'' norm that leads to lower ED degrees and thus makes the optimization landscape simpler. Perhaps counterintuitively, the standard Euclidean distance is instead \emph{generic}, in the sense that almost all pairings $\langle\cdot,\cdot\rangle$ will lead to the same ED degrees for multiple-root loci.
In other words, for  \emph{almost all} data matrices the ED degrees of the multiple root loci would be the same as assuming $XX^\top$ is the identity. 
\end{remark}

A natural question in this setting is if there exists a ``special'' data distribution that induces the Bombieri norm~\cref{eq:bombieri} in polynomial space. For circulant convolutions this is not possible. Indeed, for any matrix $M$, the matrix $\tau(M)$ when $d_0 = d_L$ will always have constant diagonal entries by the same reasoning as in Remark~\ref{rem:diagonalCirculant}. 
In the convolutional (Toeplitz) case, however, such special data distributions can exist, as the following simple  example demonstrates.

\begin{example}\rm
We consider a single convolutional layer with $k=3$, $d_0 = 4$ and $d_1 = 2$. If the data covariance matrix is such that
\[
XX^\top = \left[\begin{smallmatrix}
3/4 &   &   &  \\
  & 1/4 &   &  \\
  &   & 1/4 &  \\
  &   &   & 3/4\\
\end{smallmatrix}\right],
\]
then the induced metric in the space of quadratic polynomials is indeed the Bombieri norm $\langle P,Q \rangle_B = u_2 w_2 + \frac{1}{2} u_1 w_1 + u_0 w_0$. This example suggests that non-standard normalizations of the data might lead to more favorable optimization landscapes for convolutional networks.
\end{example}

\begin{example}\rm
We return to Example~\ref{ex:tinyexample}, an LCN with $\mathcal{M} = \{W=(A,B,C)\colon B^2\geq4AC\}$, and investigate the structure of critical points of the square loss over the boundary $\partial \mathcal{M} = \{W=(A,B,C)\colon B^2=4AC\}$ when $XX^\top$ is compatible with the Bombieri norm, 
\begin{equation}
XX^\top = \left[\begin{smallmatrix}
1&    &  \\
 & 1/2&  \\
 &    & 1
\end{smallmatrix}\right]. 
\label{eq:tiny-example-bombieri-norm}
\end{equation}
A geometric interpretation of this setting is that the balls $\{W \colon \|W-U\|_{XX^\top}\leq c\}$ are ellipsoids with the same $1\times 2\times 1$ aspect ratio as the boundary of the function space. 
As before, we write $U=(u_1,u_2,u_3) = YX^\top (XX^\top)^{-1}$ for the unconstrained least squares solution. 
With $XX^\top$ taking the form \cref{eq:tiny-example-bombieri-norm}, equation \cref{eq:ex-lambda} for the case $A,C\geq0$ becomes 
\begin{multline}
(-1) (\lambda + 1) (\lambda + 1) ((u_1 u_3 - 4 u_2^2)\lambda^2  + (u_1^2 + 8 u_2^2 +  u_3^2)\lambda  + (u_1 u_3 - 4 u_2^2)). 
\label{eq:ex-lambda-Bombieri}
\end{multline}
This quartic polynomial in $\lambda$ has a repeated root $\lambda = -1$ for any values of $U$ and hence its discriminant vanishes identically. 
The distinction in terms of $U$ is determined by the last factor alone. 
We discuss the zeros of the individual factors in turn. 
\begin{itemize}[leftmargin=*]
\item 
Consider the factor $(\lambda+1)$ of \cref{eq:ex-lambda-Bombieri}. 
The root $\lambda=-1$ plugged into \cref{eq:ex1storder}
gives equation $W(XX^\top+J)=YX^\top$ for $W$. 
Since the matrix $XX^\top+J = \left[\begin{smallmatrix}
1 &0 &1\\
0&0&0 \\
1&0&1
\end{smallmatrix}\right]$ is not of full rank, for generic $YX^\top$ there is no solution and this case can be ignored. 
Solutions only exist in the non-generic case that $YX^\top = c \begin{bmatrix} 1& 0& 1\end{bmatrix}$, namely $\{W = (A, B, C) \colon A+C = c\}$, which is a hyperplane. 
The intersection of this hyperplane with the boundary of the function space gives us the set of critical points $\{(A,B,C)\colon A+C=c, B^2=4AC\}$. For the assumed case $A,C\geq0$, this is empty over the reals for $c<0$, a single point for $c=0$, and an ellipse for $c>0$. All critical points have the same Bombieri distance $c$ from $U=\begin{bmatrix} c& 0& c\end{bmatrix}$.  
\item 
Consider now the last factor of \cref{eq:ex-lambda-Bombieri}, which is quadratic in $\lambda$. 
Its discriminant has factors $(u_1 + u_3)^2$ and $u_1^2 - 2 u_1 u_3 + 16 u_2^2 + u_3^2$. 
The former has zero set $\{u_1=-u_3\}$, 
which also appeared in the case $XX^\top=I$ and is inside the function space. 
The latter has real zero set $\{u_2=0, u_1=u_3\}$, which is the center line of our boundary cone $\partial \mathcal{M}$ and can be interpreted as a degeneration of the caustic that appeared in the case $XX^\top=I$. It also corresponds to the non-generic case discussed above. 
\end{itemize}
Hence, when $XX^\top$ takes the form \cref{eq:tiny-example-bombieri-norm} compatible with the Bombieri norm, there is a single generic case for the datasets. 
We find that generic datasets $U=(u_1,u_2,u_3)$ outside the function space have two critical points over the boundary of the function space: 1 global minimum and 1 saddle. 
This is in contrast with the more complicated case distinction in Example~\ref{ex:tinyexample} and Figure~\ref{fig:discriminant_tiny_example} when $XX^\top=I$, where some generic datasets led to 4 critical points. 
On the other hand, for the Bombieri norm there are special datasets, with $\{u_2=0, u_1=u_3 >0\}$, with infinitely many critical points, namely entire ellipses of global minima. 
\end{example}

\section{Examples and numerical evaluation} 
\label{sec:experiments}

We present computational experiments on LCNs with 1D convolutions with stride one. 
In \cref{prop:critical-points} we saw that critical parameters are global minimizers of the objective or are contained in the discriminant hypersurface. In fact, in \Cref{thm:criticalPointsAreOnMultipleRootLoci} we observed that any critical parameter corresponds to a critical point over the locus with the same root structure. 
We discuss the relevant root loci and evaluate their appearance in  gradient optimization using different LCN architectures. 
As observed in the previous sections, the order of the filters does not affect the function space nor the gradient dynamics. For this reason, we consider architectures with filter widths $k_1 \ge \cdots \ge k_L$. 

We perform our experiments with the five non-filling architectures from \Cref{ex:boundaries}. 
The left hand side of \Cref{fig:zwei} visualizes the function space $\mathcal M_{(2,2)}$.
The function spaces representing cubic and quartic polynomials are depicted in \Cref{fig:cubic_quartic_discriminants}:
On the left, $\mathcal M_{(2,2,2)}$ is the ``inside'' of the ``cuspidal wedge.''
On the right, 
the complement of $\mathcal M_{(3,2,2)} = \mathcal M_{(4,2)}$ are the points above the blue ``bowl.'' 
These points correspond to one of the two convex cones described in \cref{prop:twoConvexCones}. 
The other cone is not visible since the figure fixes the highest degree coefficient to $1$.
Moreover, $\mathcal M_{(2,2,2,2)}$ 
is the set inside the triangular region. 

\begin{figure}[htb]
    \centering
\begin{tabular}{cc}    
Cubic & Quartic\\
    \begin{tikzpicture}
    \node at (0,0) {\includegraphics[width=5cm,clip=true,trim=0cm 0cm 0cm 1cm]{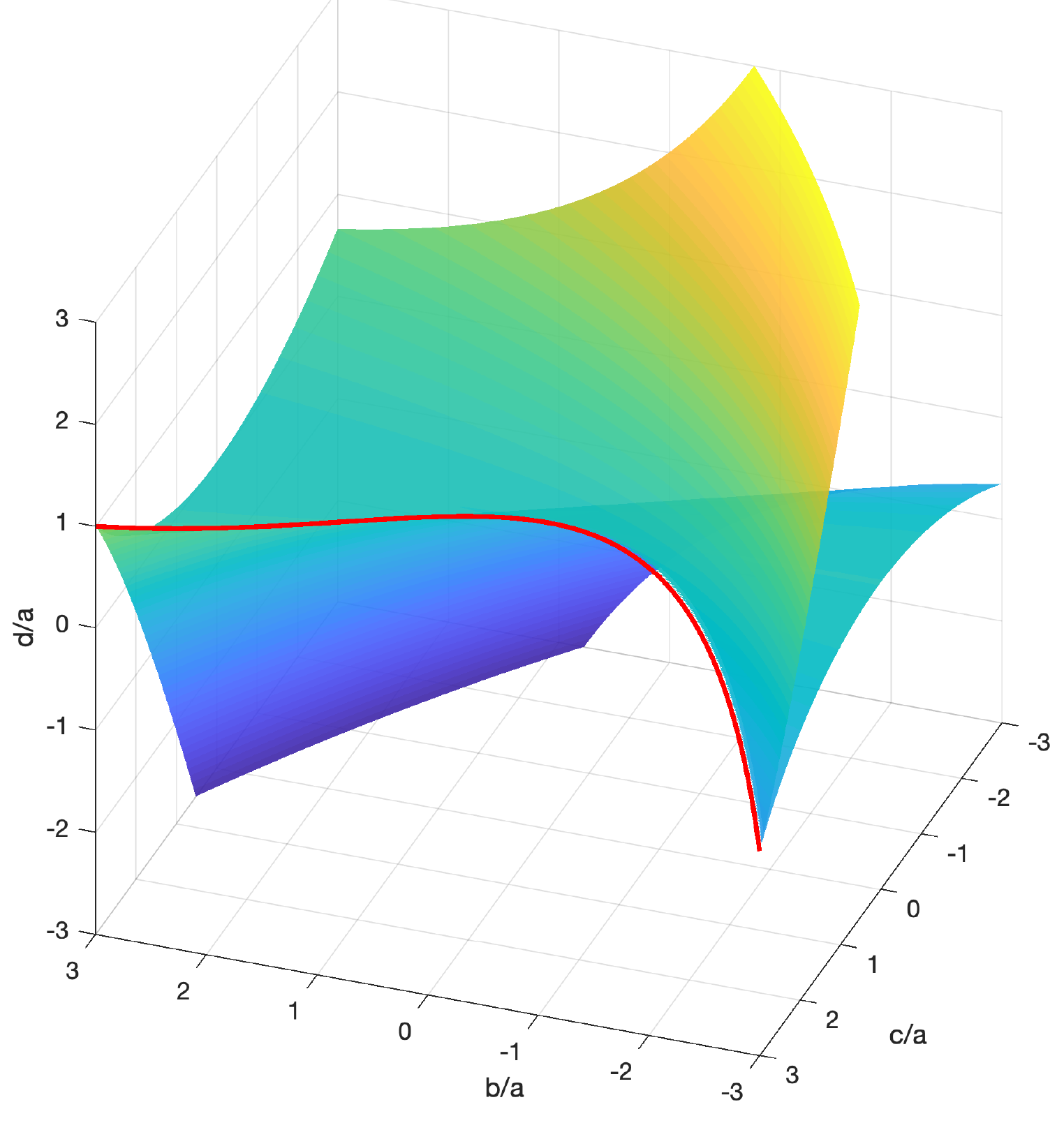}}; 
    \node at (-.75,2) {\textcolor{black}{$\Delta=0$}};
    \node at (0.2,-1.5) {\textcolor{red}{\small $\delta_1=\delta_2=\delta_3=0$}};
    \node at (2.2,0.78) {{\small $\mathcal{M}_{(2,2,2)}$}};
    \end{tikzpicture}
&
    \begin{tikzpicture}
    \node at (0,0) {\includegraphics[width=5cm,clip=true,trim=0cm 0cm 0cm 1cm]{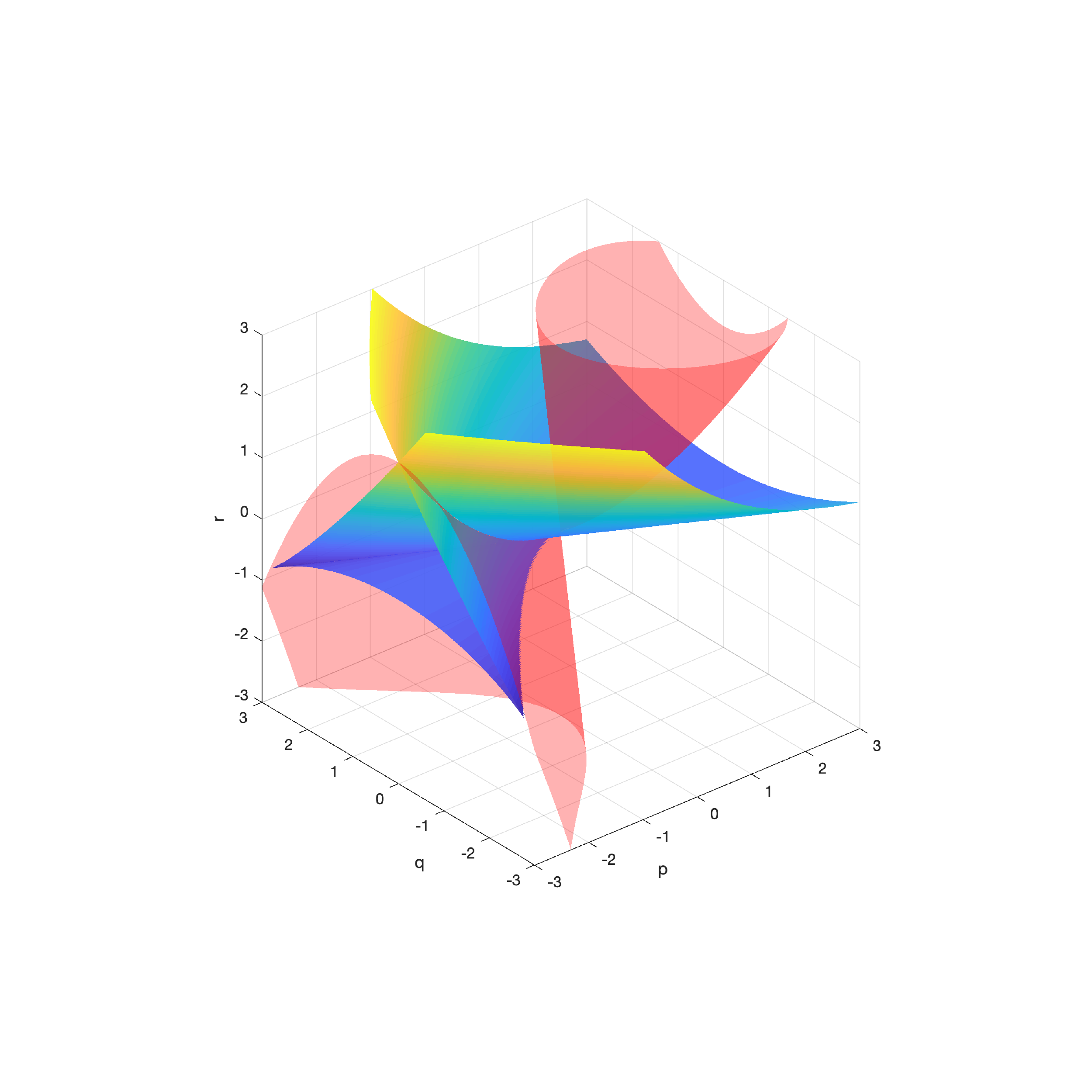}}; 
    \node at (1.1,0.1) {\textcolor{black}{$\delta=0$}};
    \node at (.8,-1) {\textcolor{red}{$\delta'=0$}};
    \node[fill=white, fill opacity=1, text opacity=1, inner sep = 0] 
    at (-2,1.2) {{\small $\mathcal{M}_{(3,2,2)}$}};    
    \node[fill=white, fill opacity=1, text opacity=1, inner sep = 0] 
    (M2222) at (-2,-1.5) {{\small $\mathcal{M}_{(2,2,2,2)}$}};    
    \draw (-1,0) -- (M2222.north);
    \end{tikzpicture}
\end{tabular}    
    \caption{
    Left: The surface $\Delta=0$ of coefficients $(a,b,c,d)$, $a\neq0$, for which a cubic polynomial $a\x^3+b \x^2\y+c\x\y^2+d\y^3$ has a double root. The coefficients with triple roots are shown in red. 
    The coefficients with $\Delta>0$ 
    (the interior of $\mathcal{M}_{(2,2,2)}$)
    are on the side with smaller values of $c$. 
    Right: The surfaces $\delta=0$ and $\delta'=0$ (as in \Cref{ex:rrmps}) of coefficients $(p,q,r)$, that separate the real root multiplicities of a depressed quartic polynomial $\x^4+p\x^2\y^2+q\x\y^3+r\y^4$. 
    The positive values of $\delta$ and $\delta'$ are on the sides that do not contain the labels. 
    The complement of $\mathcal{M}_{(3,2,2)}$ is above the blue bowl. 
    The interior of $\mathcal{M}_{(2,2,2,2)}$ is within the triangular blue region. 
    }
    \label{fig:cubic_quartic_discriminants}
\end{figure}

\subsection{Critical points reached by gradient descent}
For each of the architectures listed above, we generated $10,000$ random data sets $X,Y$ each consisting of $10$ i.i.d.\ input-output pairs from a standard normal distribution, with inputs in $\mathbb{R}^{d_0}$ and outputs in $\mathbb{R}$. Since $d_L=1$, we have from \Cref{prop:nonzerodiagsExtension} that $d_0  = k$. 
For each dataset and architecture, we initialized the parameters at random from a standard normal distribution and ran gradient descent optimization of the square loss $\L$ 
with fixed step size $0.01$ and stopping criterion $\|\nabla \L\|^2 \leq 10^{-14}$. 
Instances which did not reach the stopping criterion within a maximum of $15,000$ iterations were discarded. 
Since the final coefficients are only approximately critical points, they rarely correspond to polynomials with exactly repeated roots. To assess the \rrmp's of solutions, we compute the roots of all factor filters and consider a root $r$ as real if $|r-\operatorname{real}(r)| \leq \text{\texttt{tol}} \cdot |r|$, and consider two roots $r,r'$ as same if $|r-r'| \leq \text{\texttt{tol}}\cdot \max\{|r|,|r'|\}$ with $\text{\texttt{tol}} = 10^{-4}$. We use a similar procedure for the initializations and for the target. 
This classification scheme is relatively robust, but is not exact. 
The results are shown in Table~\ref{table:solution_types}. 

\begin{itemize}[leftmargin=*] 
\item 
For the architecture $\bm k =(2,2)$, about 66\% of the datasets correspond to targets with \rrmp\, {$11|0$} inside the function space, and gradient descent finds a solution with nearly zero training error. 
About 34\% of the datasets correspond to targets with \rrmp\, {$0|1$} outside the function space and gradient descent finds solutions with \rrmp\, {$2|0$} at its boundary. 
\item 
For $\bm k=(2,2,2)$, 
about 25\% of the datasets have \rrmp\, {$111|0$} inside the function space, and for most of these, gradient descent finds a solution of the same \rrmp\, with nearly zero loss. 
About 75\% of the datasets have \rrmp\, {$1|1$} outside the function space. Of these, about 70\% led to a solution with \rrmp\, {$12|0$}, \ie, a smooth point of the discriminant hypersurface, and about 30\% led to a solution with \rrmp\, {$3|0$}, \ie, a point on the red curve in Figure~\ref{fig:cubic_quartic_discriminants}. 

\item 
For the architectures $\bm k = (4,2)$ or $(3,2,2)$, which have the same function space, the different parametrizations lead to different optimization behaviors. 
First, the initialization procedure leads to different distributions in function space, 
with $\bm k = (4,2)$ being more frequently initialized with \rrmp\, {$11|1$} 
than $\bm k = (3,2,2)$. 
For targets with \rrmp\, {$0|11$}, outside the function space, 
the solutions often correspond to zeros of the discriminant and are critical points in parameter space but they do not lie on the Euclidean boundary of $\mathcal{M}_{\bm k}$.
Hence, they are spurious critical points. 
The effect is more pronounced in the deeper architecture, which has a more intricate parametrization. The optimization landscapes are illustrated in Figure~\ref{fig:landscapes}. 
For targets with \rrmp\, {$11|1$}, inside the function space, a good fraction of the optimization instances with $\bm k = (4,2)$ converged to sub-optimal solutions with \rrmp\, {$112|0$} or {$2|1$}, and for $\bm k = (3,2,2)$ an even larger fraction converged to sub-optimal solutions with \rrmp\, {$112|0$}, {$22|0$}, {$13|0$}, or {$2|1$}. 
This is in line with our theoretical discussion, by which $\bm k = (3,2,2)$, but not $\bm k = (4,2)$, can have critical points in parameter space with \rrmp's {$22|0$} and {$13|0$} for generic data (see \Cref{cor:rrmp} and \Cref{ex:partitionVsArchitecture}).

\item For $\bm k = (2,2,2,2)$, in contrast to the previous architectures, a fraction of the targets outside the function space (\ie, with \rrmp\, {$11|1$} or {$0|11$}) led to a solution with \rrmp\, {$4|0$}, \ie, the codimension 3 point at the origin in Figure~\ref{fig:cubic_quartic_discriminants}, which was predicted by \Cref{cor:rrmp}. 
\end{itemize}

\begin{table}
\caption{Rrmp's of solutions returned by gradient descent optimization of the square loss 
in parameter space for random datasets and initializations. 
The table omits a small fraction of targets and initializations numerically classified in \rrmp's of positive codimension. }
\label{table:solution_types}
\centering 
\small 
\begin{tabular}{ll}
$\mathcal{M}_{(2,2)}$&
\begin{tabular}[t]{l S[table-format=2.2] | l S[table-format=3.3] l}
       &      & \multicolumn{3}{c}{initialization {$11|0$} \;\; 100\% } \\
target & \%   & solution & \%& mean loss\\  
\hline  
{$11|0$\phantom{11}} & 65.8  & {$11|0$} & 100 & 1.15e-15\\
\hline 
{$0|1$}  & 34.2 &  {$2|0$} & 100 & 0.185  \\
\bottomrule 
\end{tabular}\\ 

$\mathcal{M}_{(2,2,2)}$& 
\begin{tabular}[t]{l S[table-format=2.2] | l S[table-format=3.3] l}
             &&\multicolumn{3}{c}{initialization {$111|0$} \;\; 100\% }\\
target  & \% & solution & \%& mean loss\\  
\hline  
{$111|0$\phantom{1}}& 24.9 &  {$111|0$} & 99.7 & 2.32e-15\\
                        && {$12|0$} & 0.051 & 0.945\\
                        && {$3|0$} & 0.205 & 1.19\\
\hline 
{$1|1$}  & 75  &  {$12|0$} & 69.8 & 0.224  \\
             &&  {$3|0$}  & 30.2 & 0.709 \\
\bottomrule 
\end{tabular}\\ 

$\mathcal{M}_{(4,2)}$& 
\begin{tabular}[t]{l S[table-format=2.2] | l S[table-format=3.3] l|l S[table-format=3.3] l}
    &&\multicolumn{3}{c|}{initialization {$1111|0$}\;\; 24.6\%} & \multicolumn{3}{c}{initialization {$11|1$}\;\; 75.3\%}\\
target & \%   & solution & \%& mean loss & solution & \%& mean loss\\  
\hline 
{$1111|0$}& 5.28 & {$1111|0$} & 100   & 3.04e-15 & {$1111|0$} & 100   & 3.1e-15 \\
\hline 
{$11|1$}  & 72.6 &  {$112|0$} & 15.5  & 0.228    & {$112|0$} & 12.4  & 0.193 \\
              &&  {$11|1$}  & 83.2  & 1.94e-15 & {$11|1$}  & 86.7  & 1.84e-15 \\
              &&  {$2|1$}   & 1.36 & 0.54      & {$2|1$}   & 0.886 & 0.397 \\
\hline 
{$0|11$}   & 22.1 &  {$112|0$} & 7.85 & 0.347     & {$112|0$} & 4.81  & 0.35 \\
              &&  {$2|1$}   & 92.2 & 0.231     & {$2|1$}   & 95.2  & 0.229 \\
\bottomrule 
\end{tabular}\\ 

$\mathcal{M}_{(3,2,2)}$& 
\begin{tabular}[t]{l S[table-format=2.2] | l S[table-format=3.3] l| l S[table-format=3.3] l}
                && \multicolumn{3}{c|}{initialization {$1111|0$}\;\; 65.1\%}  & \multicolumn{3}{c}{initialization {$11|1$}\;\; 34.9\%} \\
target & \%     & solution  & \%& mean loss & solution & \%& mean loss\\  
\hline 
{$1111|0$}& 4.82   &  {$1111|0$} & 99.6   & 4.68e-15 & {$1111|0$} & 100 & 4e-15\\
                &&  {$13|0$}   & 0.429   & 0.71 &&&\\
\hline 
{$11|1$}& 72.9    &    {$112|0$} & 27.1 & 0.221     & {$112|0$}  & 21.8 & 0.207\\
                &&   {$22|0$} & 1.28  & 0.992     & {$22|0$}   & 0.663& 0.879\\
                &&   {$13|0$} & 25.8  & 0.798     & {$13|0$}   & 15.6 & 0.71\\
                &&   {$11|1$} & 45.5  & 1.78e-15  & {$11|1$}   & 61.7 & 1.7e-15\\
                &&   {$2|1$}  & 0.381 & 0.446     & {$2|1$}    & 0.306& 0.418 \\
\hline 
{$0|11$}& 22.3    &    {$112|0$} & 11.2 & 0.374      & {$112|0$} & 9.09 & 0.365 \\
               &&   {$22|0$}  & 25.5 & 0.855      & {$22|0$}  & 17.5 & 0.882 \\
               &&   {$13|0$}  & 7.1  & 0.895      & {$13|0$}  & 4.21 & 0.937 \\
               &&   {$2|1$}   & 56.2 & 0.224      & {$2|1$}   & 69.2 & 0.208 \\
\bottomrule 
\end{tabular}\\ 

$\mathcal{M}_{(2,2,2,2)}$& 
\begin{tabular}[t]{l S[table-format=2.2] | l S[table-format=3.3] l}
             && \multicolumn{3}{c}{initialization {$1111|0$}\;\; 99.8\%}\\
target & \%   & solution & \%& mean loss \\  
\hline   
{$1111|0$}& 4.79 & {$1111|0$} & 99.7   & 5.41e-15 \\
              && {$13|0$}   & 0.274  & 0.297   \\
\hline 
{$11|1$}& 72.7   &  {$112|0$} & 33.3  & 0.259 \\
              &&  {$22|0$}  & 2.24  & 0.908 \\
              &&  {$13|0$}  & 52.1  & 0.774 \\
              &&  {$4|0$}   & 12.4  & 1.53 \\
\hline 
{$0|11$}& 22.5    &  {$112|0$} & 13.2 & 0.411 \\ 
              &&  {$22|0$}  & 40.7 & 0.779 \\ 
              &&  {$13|0$}  & 16.1 & 1.02 \\ 
              &&  {$4|0$}   & 30   & 1.48 \\ 
\bottomrule  
\end{tabular}
\end{tabular}
\end{table}

\begin{figure}
    \centering
\setlength\tabcolsep{0pt} 
\begin{tabular}{ccccc}
$\mathcal{M}_{(2,2)}$&$\mathcal{M}_{(2,2,2)}$&$\mathcal{M}_{(4,2)}$&
$\mathcal{M}_{(3,2,2)}$&$\mathcal{M}_{(2,2,2,2)}$\\
\includegraphics[clip=true,trim=0cm 3.5cm 1cm 3.6cm,width=.2\textwidth]{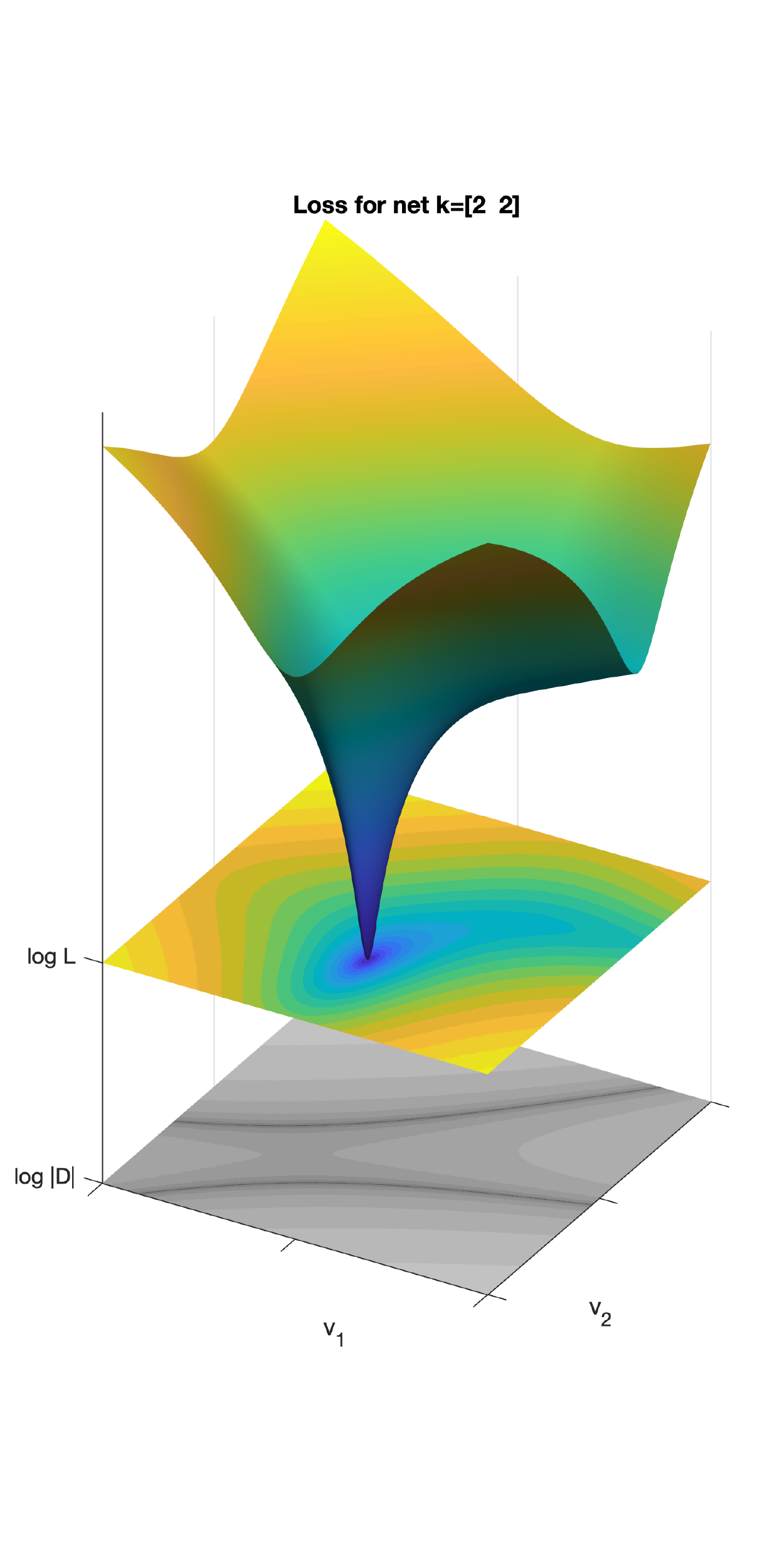}& 
\includegraphics[clip=true,trim=0cm 3.5cm 1cm 3.6cm,width=.2\textwidth]{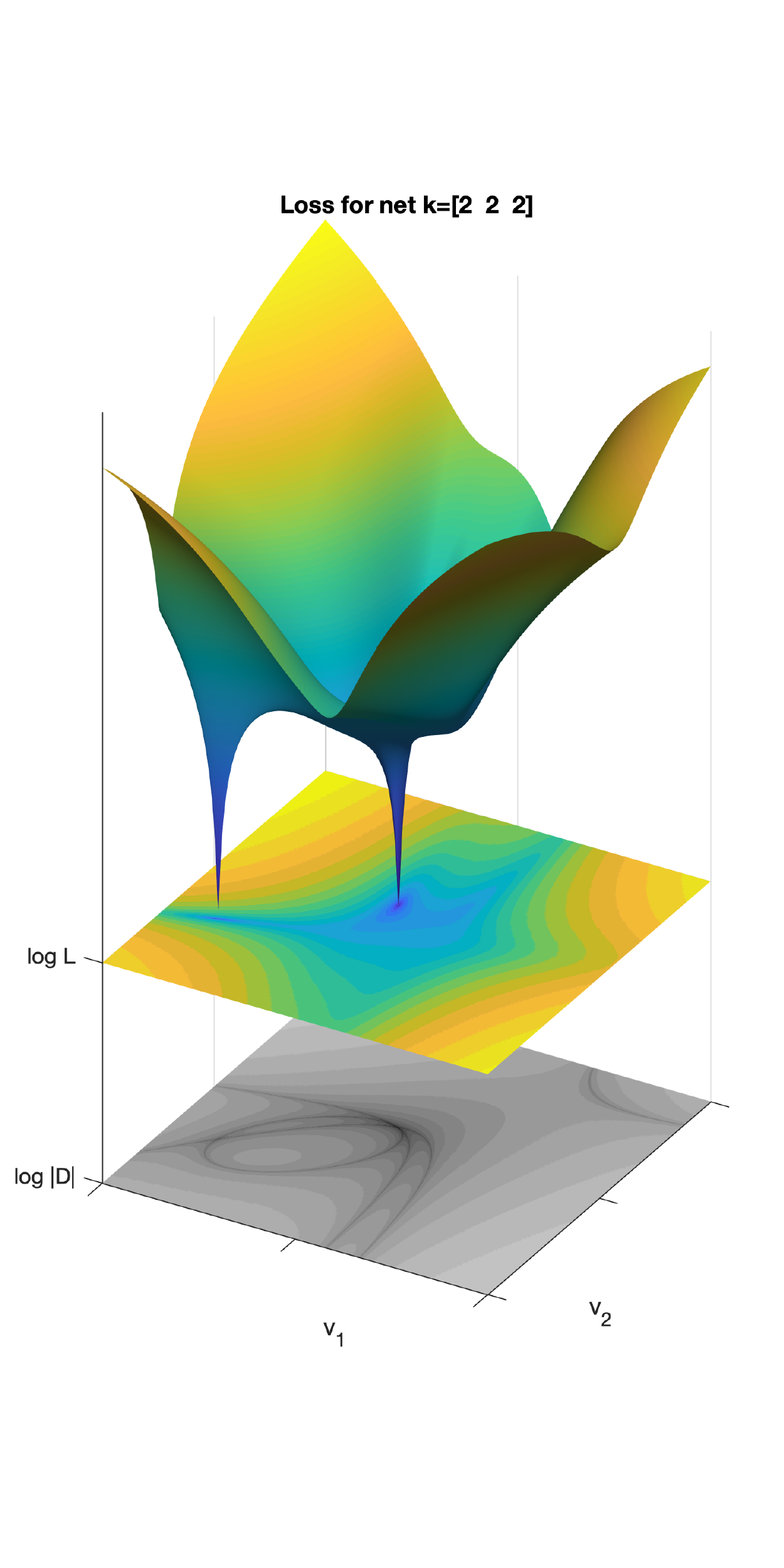}& 
\includegraphics[clip=true,trim=0cm 3.5cm 1cm 3.6cm,width=.2\textwidth]{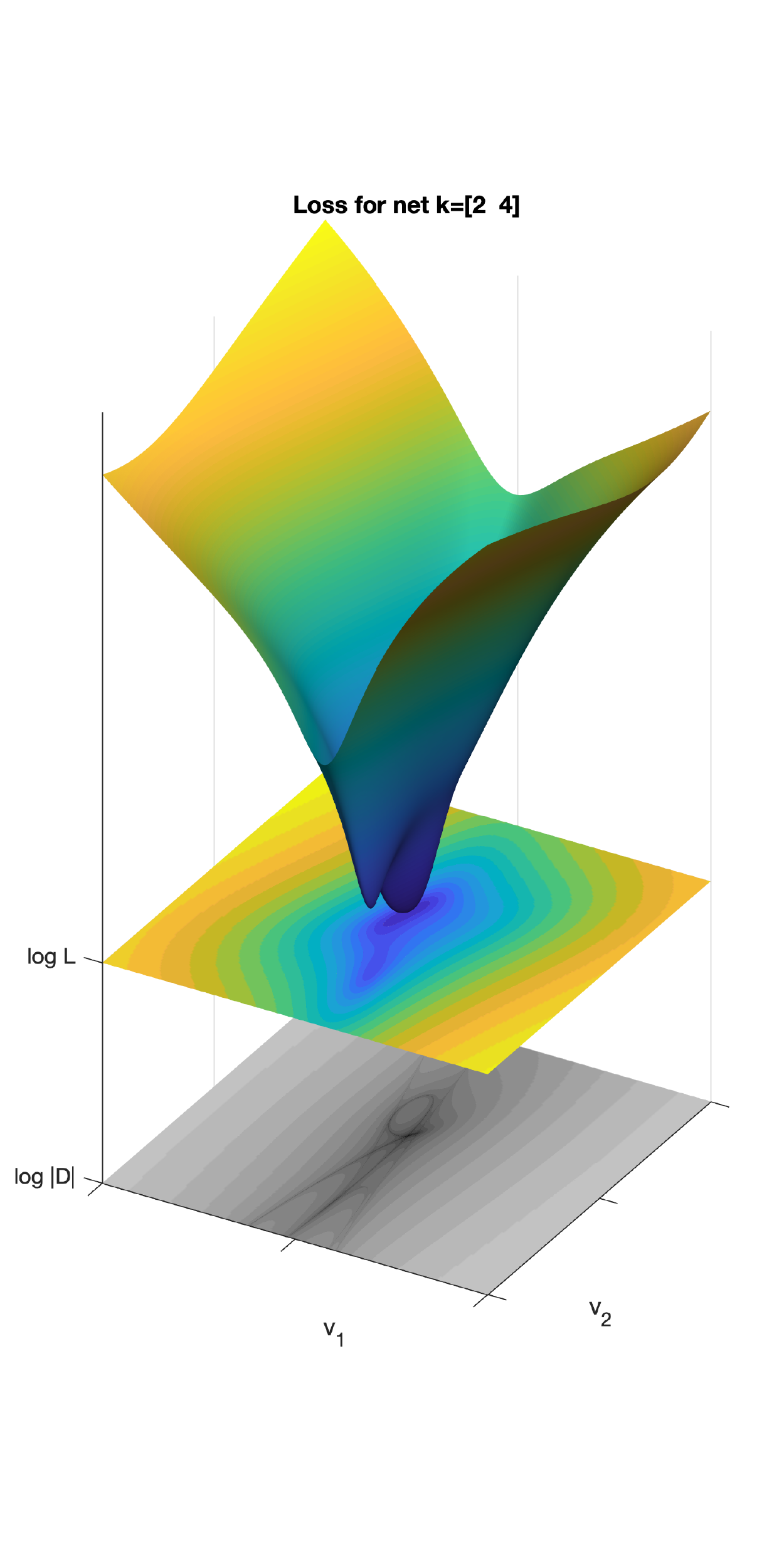}&
\includegraphics[clip=true,trim=0cm 3.5cm 1cm 3.6cm,width=.2\textwidth]{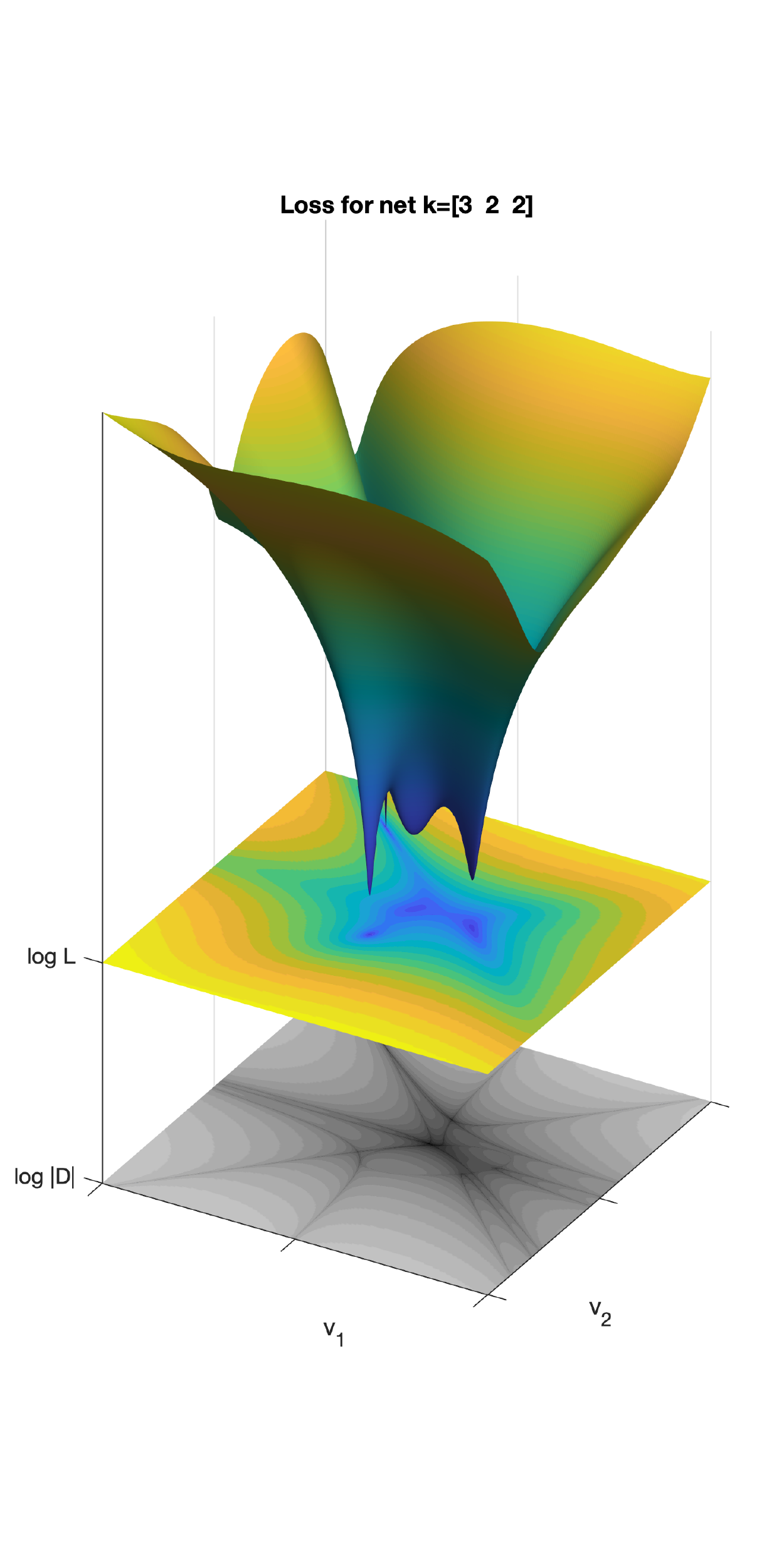}&
\includegraphics[clip=true,trim=0cm 3.5cm 1cm 3.6cm,width=.2\textwidth]{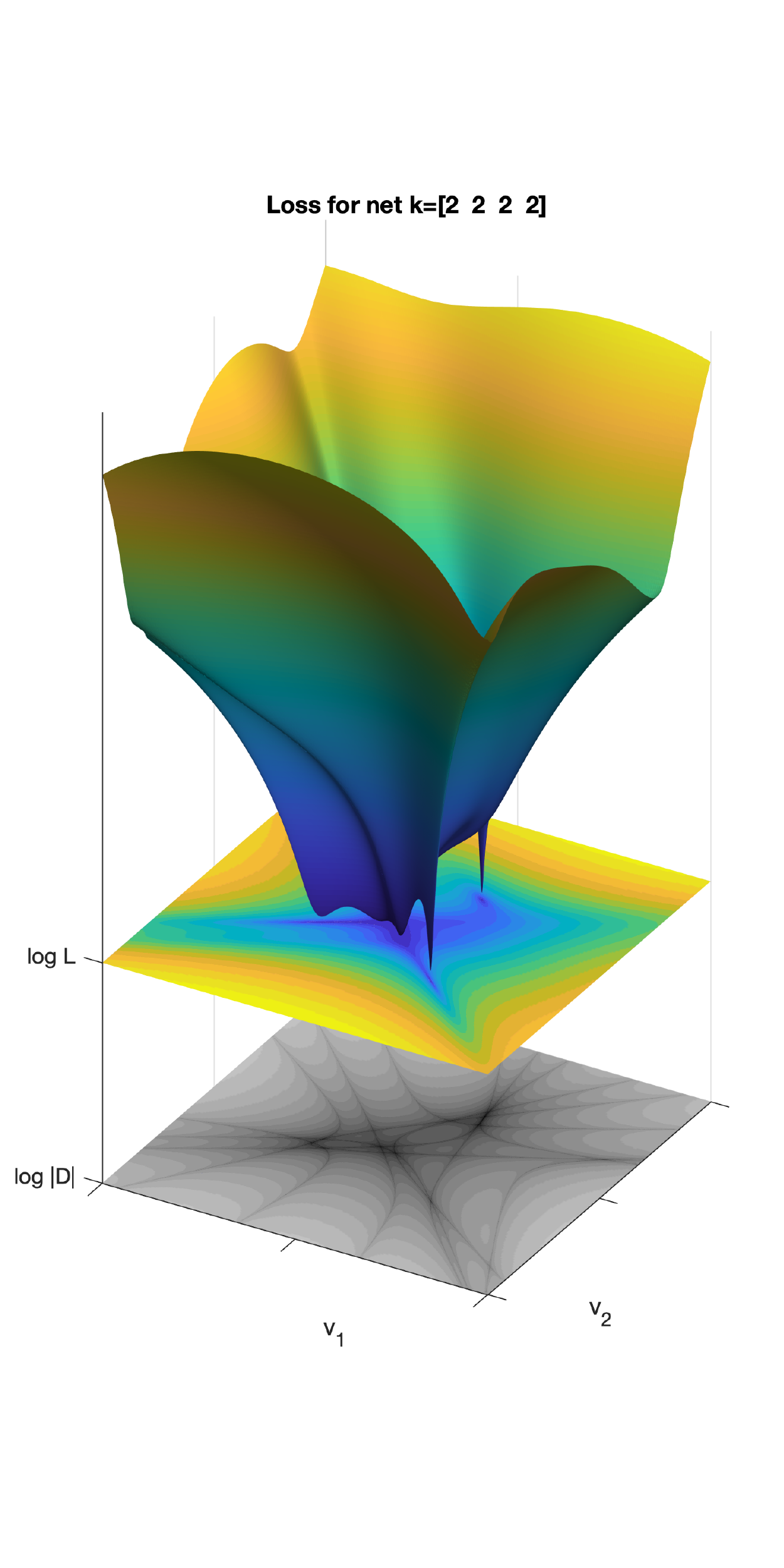}
\end{tabular}
\caption{The square loss $(w_1,\ldots, w_L) \mapsto \|U - \mu(w_1,\ldots, w_L)\|^2_{XX^\top}$, $U = YX^\top (XX^\top)^{-1}$, for random data $X,Y$, plotted in log-scale over random 2D affine subsets of parameter values. 
The bottom shows the discriminant at the corresponding filters, $(w_1,\ldots, w_L)\mapsto |\Delta(\pi(\mu(w_1,\ldots, w_L)))|$. 
As can be seen, deeper networks tend to have a more intricate loss surface and critical parameters often correspond to functions where the discriminant vanishes. 
}
\label{fig:landscapes}
\end{figure}

\subsection{Euclidean vs.\ Bombieri norm} 
In this experiment we generated 500 random targets $U$ for each architecture. 
For each of them, we ran gradient descent with 50 random initializations for the loss with Euclidean and  Bombieri norms. 
For each target we recorded the list of converged solutions, using the settings of the previous experiment. 
To determine the number of distinct solutions we considered two filters $w_i$ and $w_{i'}$ as equal if on each entry 
$|w_{ik}-w_{i'k}| \leq \text{\texttt{tol}}\cdot \max\{\max\{|w_{jk}|\colon j\},1\}$ with $\text{\texttt{tol}}=10^{-4}$. 
The results are shown in Table~\ref{table:compare_metric}. 
In line with our theoretical discussion in \Cref{sec:EDdegree} we observed fewer distinct solutions for the Bombieri norm than for the generic norm. 

\begin{table}
\caption{Percentages of targets for which gradient descent with different initializations converged to a given number of distinct solutions. 
For the Bombieri norm $B$ we usually observe a smaller number of distinct solutions compared with the generic Euclidean norm $I$. }
\label{table:compare_metric}
\centering 
\small 
\begin{tabular}{c c| S[table-format=3.1] S[table-format=2.2] S[table-format=2.2] S[table-format=1.3]}
            && \multicolumn{4}{c}{number of distinct solutions}\\
$\bm k$                 & metric & 1    & 2   & 3 & 4\\  
\hline 
$(2,2)$ & $I$    & 95.8 & 4.2 &   \\
                      & $B$    & 100  &     &   \\
\hline 
$(2,2,2)$ & $I$   & 84.7 & 14.5 & 0.84 \\
                        & $B$   & 95.6 & 1.1  & 3.29 \\
\hline 
$(4,2)$   & $I$   & 66.3 & 33.7 & \\
                        & $B$   & 73.8 & 26.2 & \\
\hline 
$(3,2,2)$ & $I$   & 22.5 & 58.8 & 17.7 & 1.01 \\
                        & $B$   & 32   & 48.3 & 19.5 & 0.209\\
\hline 
$(2,2,2,2)$ & $I$  & 70.7 & 26.5 & 2.61 & 0.217 \\
                          & $B$  & 84.8 & 8.21 & 6.97 &       \\
                          \hline 
\end{tabular}
\end{table}

\subsection{Symbolic computations}
We illustrate several of our results with a final example. We consider linearly generated data $\mathcal D_N = \{(x_i,y_i) \in \RR^5 \times \RR \colon i=1,\ldots,N, y_i = u^\top x_i\}$ where 
$u=[2,0,5,0,2]^\top \in \RR^5$
is a ``target filter'' and $x_i$ is sampled from a standard Gaussian distribution. We study the square loss regression problem for different 1D LCN architectures with compatible dimensions, as described in~\Cref{sec:squareloss}. In light of~\Cref{lem:euclidean_dist}, if the data is normalized so that the empirical covariance $XX^\top$ is the identity, or if $N \rightarrow \infty$, then this corresponds to an objective function $\mathcal L(\theta) = \ell(\mu(\theta)) =\|\mu(\theta) - u\|^2$, where $\overline w = \mu(\theta)$ is the end-to-end filter and $\|\cdot \|$ is the standard Euclidean norm. 

The possible LCN architectures with stride one, $d_0=5, d_L=1$ and {non-trivial filter sizes $1 < k_i < 5$ are} $\bm k = (4,2)$, $(3,3)$, $(3,2,2)$, $(2,2,2,2)$ (not considering permutations). By \Cref{thm:fillingCircular}, only the architecture $\bm k = (3,3)$ is filling. The target filter corresponds to the polynomial 
$\pi(u) = 2\x^4 + 5\x^2\y^2 + 2\y^4 = (\x^2+2\y^2)(2\x^2+\y^2)$, 
which does not have any real roots and is thus not contained in 
$\mathcal M_{(4,2)}$, $\mathcal M_{(3,2,2)}$, nor $\mathcal M_{(2,2,2,2)}$. 

Following our results in \Cref{sec:critPts}, for all architectures listed above, any critical point $\theta \in \Crit(\mathcal L)$ with $\mu(\theta) \ne u$ corresponds to polynomials with repeated roots. 
More precisely, if $\theta$ is a critical point for $\mathcal L$, then  $\mu(\theta)$ is a critical point for $\ell|_{\Delta_{\lambda}}$ where $\ell(\overline w) = \|\overline w - u\|^2$ and $\Delta_{\lambda}$ is the multiple root locus where $\lambda$ describes the {(complex)} root pattern of $\mu(\theta)$. Based on this, we can compute the critical points for $\ell|_{\Delta_{\lambda}}$ for any  $\lambda$ by solving algebraic systems.
Recall that the number of complex critical points was predicted by the ED degrees in \Cref{ex:EDdegreesGeneric}.

\begin{itemize}[leftmargin=*]
    \item $\lambda = (2,1,1)$: 
    out of the $10$ complex critical points on $\Delta_{\lambda}$, four are real and in fact rational, namely
    $[\frac{1}{5},\pm \frac{9}{5},\frac{16}{5},\pm \frac{9}{5},\frac{1}{5}]^\top {\equiv \frac{1}{5}(\x\pm \y)^2(\x^2 \pm 7\x\y +\y^2)}$, \,\, $ [0,0,5,0,2]^\top {\equiv \y^2(5\x^2+2\y^2)}$, and  $[2,0,5,0,0]^\top  \equiv \allowbreak \x^2(5\y^2+2\x^2)$.
    \item $\lambda = (3,1)$: 
    out of the $12$ complex critical points on $\Delta_{\lambda}$, four are real and none are rational. 
    \item $\lambda = (2,2)$: out of the $13$ complex critical points on $\Delta_{\lambda}$, five are real and three are rational:
    $[-1,0,2,0,-1]^\top \allowbreak  \equiv  \allowbreak  -(\x-\y)^2(\x+\y)^2$, 
    $[0,0,5,0,0]^\top \equiv  5 \x^2\y^2$, 
    $[\frac{7}{3},0,\frac{14}{3},0,\frac{7}{3}]^\top \equiv  \frac{7}{3} (\x^2+\y^2)^2$.
    \item $\lambda = (4)$: out of the $10$ complex critical points on $\Delta_{\lambda}$, four are real and also rational: $[0,0,0,0,2]^\top \equiv 2\y^4$, $[2,0,0,0,0]^\top \equiv  2\x^4$, $[\frac{17}{35},\pm \frac{68}{35},\frac{102}{35},\pm \frac{68}{35},\frac{17}{35}]^\top \equiv  \frac{17}{35}(\x\pm \y)^4$. 
\end{itemize}

For any choice of quartic architecture, all critical points are mapped to {either $u$ or} one of the $17$ real critical points listed above. 
We have verified numerically that gradient descent indeed always converges to one of these critical points. In fact, it always seems to converge to one of the rational critical points. For example, for $\bm k= (4,2)$, gradient descent converges to either
$[2,0,5,0,0]^\top$ or $[0,0,5,0,2]^\top$
which arise with $\lambda = (2,1,1)$.
Note that in this case we do not expect critical points with $\lambda = (3,1), (2,2), (4)$ due to~\Cref{ex:partitionVsArchitecture,cor:rrmp}.

Finally, we remark that the previous list only describes the critical points in function space but not the actual parameters. 
However, as discussed in~\Cref{sec:training-dynamics}, it is possible recover these parameters up to a finite ambiguity given the initialization of gradient descent. For example, assume that $\bm k = (4,2)$ and we initialize gradient descent at $\theta(0) = (w_1(0), w_2(0)) = ([1,6,11,6],[4,1])$, \ie, a factorization of the polynomial $(\x+\y)(\x+2\y)(\x+3\y) \cdot {(4\x+\y)}$.  
Then, following the approach outlined in \Cref{cor:invariants_system}, we can derive that if the gradient descent path $\theta(t)$ converges to $\mu(\theta)=[2,0,5,0,0]^\top$ in function space, then in parameter space $\theta(t)$ converges to one of $(w_1,w_2) = (1/\kappa[2,0,5,0]^\top, \kappa [1,0]^\top)$
with $\kappa = \pm \sqrt{\frac{1}{2} \left(\sqrt{31445} - {177}\right)} \approx \pm 0.4045867$. 
Numerically, we verify that gradient descent converges to these filters with $\kappa > 0$. 

\section{Conclusion}

We presented a semi-algebraic description of the families of functions that can be represented by linear convolutional networks. Using the fact that compositions of convolutions correspond to polynomial multiplication, we showed that the function space can be described as sets of polynomials that admit certain factorizations. We then investigated the optimization of an objective function over such sets and described the critical points in function space and in parameter space. Unlike fully-connected linear networks, convolutional linear networks can have non-global local optimizers in function space as well as spurious local optimizers in parameter space. In more detail, our analysis showed that all critical points that are not global minima correspond to polynomials with repeated roots, according to multiplicity patterns that depend on the network's architecture.
Possible extensions of this work might include a more detailed study of the function space for linear convolutions in higher dimensions or with larger strides (which we discussed briefly in Sections~\ref{sec:higher_dimension} and \ref{sec:largerStrides}) and, naturally, carrying out a similar analysis for \emph{non-linear} convolutional networks. In particular, one might first consider the special case of polynomial activation functions, since in this setting the function space is still a semi-algebraic set. 

\section*{Acknowledgment}
Kathl\'en Kohn was partially supported by the Knut and Alice Wallenberg Foundation within their WASP
(Wallenberg AI, Autonomous Systems and Software Program) AI/Math initiative. 
Guido Mont\'ufar and Thomas Merkh have been supported by the European Research Council (ERC) under the European Union’s Horizon 2020 research and innovation programme (grant n\textsuperscript{o}~757983). While at New York University, Matthew Trager was supported in part by Samsung Electronics.

\bibliographystyle{alpha}
\bibliography{M144118-literatur}

\newpage 

\appendix

\section{Gradients}
For convenience we give a description of the gradient for LCNs. 
\begin{proposition}
Consider a loss $\ell\colon \mathbb{R}^{d_L\times d_0}\to\mathbb{R}$ on the space of matrices. 
For an LCN parametrization $\mu\colon (w_1,\ldots, w_L)\mapsto \overline{W}$ as in \cref{eqn:Wbar}, denote the loss on the space of 
filters by $\L = \ell \circ \mu$. 
The gradient is 
\begin{align*} 
(\nabla_{w_l}\L(w_1,\ldots, w_L))_\alpha =& \sum_{\beta, \gamma} (\nabla_{w_l}W_l)_{\alpha,\beta,\gamma}  
(W_{l+1}^\top\cdots W_L^\top\nabla_{\overline{W}}\ell(\overline{W}) W_1^\top\cdots W_{l-1}^\top)_{\beta,\gamma}, 
\end{align*}
where $(\nabla_{w_l}W_l)_{\alpha,\beta,\gamma} = \partial_{w_{l,\alpha}} (W_l)_{\beta,\gamma}$. 
For the square loss $\ell(\overline{W}) = \|\overline{W}X - Y\|^2$ 
with data $X,Y$, $\nabla_{\overline{W}}\ell(\overline{W}) = \overline{W} XX^\top - YX^\top$. 
\end{proposition}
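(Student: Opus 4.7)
The plan is to derive the formula by a direct application of the chain rule, viewing $\overline{W}$ as a function of $w_l$ through the matrix $W_l$, and then specializing to the square loss.

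First I would write the end-to-end matrix as $\overline{W}=A\cdot W_l \cdot B$, where $A:=W_L\cdots W_{l+1}$ and $B:=W_{l-1}\cdots W_1$ are the parts of the product that do not depend on $w_l$. Entrywise, $\overline{W}_{ij}=\sum_{\beta,\gamma}A_{i\beta}(W_l)_{\beta\gamma}B_{\gamma j}$, so differentiating with respect to the filter entry $w_{l,\alpha}$ gives
\[
\partial_{w_{l,\alpha}}\overline{W}_{ij}=\sum_{\beta,\gamma}A_{i\beta}\,(\nabla_{w_l}W_l)_{\alpha,\beta,\gamma}\,B_{\gamma j}.
\]
Next I would apply the chain rule for $\L=\ell\circ\mu$ to obtain
\[
(\nabla_{w_l}\L)_\alpha=\sum_{i,j}(\nabla_{\overline{W}}\ell(\overline{W}))_{ij}\,\partial_{w_{l,\alpha}}\overline{W}_{ij}
=\sum_{\beta,\gamma}(\nabla_{w_l}W_l)_{\alpha,\beta,\gamma}\sum_{i,j}A_{i\beta}\,(\nabla_{\overline{W}}\ell)_{ij}\,B_{\gamma j}.
\]
Recognizing the inner double sum as a matrix triple product, one has $\sum_{i,j}A_{i\beta}(\nabla_{\overline{W}}\ell)_{ij}B_{\gamma j}=(A^\top\nabla_{\overline{W}}\ell\,B^\top)_{\beta\gamma}$, and substituting $A^\top=W_{l+1}^\top\cdots W_L^\top$ and $B^\top=W_1^\top\cdots W_{l-1}^\top$ yields exactly the stated formula (with the natural conventions that the empty products for $l=1$ or $l=L$ are identity matrices).

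For the square loss I would expand $\ell(\overline{W})=\mathrm{tr}\!\left((\overline{W}X-Y)(\overline{W}X-Y)^\top\right)$, differentiate using the standard matrix calculus identities $\nabla_{\overline{W}}\mathrm{tr}(\overline{W}XX^\top\overline{W}^\top)=2\overline{W}XX^\top$ and $\nabla_{\overline{W}}\mathrm{tr}(\overline{W}XY^\top)=YX^\top$, and absorb the factor of $2$ into the overall scaling of $\ell$ (this matches the expression given). There is essentially no obstacle here: the content is the chain rule and a careful bookkeeping of which convolutional matrices sit to the left and right of $W_l$. The only mild subtlety is ensuring that the transposes appear in the correct reversed order when passing from $A$ and $B$ to $A^\top$ and $B^\top$, which is exactly what makes the factors $W_{l+1}^\top\cdots W_L^\top$ and $W_1^\top\cdots W_{l-1}^\top$ appear in the stated sequence.
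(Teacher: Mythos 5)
Your proof is correct, and the paper itself states this proposition without proof, treating the chain-rule computation and the index bookkeeping for $A^\top$, $B^\top$ as routine; your derivation is exactly the standard argument one would supply. Your remark about the factor of $2$ in the square-loss gradient is also apt — the stated expression $\overline{W}XX^\top - YX^\top$ indeed omits the factor $2$ that a literal differentiation of $\|\overline{W}X-Y\|^2$ produces (compare the factor of $2$ appearing explicitly in the gradient computation of Example~\ref{ex:tinyexample}), so it should be read up to that harmless rescaling.
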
 

The previous result describes the gradient of the loss $\L$ in terms of convolutional matrices $W_l$, however the expression does not depend on the actual matrices but only on the filters involved. We can make this fact explicit in the case of stride $1$ as follows. 

\begin{proposition}
Let $\L = \ell \circ \mu$ be a loss function for an LCN with stride $1$ and let $\theta = (w_1,\ldots,w_L)$ denote the filters. For any vector $w \in \RR^k$, let $\pi_{-1}(w):=w_0 \x^{-k+1} + w_1  \x^{-k+2} +\cdots + w_{k-2} \x^{-1} + w_{k-1}$ be the Laurent polynomial (polynomial with negative exponents) whose coefficients are the elements of $w$. For any $i=1,\ldots,L$, the gradient $\nabla_{w_i} \L(\theta) \in \RR^{k_i}$ is given by the coefficients of degree $(0,\ldots,k_i-1)$ in
\[ \tilde
\pi(\nabla \ell) \cdot \pi_{-1}(w_1) \cdots \pi_{-1}(w_{i-1}) \cdot \pi_{-1}(w_{i+1}) \cdots \pi_{-1}(w_L) \in \RR[\x,\x^{-1}],
\]
where $\tilde \pi(\nabla \ell)$ is the univariate polynomial representation of the gradient $\nabla \ell(\mu(\theta))$ obtained from \eqref{eq:polynomial_identification} by dehomogenizing  $\y=1$.
\end{proposition}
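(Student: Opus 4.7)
The plan is to apply the chain rule to $\mathcal L = \ell \circ \mu$ and then translate everything into polynomial language via the dehomogenized map $\tilde\pi$. Since $\tilde\pi(\mu(\theta)) = \tilde\pi(w_1)\cdots \tilde\pi(w_L)$ by \Cref{prop:deep_poly_multiplication}, the differential $d\mu/dw_i$ (viewed on polynomials) is multiplication by $\tilde p(x) := \prod_{l\ne i} \tilde\pi(w_l)$ precomposed with $\dot w_i \mapsto \tilde\pi(\dot w_i) = \sum_j \dot w_{i,j} x^{k_i-1-j}$. Under the chain rule the gradient is the adjoint of this linear map applied to $\nabla\ell(\overline w)$, so the entire content of the proposition reduces to identifying the adjoint of polynomial multiplication in the standard monomial inner product.

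First I would verify the elementary identity that, for $r(x) = \sum_c r_c x^c$ and $p(x) = \sum_a p_a x^a$ (both polynomials in $x$), the adjoint of the map $q \mapsto p\cdot q$ with respect to the standard coefficient inner product is given by multiplication by $p(x^{-1})$ in the Laurent ring, followed by extracting the appropriate coefficients. Concretely, the coefficient of $x^e$ in $r(x)\,p(x^{-1})$ equals $\sum_a r_{a+e} p_a$, which is exactly the matrix entry needed. Next I would observe the key notational match $\tilde\pi(w_l)(x^{-1}) = \pi_{-1}(w_l)(x)$, which holds by substituting $x \mapsto x^{-1}$ in $\sum_m w_{l,m} x^{k_l-1-m}$. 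Hence $\tilde p(x^{-1}) = \prod_{l\ne i}\pi_{-1}(w_l)$, and the adjoint of $d\mu/dw_i$ applied to $\nabla\ell(\overline w)$ is given (at the level of Laurent polynomials) by $\tilde\pi(\nabla\ell) \cdot \prod_{l\ne i}\pi_{-1}(w_l)$.

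Finally, I would reconcile the indexing. Direct differentiation of $\tilde\pi(\overline w) = \prod_l \tilde\pi(w_l)$ yields $\partial \tilde\pi(\overline w)/\partial w_{i,j} = \tilde p(x)\cdot x^{k_i-1-j}$. Pairing with $\nabla\ell(\overline w)$ via the coefficient inner product gives
\[
\frac{\partial \mathcal L}{\partial w_{i,j}} \;=\; \sum_a [\tilde\pi(\nabla\ell)]_{a+k_i-1-j}\,[\tilde p]_a \;=\; [\,\tilde\pi(\nabla\ell)\cdot \tilde p(x^{-1})\,]_{k_i-1-j},
\]
which is the coefficient of $x^{k_i-1-j}$ in $\tilde\pi(\nabla\ell)\cdot\prod_{l\ne i}\pi_{-1}(w_l)$. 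As $j$ ranges over $0,\ldots,k_i-1$ the exponent $k_i-1-j$ ranges over $0,\ldots,k_i-1$, so packaging the partial derivatives back into a vector via the same convention used by $\tilde\pi$ recovers exactly the coefficients of degree $0,\ldots, k_i-1$ of the stated Laurent product.

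The main obstacle I expect is bookkeeping of indexing conventions: the map $\tilde\pi$ reverses the ordering between vector entries and monomial degrees, the passage from convolution to correlation flips signs of exponents, and the Laurent polynomial $\tilde\pi(\nabla\ell)\cdot \prod_{l\ne i}\pi_{-1}(w_l)$ has a range of nonzero degrees that extends well beyond $[0,k_i-1]$ — so care is needed to justify that only this window is selected. Once the adjoint-of-multiplication identity is stated cleanly and the convention $\pi_{-1}(w)(x)=\tilde\pi(w)(x^{-1})$ is recorded, the remaining argument is a direct coefficient comparison.
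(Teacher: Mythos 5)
Your proposal is correct and follows essentially the same route as the paper's proof: both identify the adjoint of multiplication by $\tilde\pi(w_l)$ (under the coefficient-wise inner product) with multiplication by the Laurent polynomial $\pi_{-1}(w_l)$, and then apply the chain rule using the multilinearity of $\mu$. Your version merely makes the coefficient bookkeeping explicit where the paper states the adjoint identity abstractly; the content is the same.
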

\begin{proof} The Laurent polynomial $\pi_{-1}(w)$ is a representation of the \emph{adjoint filter} $w^* := (w_{k-1},\ldots,w_0)$. Convolution with the adjoint filter $w^*$ is the adjoint operator of the convolution with $w$, for the standard Euclidean inner product. In terms of polynomials, this means that
\[
\langle \tilde \pi(v), \tilde\pi(w_2) \cdot \tilde\pi(w_1) \rangle = \langle \tilde\pi(v) \cdot \pi_{-1}(w_2), \tilde\pi(w_1)\rangle,
\]
for any vectors $v, w_1, w_2$, where $\langle \cdot, \cdot \rangle$ is the coefficient-wise inner-product and $\cdot$ is product operation for polynomials (missing coefficients are treated as zero so we need not constrain the sizes of $v, w_1, w_2$). The claim now follows from the fact that $\nabla_{w_i} \L$ is such that
\[
\begin{aligned}
\langle \tilde\pi(\nabla_{w_i} \L), \tilde\pi(\dot w_i) \rangle &= \langle \tilde \pi(\nabla \ell), \tilde \pi(w_1) \cdots \tilde \pi(\dot w_i) \cdots \tilde \pi(w_L) \rangle\\ 
&=  \langle \tilde \pi(\nabla \ell) \cdot \pi_{-1}(w_1) \cdots \pi_{-1}(w_{i-1}) \cdot \pi_{-1}(w_{i+1}) \cdots \pi_{-1}(w_L),\tilde \pi(\dot w_i)\rangle. 
\end{aligned}
\]
\end{proof}

\end{document}